\documentclass[11pt]{article}

\usepackage[margin=1.02in]{geometry}
\usepackage{amsmath,amssymb,amsfonts,amsthm,mathtools}
\usepackage{algorithm}
\usepackage[noend]{algpseudocode}
\usepackage{booktabs}
\usepackage{enumitem}
\usepackage{microtype}
\usepackage[authoryear,round,sort]{natbib}
\usepackage[T1]{fontenc}
\usepackage{lmodern}
\usepackage{xcolor}
\usepackage[colorlinks=true,linkcolor=blue!55!black,citecolor=blue!55!black,urlcolor=blue!55!black]{hyperref}
\usepackage{aliascnt}
\usepackage[nameinlink,capitalize]{cleveref}

\allowdisplaybreaks
\newtheorem{theorem}{Theorem}[section]
\newaliascnt{lemma}{theorem}
\newtheorem{lemma}[lemma]{Lemma}
\aliascntresetthe{lemma}
\newaliascnt{proposition}{theorem}
\newtheorem{proposition}[proposition]{Proposition}
\aliascntresetthe{proposition}
\newaliascnt{corollary}{theorem}
\newtheorem{corollary}[corollary]{Corollary}
\aliascntresetthe{corollary}
\newaliascnt{assumption}{theorem}

\aliascntresetthe{assumption}
\theoremstyle{definition}
\newaliascnt{definition}{theorem}
\newtheorem{definition}[definition]{Definition}
\aliascntresetthe{definition}
\newtheorem{question}{Question}
\newaliascnt{example}{theorem}
\newtheorem{example}[example]{Example}
\aliascntresetthe{example}
\theoremstyle{remark}
\newaliascnt{remark}{theorem}
\newtheorem{remark}[remark]{Remark}
\aliascntresetthe{remark}
\newaliascnt{openproblem}{theorem}

\aliascntresetthe{openproblem}

\crefname{theorem}{theorem}{theorems}
\Crefname{theorem}{Theorem}{Theorems}
\crefname{lemma}{lemma}{lemmas}
\Crefname{lemma}{Lemma}{Lemmas}
\crefname{proposition}{proposition}{propositions}
\Crefname{proposition}{Proposition}{Propositions}
\crefname{corollary}{corollary}{corollaries}
\Crefname{corollary}{Corollary}{Corollaries}
\crefname{assumption}{assumption}{assumptions}
\Crefname{assumption}{Assumption}{Assumptions}
\crefname{definition}{definition}{definitions}
\Crefname{definition}{Definition}{Definitions}
\crefname{question}{question}{questions}
\Crefname{question}{Question}{Questions}
\crefname{example}{example}{examples}
\Crefname{example}{Example}{Examples}
\crefname{remark}{remark}{remarks}
\Crefname{remark}{Remark}{Remarks}
\crefname{openproblem}{open problem}{open problems}
\Crefname{openproblem}{Open Problem}{Open Problems}

\newcommand{\E}{\mathbb E}

\newcommand{\R}{\mathbb R}
\newcommand{\1}{\mathbf 1}
\newcommand{\cA}{\mathcal A}
\newcommand{\cD}{\mathcal D}

\newcommand{\cG}{\mathcal G}
\newcommand{\cH}{\mathcal H}
\newcommand{\cI}{\mathcal I}
\newcommand{\cL}{\mathcal L}
\newcommand{\cM}{\mathcal M}
\newcommand{\cP}{\mathcal P}
\newcommand{\cR}{\mathcal R}
\newcommand{\cS}{\mathcal S}
\newcommand{\cV}{\mathcal V}
\newcommand{\cX}{\mathcal X}
\newcommand{\Ber}{\operatorname{Ber}}
\newcommand{\gam}{\gamma}

\newcommand{\TV}{\operatorname{TV}}

\newcommand{\Bias}{\operatorname{Bias}}
\newcommand{\Mass}{\operatorname{Mass}}
\newcommand{\SwapReg}{\operatorname{SwapReg}}
\newcommand{\SwapAgn}{\operatorname{SwapAgn}}
\newcommand{\Th}{\operatorname{Th}}

\newcommand{\argmin}{\operatorname*{arg\,min}}
\newcommand{\argmax}{\operatorname*{arg\,max}}

\newcommand{\abs}[1]{\left|#1\right|}

\newcommand{\eps}{\varepsilon}

\title{Fast Rates for Swap-Agnostic Learning of Proper Losses}
\author{Princewill Okoroafor\\
Harvard University\\
\texttt{princewill\_okoroafor@seas.harvard.edu}}
\date{}

\begin{document}
\maketitle

\begin{abstract}
Swap-agnostic learning strengthens classical agnostic learning by allowing the
comparator to use a different hypothesis on each prediction level set.  This
benchmark captures prediction-dependent postprocessing.  However, allowing the
comparator to depend on the prediction value seems to require solving a
separate agnostic-learning problem at each possible prediction value.  We show
that, for proper losses, these prediction-level comparisons can be controlled
jointly.

Our main result is an offline swap-agnostic learner for any fixed proper loss.
For a finite hypothesis class $\cH$, the excess risk for a fixed smooth proper
loss scales as
$\widetilde O((\log(|\cH|)/m)^{2/3})$ from $m$ i.i.d.\ samples, with the
corresponding online swap regret scaling as
$\widetilde O(T^{1/3}(\log|\cH|)^{2/3})$.

We also give algorithms whose predictions are swap-agnostic simultaneously for
entire families of losses.  For all proper losses bounded in $[-1,1]$, the
rates are $\widetilde O(\sqrt{T\log|\cH|})$ online and
$\widetilde O(\sqrt{\log|\cH|/m})$ offline.  For convex $1$-Lipschitz proper
losses, the rates improve to
$\widetilde O(T^{1/3}(\log|\cH|)^{2/3})$ online and
$\widetilde O((\log|\cH|/m)^{2/3})$ offline.  These rates are tight up to
logarithmic factors and improve on the
$\widetilde O(T^{2/3}(\log|\cH|)^{1/3})$ rates implied by the
swap-omniprediction guarantee of
\citet{luo2025improved}.  Our main technical approach reduces swap-agnostic
learning to a second-order form of multicalibration, which we obtain using
Blackwell approachability with a Bernstein-style variance correction.
\end{abstract}

\tableofcontents

\section{Introduction}

Classical agnostic learning asks a learner to compete with the best fixed
hypothesis in a reference class $\cH$, without assuming realizability
\citep{vapnik1998statistical,shalev2014understanding}.  In the offline
setting, the learner receives i.i.d.\ samples from an unknown distribution
$\cD$ over $(X,Y)$ and outputs a predictor; for a loss $\ell(p,y)$, the
benchmark asks whether the predictor performs nearly as well as the best
single hypothesis $h\in\cH$.

In many applications, however, the prediction is not the final decision.  A
probability forecast may be thresholded, discretized, or passed to a
cost-sensitive decision rule, and changing the downstream costs changes the
optimal threshold and therefore the decision rule that gets applied.  A
clinical risk score triggers different follow-up protocols at low, medium,
and high risk; a credit score is routed to different underwriting policies
across score bands; an ad-ranking score is postprocessed differently for
cold-start items than for high-confidence ones
\citep{grundy2019cholesterol,cfpb2024riskpricing,ardywibowo2025bayescns}.  In
each case the predicted probability is an input to a later decision rather
than the action itself -- the standard role of probabilistic predictions in
cost-sensitive classification and surrogate-loss theory
\citep{elkan2001foundations,bartlett2006convex}, and central to the
decision-theoretic interpretation of proper scoring rules
\citep{dawid1982well,gneiting2007strictly}.

Swap-agnostic learning strengthens the comparator to reflect exactly this
freedom \citep{gopalan2023swap}.  Rather than asking whether one fixed
hypothesis beats the predictor globally, it asks whether the predictor can
be improved after first looking at its own predicted value.  If predictions
lie on the grid $\Gamma_N=\{\gam_i=i/N:0\le i\le N\}$, a hypothesis swap is a
map $\phi:\Gamma_N\to\cH$, and when the learner predicts $p$ the comparator
answers with $\phi(p)$ instead of one fixed hypothesis -- so a score near
$0.1$ and a score near $0.9$ may be judged against entirely different
hypotheses.  For a randomized offline predictor $\hat p$, this gives the
excess-risk benchmark
\[
  \SwapAgn_\cD(\ell,\cH)
  =\sup_{\phi:\Gamma_N\to\cH}
  \E\!\left[\ell(p,Y)-\ell(\phi(p)(X),Y)\right],
\]
with $(X,Y)\sim\cD$ and $p\sim\hat p(X)$, and online it becomes swap regret,
\[
  \SwapReg_T(\ell,\cH)
  =\sup_{\phi:\Gamma_N\to\cH}
  \sum_{t=1}^T
  \left[\ell(p_t,y_t)-\ell(\phi(p_t)(x_t),y_t)\right].
\]
Ordinary agnostic learning is recovered when the same hypothesis is used at
every prediction level, so swap-agnostic learning is never easier than the
ordinary benchmark -- the question is how much harder it really is.

A natural first approach already suggests a worrying answer.  A standard
multicalibration-based algorithm can be read as solving a separate
agnostic-learning problem inside each of the $N+1$ prediction buckets; if
bucket $i$ collects $n_i$ examples, summing the usual $\sqrt{n_i}$-type rates
over all of them produces an apparent $\sqrt N$ overhead, and the known
route to swap-agnostic learning through multicalibration does indeed yield
$\widetilde O(T^{2/3}(\log|\cH|)^{1/3})$ swap regret
\citep{hebertjohnson2018multicalibration,gopalan2023swap,luo2025improved}.
This bucket-by-bucket analysis does not tell us whether the extra cost is
real, or just an artifact of treating every bucket as its own independent
statistical problem.

\begin{question}[Single-loss swap-agnostic learning]
Can offline swap-agnostic learning for a fixed proper loss avoid paying a
separate statistical price for every prediction bucket?
\end{question}

A second question arises once the downstream objective is allowed to move.
Omniprediction asks for one predictor that supports near-optimal decisions
for many losses at once \citep{gopalan2022omnipredictors}; in the swap
setting, swap omniprediction asks the same predictor to be swap-agnostic for
every loss in a family,\footnote{Swap omniprediction lets the comparator
choose a different loss, together with a different hypothesis, at every
prediction level, so the loss itself is swapped along with the hypothesis.
Question~2 instead fixes one loss for the entire run and asks a single
predictor to satisfy the ordinary swap-agnostic guarantee for every such
loss, one at a time; the comparator never swaps between losses within a
run.  \Cref{sec:intro-related-work} makes this ordering of benchmarks
precise.} and the existing route to that guarantee runs through
multicalibration as well \citep{gopalan2023swap}, inheriting the same
quantitative overhead.

\begin{question}[Simultaneous swap-agnostic learning for loss families]
Can one offline predictor be swap-agnostic simultaneously for broad families
of proper losses, such as bounded proper losses or convex $1$-Lipschitz
proper losses?
\end{question}

We answer both questions affirmatively, and the key to both is the same:
stop treating prediction buckets as independent statistical problems, and
instead measure directly how favorable each bucket already is to the
predictor before paying anything to correct it.  A single algorithmic
guarantee -- a second-order strengthening of multicalibration developed in
\Cref{sec:somc} -- turns this favorable measurement into a bound that
absorbs the bucket cost instead of paying it once per bucket.  The rest of
this section describes what this buys us and how it works.

\subsection{Contributions}

\paragraph{A single proper loss.}
For any proper loss $\ell$ with $L$-Lipschitz partial losses and finite
hypothesis class $\cH$, one algorithm achieves
\[
  \SwapReg_T(\ell,\cH)=\widetilde O\!\big(LT^{1/3}(\log|\cH|)^{2/3}\big)
  \ \text{online,}
  \qquad
  \SwapAgn_\cD(\ell,\cH)=\widetilde O\!\Big(L\big(\tfrac{\log|\cH|}m\big)^{2/3}\Big)
  \ \text{offline,}
\]
with no convexity assumption needed (\Cref{thm:single-lipschitz}).  Both
exponents are tight, matching an $\Omega(T^{1/3})$ lower bound for the
half-Brier loss against a fixed three-hypothesis class
(\Cref{thm:brier-lb} and \Cref{cor:optimal-smooth-time}).

\paragraph{Families of losses.}
The same guarantee extends at only a logarithmic cost.  A finite family
$\mathcal L$ sharing a Lipschitz constant costs an extra $\log|\mathcal L|$
(\Cref{thm:finite-lipschitz-family}).  Every bounded proper loss decomposes
into a positive mixture of V-shaped basis losses
\citep{kleinberg2023ucalibration,li2022optimization}, but these are
nonsmooth, so controlling them needs a localized version of the algorithm
together with a new argument that extends the guarantee from finitely many
thresholds to every threshold at once (\Cref{sec:bounded}).  This gives
\emph{every} proper loss bounded in $[-1,1]$ at once the coarser rate
\[
  \SwapReg_T(\ell,\cH)=\widetilde O\!\big(\sqrt{T\log|\cH|}\big)
  \ \text{online,}
  \qquad
  \SwapAgn_\cD(\ell,\cH)=\widetilde O\!\big(\sqrt{\log|\cH|/m}\big)
  \ \text{offline,}
\]
uniformly over $\ell$ (\Cref{thm:bounded-proper-rates}), matching known
lower bounds (\Cref{lem:v-online-lb,lem:v-offline-lb}).  This extends the
payoff-bounded downstream-task guarantee of \citet{hu2024payoffbounded} to
hypothesis-valued comparators.

\paragraph{Convex Lipschitz losses.}
We prove a new positive decomposition of every convex $1$-Lipschitz proper
loss into clipped-ReLU basis losses (\Cref{prop:clipped-decomposition-front}),
which recovers the sharper single-loss exponents for the entire family at
once,
\[
  \SwapReg_T(\ell,\cH)=\widetilde O\!\big(T^{1/3}(\log|\cH|)^{2/3}\big)
  \ \text{online,}
  \qquad
  \SwapAgn_\cD(\ell,\cH)=\widetilde O\!\Big(\big(\tfrac{\log|\cH|}m\big)^{2/3}\Big)
  \ \text{offline,}
\]
uniformly over $\ell$ (\Cref{thm:offline-convex-lipschitz}).

\paragraph{Beyond proper losses.}
Canonical properization (\Cref{lem:canonical-properization}) transfers every
rate above, with no loss in the exponent, to an arbitrary bounded binary
loss evaluated after best-response postprocessing: the relevant comparator
complexity becomes the loss's own slope class rather than $\cH$ directly
(\Cref{sec:improper}).

\paragraph{The underlying engine.}
All of this follows from one online guarantee, second-order
multicalibration.  For a transcript $(x_t,p_t,y_t)$ and a test
$g:\cX\times\Gamma_N\to[-1,1]$, write
\[
  \Bias_T(g)=\sum_{t=1}^T(y_t-p_t)g(x_t,p_t),
  \qquad
  \Mass_T(g)=\sum_{t=1}^Tg(x_t,p_t)^2.
\]
Our algorithm guarantees, simultaneously for every test in a fixed finite
class and with high probability,
\[
  \Bias_T(g)\le c\,\Mass_T(g)+a,
\]
for a learning-rate-dependent constant $c$ and an additive term $a$ scaling
with the log-size of the test class (\Cref{def:somc}, achieved with high
probability by \Cref{thm:online-somc}).  An online-to-batch conversion
(\Cref{thm:online-to-batch}) transfers this bound offline; every rate above
comes from tuning $c$ to the loss or family at hand, exactly as the
technical overview below explains.

\subsection{Technical overview}

Fix one proper loss $\ell$.  When a prediction $p$ is compared against an
alternative prediction $q$, the proper-loss identity of
\Cref{lem:proper-identity} splits the comparison into two pieces:
\begin{equation}
  \ell(p,y)-\ell(q,y)
  =\underbrace{(y-p)(\Delta_\ell(p)-\Delta_\ell(q))}_{\text{bias term}}
   -\underbrace{[L_p(q)-L_p(p)]}_{\text{drift term}\ \ge0}.
  \label{eq:intro-identity}
\end{equation}
The bias term is proportional to the calibration residual $y-p$; it is what
ordinary multicalibration controls, and controlling it directly bucket by
bucket is exactly the expense identified above.  The drift term is always
nonnegative -- moving the prediction away from a bucket's best response is
only ever penalized, never rewarded, once the truth is revealed -- and,
crucially, it grows with precisely the quantity that makes the bias term
large in the first place, $\Delta_\ell(p)-\Delta_\ell(q)$.  A
quadratic-growth inequality (\Cref{lem:quadratic-growth}) makes this
precise: $(\Delta_\ell(p)-\Delta_\ell(q))^2\lesssim K\cdot[L_p(q)-L_p(p)]$
for a constant $K$ governed by the loss's Lipschitz or smoothness constant.
So the same quantity that makes a bucket's comparison risky also makes that
bucket's own drift large: the drift is collateral the loss's geometry
already offers, and the only remaining question is how to collect on it.

This is exactly what second-order multicalibration is for.  The online
algorithm's guarantee bounds the bias term, $\Bias_T(g)$, by a constant
multiple of $\Mass_T(g)$ plus an additive term, where the constant is set by
the learning rate.  Tuning the learning rate so that this constant matches
$K$ exactly (\Cref{thm:single-lipschitz}) makes the quadratic bias and drift
terms cancel identically, leaving only the additive term, which scales with
$\log|\cH|$ rather than with the number of prediction buckets.  Balancing
the grid resolution $N$ against this cancellation -- a finer grid shrinks
discretization error but creates more tests to control -- then gives the
$T^{1/3}$ online and $m^{-2/3}$ offline exponents.  This is precisely where
the improvement over the bucket-wise route occurs: that route pays an
ordinary agnostic-learning regret independently inside each bucket, so
refining the grid also multiplies the number of independently-paid-for
buckets, whereas here the offset lets the per-bucket cost enter only
additively, through $\log|\cH|$ -- which is why the algorithm can afford a
finer grid before discretization error dominates, landing at $T^{1/3}$
instead of $T^{2/3}$.

The reduction is modular in a way that pays off immediately for families of
losses: once a loss has told the algorithm which selector tests to control,
the algorithm itself does not care which loss those tests came from.  A
finite family $\mathcal L$ is therefore handled by simply pooling every
loss's tests into one larger test class, and since the guarantee depends
only logarithmically on test-class size, pooling in $|\mathcal L|$ losses'
worth of tests adds only $\log|\mathcal L|$ to the bound rather than
multiplying it.  Infinite families need the further idea already mentioned
above -- writing a complicated loss as a positive mixture of simpler basis
losses, with prediction-independent affine terms canceling against every
comparator -- and for bounded proper losses the relevant basis is the
V-shaped threshold decomposition of
\citet{kleinberg2023ucalibration,li2022optimization}
(\Cref{lem:proper-measure-representation}).  These basis losses are
nonsmooth -- their slope jumps discontinuously at the threshold -- so the
single-loss cancellation argument above (\Cref{lem:quadratic-growth}) does
not apply to them directly.  Instead we run a \emph{localized} version of
the algorithm, with one test per prediction bucket that is active only on
the rounds where the prediction equals that bucket's grid value, since the
favorable drift near a V-shaped threshold depends on the distance from the
current prediction to the threshold (\Cref{lem:v-drift}), which differs
bucket by bucket rather than being governed by one global constant.
Second-order control on every bucket's test at once, followed by a
harmonic-sum argument in which buckets near the threshold contribute little
while distant buckets contribute favorably (\Cref{lem:harmonic-drift}),
recombines these per-bucket pieces into the square-root rate for finitely
many thresholds (\Cref{thm:finite-v-family}); a finite-range assumption on
the comparator class then extends this exactly to every threshold at once
(\Cref{cor:v-family-full-range}), giving the square-root rate for the whole
bounded-proper-loss family (\Cref{thm:bounded-proper-rates}).  For convex
$1$-Lipschitz proper losses, a different positive decomposition into
clipped-ReLU losses (\Cref{prop:clipped-decomposition-front}) is rich enough
to represent the whole family while still admitting a finite cover with
useful metric entropy, and combining that cover with the offline conversion
recovers the sharper single-loss exponents for the entire family
(\Cref{thm:offline-convex-lipschitz}).  Canonical properization
(\Cref{lem:canonical-properization}) then carries every one of these
guarantees past proper losses, since best-response postprocessing of an
arbitrary bounded loss reduces exactly to swap-agnostic learning of one
proper loss with redundant messages.

It remains to produce the online guarantee itself and to carry it from the
online setting to the offline one.  The algorithm is a multiplicative-weights
implementation of Blackwell approachability \citep{blackwell1956analog}:
weights on tests summarize the direction in which the transcript so far most
needs correcting, and the next prediction is chosen to make progress against
that direction.  Two adjustments turn this into the second-order guarantee
we need.  Predictions must live on a finite grid, so the algorithm
randomizes over two adjacent grid points, which behaves like the desired
real-valued forecast and costs only the usual grid-rounding term.  And the
potential itself is adjusted with a Bernstein correction that tracks not
just cumulative residual error but also the squared size of the active
tests -- this is what turns an ordinary online guarantee into the offset
form that \eqref{eq:intro-identity} needs.  For the nonsmooth losses above,
we run this same argument once more over a geometric grid of learning
rates, so that whichever scale turns out to suit a given test's realized
mass, some rate on the grid is already close to it, without the algorithm
needing to know that scale in advance (\Cref{cor:sqrt-somc}).  Finally, the
offline learner simply runs this online algorithm on an i.i.d.\ sample,
records the sequence of predictors it produces, and returns one of them at
random; a Freedman--Bernstein concentration argument transfers the realized
online bias and mass bounds to their distributional counterparts
(\Cref{thm:online-to-batch}), and infinite test classes are handled along
the way by replacing them with a finite cover and paying the corresponding
approximation error (\Cref{lem:cover-transfer}).

\subsection{Relation to prior work}
\label{sec:intro-related-work}

Calibration of probabilistic forecasts dates back to \citet{dawid1982well},
with online calibration developed by \citet{foster1998asymptotic}.
Multicalibration asks for small residuals on every set in a rich class
\citep{hebertjohnson2018multicalibration}, multiaccuracy for the
corresponding unconditional guarantee \citep{kim2019multiaccuracy}, and
omniprediction uses such residual information to support many downstream
objectives with one predictor \citep{gopalan2022omnipredictors}.
\citet{collina2026samplecomplexity} pin down the minimax sample complexity
of batch multicalibration via an online-to-batch reduction, the same
strategy our offline conversion (\Cref{sec:offline}) uses to turn an online
second-order guarantee into a fast batch rate; \citet{gibbs2026sampleefficient}
show that omniprediction over a family of proper losses is strictly easier
than boosting-based multicalibration by exploiting proper-loss structure
directly, in the same spirit as our use of the drift term below.  Ordinary
multicalibration controls residual correlation by a common tolerance or by
bucket size, and summing such bounds over swap buckets is what makes swap
guarantees expensive; second-order multicalibration instead controls
residual correlation relative to a swap's own squared activity, the same
activity that produces the favorable drift we exploit throughout this
paper.

Three benchmarks are in play, ordered by how much the comparator may depend
on the prediction: agnostic learning fixes one hypothesis; our
swap-agnostic benchmark fixes the loss but lets the comparator hypothesis
vary with the prediction; and swap omniprediction lets both the loss and
the comparator vary.  \citet{gopalan2023swap} characterize swap
omniprediction through swap multicalibration, with algorithmic and
quantitative improvements in \citet{garg2024oracle,luo2025improved}.  Our
improvement comes from fixing the loss -- which supplies one coherent drift
term across all buckets -- rather than from improving rates for swap
omniprediction, so our results do not contradict lower bounds there.
\citet{hu2026nearoptimal} obtain near-optimal $\widetilde O(\sqrt T)$ swap
regret against an arbitrary Lipschitz convex loss revealed online, and
\citet{luo2025klcalibration} obtain a closely related $\widetilde
O(T^{1/3})$ rate for any twice-differentiable proper loss via a new
KL-calibration measure that generalizes a rate previously known only for
the squared loss \citep{fishelson2025full}.  Both measure regret against an
arbitrary postprocessing of the prediction rather than a hypothesis class,
so their bounds carry no $\log|\cH|$ term; our matching $T^{1/3}$ exponent
instead carries an explicit $(\log|\cH|)^{2/3}$ factor for the
hypothesis-valued comparator, obtained through second-order multicalibration
rather than calibration error.

Blackwell approachability steers vector-valued averages toward a convex
target set \citep{blackwell1956analog}, typically implemented with
multiplicative weights \citep{cesabianchi2006prediction}.
\citet{okoroafor2024recalibration} use approachability to recalibrate an
online predictor with an adjustable calibration-regret tradeoff, recent
omniprediction algorithms use a halfspace-oracle view of it to control many
residual correlations at once \citep{okoroafor2025nearoptimal}, and
\citet{hu2026simultaneous} develop a simultaneous approachability framework
for multiclass omniprediction.  We retain this architecture but adjust the
potential to yield a realized Bernstein-type inequality for every selector
test -- the source of the second-order term our offline reduction needs.
Proper losses are governed by the geometry of their Bayes risks
\citep{gneiting2007strictly,reid2010composite}, which we exploit through the
quadratic-growth inequality and through positive decompositions of broad
loss families.  U-calibration studies regret against an unknown downstream
agent using the same V-shaped basis for bounded proper losses
\citep{kleinberg2023ucalibration,luo2024optimal,li2022optimization}, and
\citet{hu2024payoffbounded} study predictions minimizing swap regret for all
payoff-bounded downstream tasks; our simultaneous bounded-proper-loss result
is a hypothesis-class analogue, replacing their arbitrary downstream action
with a hypothesis-valued rule chosen after the prediction, which their
techniques do not reach directly since the comparison must control the
whole hypothesis-swap selector rather than only the downstream task induced
by the scalar prediction.

\section{Preliminaries}

This section fixes notation for the offline and online swap-agnostic
benchmarks and records the one algebraic fact about proper losses that
drives every result later in the paper.

\subsection{Swap Agnostic Learning}

Fix an instance space $\mathcal X$ and the prediction grid
\[
  \Gamma_N=\left\{0,\frac1N,\ldots,1\right\}
  =\{\gam_0,\ldots,\gam_N\},\qquad \gam_i=\frac{i}{N}.
\]
An offline predictor is allowed to be randomized.  Formally, a predictor
$\hat p$ maps $x$ to a distribution over $\Gamma_N$; on a fresh example
$(X,Y)\sim\cD$, it draws $p\sim\hat p(X)$.  A hypothesis-swap rule is a map
\[
  \phi:\Gamma_N\to\cH,
\]
and, for a grid point $\gam_i$, write $h_i=\phi(\gam_i)$.  For a proper loss
$\ell$, define the offline swap-agnostic excess risk
\begin{equation}
  \SwapAgn_\cD(\ell,\cH)
  =\sup_{\phi:\Gamma_N\to\cH}
  \E_{(X,Y)\sim\cD,\,p\sim\hat p(X)}\left[
    \ell(p,Y)
    -\ell(\phi(p)(X),Y)
  \right].
  \label{eq:pop-swap}
\end{equation}
The predictor $\hat p$ is implicit in $\SwapAgn$, just as the realized
transcript is implicit in $\SwapReg$.

\subsection{Online Protocol and Swap Regret}

Fix a horizon $T$ and binary outcomes $y_t\in\{0,1\}$.
At round $t$:
\begin{enumerate}[label=(\roman*)]
  \item Nature reveals $x_t\in\mathcal X$.
  \item The learner announces a distribution $\psi_t$ over $\Gamma_N$.
  \item The learner privately draws $p_t\sim\psi_t$.
  \item Nature reveals $y_t\in\{0,1\}$.
\end{enumerate}
We allow a nonanticipating adaptive environment: conditional on the past and
on the announced distribution $\psi_t$, the outcome may be chosen
adversarially, but it cannot depend on the learner's fresh private draw
$p_t$.

For $i\in\{0,\ldots,N\}$, define the prediction bucket
\[
  I_i=\{t\in[T]:p_t=\gam_i\}.
\]
Let $\cH\subseteq\{h:\mathcal X\to[0,1]\}$ be a hypothesis class.  Unless
explicitly stated otherwise, online results assume that $\cH$ and all online
test classes are finite.  For a proper loss $\ell$, define
\begin{equation}
  \SwapReg_T(\ell,\phi)
  =\sum_{t=1}^T
  \left[
    \ell(p_t,y_t)-\ell(\phi(p_t)(x_t),y_t)
  \right],
  \qquad
  \SwapReg_T(\ell,\cH)
  =\sup_{\phi:\Gamma_N\to\cH}\SwapReg_T(\ell,\phi).
  \label{eq:swap-reg-def}
\end{equation}

\subsection{Conditional risks and proper losses}
\label{subsec:proper-losses}

For any binary loss $\ell:[0,1]\times\{0,1\}\to\R$, define the conditional
risk and outcome slope
\begin{equation}
  L_p(q)=(1-p)\ell(q,0)+p\ell(q,1),
  \qquad
  \Delta_\ell(q)=\ell(q,1)-\ell(q,0).
  \label{eq:risk-slope}
\end{equation}
The loss is \emph{proper} if $p\in\argmin_qL_p(q)$ for every $p$.  Its Bayes
risk is $L_p(p)$, and its negative Bayes risk is
\begin{equation}
  F_\ell(p)=-L_p(p).
  \label{eq:negative-bayes-risk}
\end{equation}
Properness implies that the Bayes risk $p\mapsto L_p(p)$ is concave and
$F_\ell$ is convex.

The next lemma is the single algebraic identity behind every reduction in
this paper: comparing a truthful prediction $p$ to any alternative
prediction $q$ splits the loss difference into a bias term, proportional to
the calibration residual $y-p$, and a nonnegative drift term that always
favors the truthful prediction.

\begin{lemma}[Proper-loss identity]
\label{lem:proper-identity}
For every proper loss $\ell$, every $p,q\in[0,1]$, and every
$y\in\{0,1\}$,
\begin{equation}
  \ell(p,y)-\ell(q,y)
  =(y-p)(\Delta_\ell(p)-\Delta_\ell(q))-[L_p(q)-L_p(p)],
  \label{eq:proper-identity}
\end{equation}
and $L_p(q)-L_p(p)\ge0$.  Moreover,
\[
  -\Delta_\ell(q)\in\partial F_\ell(q),
\]
so if $F_\ell$ is differentiable then $F_\ell'=-\Delta_\ell$ and
$L_p(q)-L_p(p)=D_{F_\ell}(p,q)$, the Bregman divergence of $F_\ell$.
\end{lemma}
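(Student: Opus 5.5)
The plan is a direct algebraic verification in three steps: the identity, the nonnegativity of the drift, and the subgradient claim.

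\emph{The identity.} Because $y\in\{0,1\}$, every binary loss can be written as
\[
\ell(q,y)=\ell(q,0)+y\,\Delta_\ell(q)=L_p(q)+(y-p)\Delta_\ell(q),
\]
where the second equality uses $L_p(q)=\ell(q,0)+p\,\Delta_\ell(q)$, which follows from \eqref{eq:risk-slope}. Subtract the corresponding expression at $p$, namely $\ell(p,y)=L_p(p)+(y-p)\Delta_\ell(p)$. This gives
\[
\ell(p,y)-\ell(q,y)=(y-p)(\Delta_\ell(p)-\Delta_\ell(q))-[L_p(q)-L_p(p)],
\]
which is \eqref{eq:proper-identity}. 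This step uses nothing about properness.

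\emph{Nonnegativity of the drift.} The bound $L_p(q)-L_p(p)\ge0$ is exactly the definition of properness: $p\in\argmin_q L_p(q)$.

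\emph{The subgradient claim.} Note that $L_p(q)$ is affine in $p$ with slope $\Delta_\ell(q)$. Hence for every $p$,
\[
F_\ell(p)=-L_p(p)\ge -L_p(q)=-L_q(q)-(p-q)\Delta_\ell(q)=F_\ell(q)+(p-q)\bigl(-\Delta_\ell(q)\bigr),
\]
where the inequality is properness. This is precisely the subgradient inequality, so $-\Delta_\ell(q)\in\partial F_\ell(q)$. The same display also shows that $F_\ell$ is a supremum of affine functions of $p$, which confirms the convexity noted earlier.

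If $F_\ell$ is differentiable, its subdifferential is the singleton $\{F_\ell'(q)\}$, so $F_\ell'=-\Delta_\ell$. Rearranging the same computation gives
\[
L_p(q)-L_p(p)=F_\ell(p)-F_\ell(q)-F_\ell'(q)(p-q)=D_{F_\ell}(p,q).
\]

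The only point requiring care is the subgradient step: one must write $L_p(q)$ as an affine function of $p$ before invoking properness, rather than assuming differentiability of $\ell$ in $q$. The remaining steps are routine bookkeeping.
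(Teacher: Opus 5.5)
Your proposal is correct and is essentially the paper's proof: your decomposition $\ell(q,y)=L_p(q)+(y-p)\Delta_\ell(q)$ merges the paper's two displays $\ell(q,y)=L_q(q)+(y-q)\Delta_\ell(q)$ and $L_p(q)=L_q(q)+(p-q)\Delta_\ell(q)$, and your subgradient step is the same use of properness on that affine-in-$p$ expression. One caveat, also present in the paper's ``when the subgradient is unique'': the singleton-subdifferential argument gives $F_\ell'=-\Delta_\ell$ and the Bregman identity only at interior points $q\in(0,1)$, because at $q\in\{0,1\}$ the subdifferential of a convex function on $[0,1]$ is a half-line, so $-\Delta_\ell(q)$ need not equal the one-sided derivative there.
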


\begin{proof}[Proof of \Cref{lem:proper-identity}]
Since $y\in\{0,1\}$,
\[
  \ell(q,y)=\ell(q,0)+y\Delta_\ell(q)
  =L_q(q)+(y-q)\Delta_\ell(q).
\]
Taking expectation under $Y\sim\Ber(p)$ gives
\begin{equation}
  L_p(q)=L_q(q)+(p-q)\Delta_\ell(q).
  \label{eq:proper-div-formula}
\end{equation}
Rearranging that formula and subtracting the canonical representations for
$p$ and $q$ proves \eqref{eq:proper-identity}.  Properness implies
\[
  L_p(p)
  \le L_q(q)+(p-q)\Delta_\ell(q),
\]
which is equivalent to $-\Delta_\ell(q)\in\partial F_\ell(q)$, and combined
with \eqref{eq:proper-div-formula} gives $L_p(q)-L_p(p)\ge0$ and the
Bregman-divergence identity when the subgradient is unique.
\end{proof}

\subsection{Finite covers}

For a class $\cG$ of bounded tests, a finite set $\widetilde\cG$ is a
uniform $\rho$-cover if every $g\in\cG$ has a
$\widetilde g\in\widetilde\cG$ satisfying
\[
  \sup_{x,p}|g(x,p)-\widetilde g(x,p)|\le\rho.
\]
We write $\mathcal N_\infty(\rho,\cG)$ for the smallest cover size.  Online
algorithms in this paper are only run on finite classes.  Infinite classes are
addressed offline by selecting a finite cover before training.

\section{Second-Order Multicalibration}
\label{sec:somc}

\Cref{lem:proper-identity} shows that swap-agnostic learning for a proper
loss reduces to controlling the residual correlation of tests induced by
swap selectors.  The right control is not a uniform correlation tolerance; it
compares the residual correlation of a test to its realized squared
activity, which is exactly what lets the drift term in
\eqref{eq:proper-identity} absorb the quadratic part of the error.  This
section develops that control on its own terms, entirely independently of
any particular loss: \Cref{def:somc} defines second-order multicalibration
online and distributionally, \Cref{sec:somc-algorithm} gives the
approachability algorithm and its high-probability online guarantee
\Cref{thm:online-somc}, \Cref{sec:somc-sqrt} records a square-root form
needed later for nonsmooth losses, and \Cref{sec:offline} converts the
online guarantee into the distributional guarantee
\Cref{thm:online-to-batch} used by the offline learners.  How a particular
loss family turns this generic control into a swap-regret bound is worked out
loss family by loss family starting in \Cref{sec:proper-families}.

For a transcript and a test $g:\mathcal X\times\Gamma_N\to[-1,1]$, define
\begin{equation}
  \Bias_T(g)=\sum_{t=1}^T(y_t-p_t)g(x_t,p_t),
  \qquad
  \Mass_T(g)=\sum_{t=1}^Tg(x_t,p_t)^2.
  \label{eq:online-bias-mass}
\end{equation}
For a randomized offline predictor $\hat p$, define the distributional
analogues
\begin{equation}
  \Bias_\cD(g;\hat p)=\E_{(X,Y)\sim\cD,\,p\sim\hat p(X)}[(Y-p)g(X,p)],
  \qquad
  \Mass_\cD(g;\hat p)=\E_{(X,Y)\sim\cD,\,p\sim\hat p(X)}[g(X,p)^2].
  \label{eq:pop-global-bias-mass}
\end{equation}

\begin{definition}[Second-order multicalibration]
\label{def:somc}
Let $\cG\subseteq\{g:\mathcal X\times\Gamma_N\to[-1,1]\}$.
A transcript satisfies $(c,a)$ second-order multicalibration with respect to
$\cG$ if
\begin{equation}
  \Bias_T(g)\le c\,\Mass_T(g)+a
  \label{eq:somc}
\end{equation}
for every $g\in\cG$.
A randomized online algorithm satisfies the guarantee with confidence
$1-\delta$ if the simultaneous event \eqref{eq:somc} has probability at least
$1-\delta$.
A randomized predictor $\hat p$ satisfies \emph{distributional} $(c,a)$
second-order multicalibration with respect to $\cG$ if, for every $g\in\cG$,
\begin{equation}
  \Bias_\cD(g;\hat p)\le c\,\Mass_\cD(g;\hat p)+a.
  \label{eq:pop-somc}
\end{equation}
In either case, the two-sided version is obtained by including both $g$ and
$-g$ in the test class.
\end{definition}

Let $\Phi_N=\cH^{N+1}$ denote the class of hypothesis-swap selectors.  A test
class induced by selectors always has the form
$\{g_\phi:\phi\in\Phi_N\}$ for some fixed
$g_\phi(x,\gam_i)=f(x,\gam_i,h_i)$ depending on $\phi$ only through
$h_i=\phi(\gam_i)$; every selector class used in this paper is of this form.
We record once, generically, the computational fact that lets the
approachability algorithm below run in time linear in $\cH$ rather than
exponential in the grid size.

\begin{proposition}[Product-class factorization]
\label{prop:factorization}
Let $g_\phi(x,\gam_i)=f(x,\gam_i,h_i)$ for $\phi\in\Phi_N$ as above.  Running
\Cref{alg:approachability} on $\{g_\phi:\phi\in\Phi_N\}$, the weights
factorize as
\[
  W_t(\phi)=\prod_{i=0}^NW_{t,i}(h_i).
\]
Consequently, the algorithm can be implemented with $(N+1)|\cH|$ scalar
weights rather than $|\cH|^{N+1}$ weights.
\end{proposition}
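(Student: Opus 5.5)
The plan is to exploit the fact that the multiplicative-weights potential in \Cref{alg:approachability} is an exponential of a sum over rounds of per-test increments, and that for a selector test these increments split additively across prediction buckets. I will assume, as the Bernstein-corrected potential described in the technical overview indicates, that the weight of a test $g$ after $t$ rounds has the form
\[
W_t(g)=\exp\Bigl(\sum_{s\le t}\bigl[\eta\,(y_s-p_s)g(x_s,p_s)-\eta^2 g(x_s,p_s)^2\bigr]\Bigr),
\]
or more generally $\exp(\sum_{s\le t}\Psi(y_s,p_s,g(x_s,p_s)))$ for a fixed function $\Psi$ with $\Psi(y,p,0)=0$. The only structural input is that the exponent is a sum over rounds of a quantity depending on $g$ only through the realized value $g(x_s,p_s)$.

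First I would fix a selector $\phi$ with $h_i=\phi(\gam_i)$. On round $s$ with $p_s=\gam_i$, i.e.\ $s\in I_i$, we have $g_\phi(x_s,p_s)=f(x_s,\gam_i,h_i)$, which depends on $\phi$ only through $h_i$. Grouping the rounds by bucket gives
\[
\sum_{s\le t}\Psi(y_s,p_s,g_\phi(x_s,p_s))=\sum_{i=0}^N\sum_{s\le t,\ s\in I_i}\Psi\bigl(y_s,\gam_i,f(x_s,\gam_i,h_i)\bigr).
\]
Setting $W_{t,i}(h)=\exp\bigl(\sum_{s\le t,\,s\in I_i}\Psi(y_s,\gam_i,f(x_s,\gam_i,h))\bigr)$ and exponentiating yields $W_t(\phi)=\prod_iW_{t,i}(h_i)$. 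If the algorithm starts from uniform prior weights, the prior $|\cH|^{-(N+1)}$ also factors as $\prod_i|\cH|^{-1}$, so the identity survives normalization.

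Next I would check that the prediction step needs only these factors. Round $t$ requires, for each candidate grid point $\gam_j$ (or each pair of adjacent grid points used in the randomization), the weighted aggregate $\sum_\phi W_{t-1}(\phi)\,g_\phi(x_t,\gam_j)$, together with the total weight $\sum_\phi W_{t-1}(\phi)$. Because $g_\phi(x_t,\gam_j)$ depends only on $h_j$, the product structure lets us sum out every coordinate $i\ne j$ separately:
\[
\sum_\phi W_{t-1}(\phi)\,g_\phi(x_t,\gam_j)=\Bigl(\prod_{i\ne j}Z_{t-1,i}\Bigr)\sum_{h\in\cH}W_{t-1,j}(h)\,f(x_t,\gam_j,h),\qquad Z_{t-1,i}=\sum_{h}W_{t-1,i}(h).
\]
After dividing by the total weight $\prod_iZ_{t-1,i}$, the common factors cancel, and the normalized aggregate becomes $\sum_hW_{t-1,j}(h)f(x_t,\gam_j,h)/Z_{t-1,j}$. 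So the prediction rule can be computed from $(N+1)|\cH|$ scalars. The update is equally local: after $p_t=\gam_i$ is realized, only the factors $W_{t,i}(h)$ change, and each is multiplied by $\exp(\Psi(y_t,\gam_i,f(x_t,\gam_i,h)))$.

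The main obstacle is confirming that the algorithm's potential really has this additive per-round, per-value form. Two points could break it: a Bernstein correction that mixes values across rounds, or a weighting over a grid of learning rates. For the first, I expect the correction to be a per-round $g^2$ term, which factors as above. For the second, if \Cref{alg:approachability} mixes over a grid of learning rates, I would apply the factorization separately for each fixed rate. The storage is then $(N+1)|\cH|$ scalars per rate, which is still linear in $|\cH|$.

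A further point is the possible inclusion of the negated tests $-g_\phi$ for two-sided control. If the class contains $\{\pm g_\phi\}$ with one global sign, the class is not a product, and the factorization must be applied to each sign family separately. Alternatively, one can fold the sign into $f$ at each bucket, which makes the class a genuine product over $\cH\times\{\pm1\}$ and again gives linear storage.
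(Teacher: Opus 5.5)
Your argument is correct and is essentially the paper's proof. The paper proves $W_t(\phi)=\prod_i W_{t,i}(h_i)$ by induction on $t$, multiplying only the coordinate $j$ with $p_t=\gam_j$ by $\exp(\eta(y_t-\gam_j)f(x_t,\gam_j,h)-2\eta^2 f(x_t,\gam_j,h)^2)$; this is the unrolled form of your bucket-grouped exponent, and the paper then observes, as you do, that the normalizer factorizes so that $F_t(\gam_i)$ and $A_t(\gam_i)$ need only the $i$th marginal. The update in \Cref{alg:approachability} is exactly your form with $\Psi(y,p,g)=\eta(y-p)g-2\eta^2g^2$ at one fixed rate, so your hedges about cross-round corrections and learning-rate grids are unnecessary.
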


\begin{proof}[Proof of \Cref{prop:factorization}]
Initialize $W_{1,i}(h)=|\cH|^{-1}$ for every bucket $i$ and hypothesis $h$,
so
\[
  W_1(\phi)=|\cH|^{-(N+1)}
  =\prod_{i=0}^NW_{1,i}(h_i).
\]
Suppose $W_t(\phi)=\prod_iW_{t,i}(h_i)$.  If $p_t=\gam_j$, then
$g_\phi(x_t,p_t)=f(x_t,\gam_j,h_j)$ depends on $\phi$ only through its $j$th
coordinate.  Define
\[
  W_{t+1,j}(h)
  =W_{t,j}(h)
  \exp\!\left(
    \eta(y_t-\gam_j)f(x_t,\gam_j,h)
    -2\eta^2f(x_t,\gam_j,h)^2
  \right),
\]
and set $W_{t+1,i}=W_{t,i}$ for $i\ne j$.  Then
$W_{t+1}(\phi)=\prod_iW_{t+1,i}(h_i)$.

The normalizing constant also factorizes:
\[
  \sum_{\phi\in\Phi_N}W_t(\phi)=\prod_{i=0}^N\sum_{h\in\cH}W_{t,i}(h).
\]
Thus the normalized distribution over selectors has independent marginals
over buckets.  In particular, when evaluating a candidate prediction
$p=\gam_i$, the weighted averages $F_t(\gam_i)$ and $A_t(\gam_i)$ below only
require the marginal distribution on the $i$th hypothesis coordinate.
\end{proof}

\subsection{Algorithm}
\label{sec:somc-algorithm}

At a high level, the algorithm is a halfspace-oracle implementation of
Blackwell approachability, implemented with multiplicative weights over the
test class.

Fix a horizon $T$, grid size $N$, finite test class
$\cG\subseteq\{g:\mathcal X\times\Gamma_N\to[-1,1]\}$, and learning rate
$\eta\in(0,1/4]$.  The learner maintains one weight per test.  After
observing $x_t$, it normalizes the weights and computes, for each grid point,
\begin{align}
  F_t(\gam_i)
  &=\sum_{g\in\cG}w_t(g)g(x_t,\gam_i),
  \label{eq:F-def}\\
  A_t(\gam_i)
  &=\sum_{g\in\cG}w_t(g)g(x_t,\gam_i)^2.
  \label{eq:G-def}
\end{align}
The scalar function $F_t$ is the current halfspace direction used to choose
the next prediction.  The finite-grid halfspace oracle chooses a randomized
prediction $\psi_t$ as follows.  If
$F_t(0)\le0$, set $\psi_t=\delta_0$.  If $F_t(1)\ge0$, set
$\psi_t=\delta_1$.  Otherwise choose adjacent grid points
$a<b=a+1/N$ with $F_t(a)\ge0\ge F_t(b)$.  If one endpoint has value zero, put
all mass on that endpoint.  If both inequalities are strict, set
\[
  \lambda_t=\frac{F_t(a)}{F_t(a)-F_t(b)}
\]
and put mass $1-\lambda_t$ on $a$ and mass $\lambda_t$ on $b$.  Then
\[
  \E_{p\sim\psi_t}F_t(p)=0.
\]
This is the discrete analogue of choosing a point in $[0,1]$ where the
weighted residual direction crosses zero.

\begin{algorithm}[t]
\caption{Second-order multicalibration by approachability}
\label{alg:approachability}
\begin{algorithmic}[1]
\Require Horizon $T$, grid size $N$, finite test class $\cG$, rate
$\eta\in(0,1/4]$
\State Initialize $W_1(g)=1/|\cG|$ for all $g\in\cG$
\For{$t=1,\ldots,T$}
  \State Observe $x_t$ and normalize $w_t\propto W_t$
  \State Compute $F_t$ and $A_t$ from \eqref{eq:F-def}--\eqref{eq:G-def}
  \If{$F_t(0)\le0$}
    \State $\psi_t\gets\delta_0$
  \ElsIf{$F_t(1)\ge0$}
    \State $\psi_t\gets\delta_1$
  \Else
    \State Choose adjacent $a<b=a+1/N$ with $F_t(a)\ge0\ge F_t(b)$
    \If{$F_t(a)=0$ or $F_t(b)=0$}
      \State $\psi_t\gets$ point mass on a zero endpoint
    \Else
      \State $\lambda_t\gets F_t(a)/(F_t(a)-F_t(b))$
      \State $\psi_t\gets(1-\lambda_t)\delta_a+\lambda_t\delta_b$
    \EndIf
  \EndIf
  \State Draw $p_t\sim\psi_t$ and observe $y_t$
  \State Update
  \[
    W_{t+1}(g)
    \gets W_t(g)
    \exp\!\left(
      \eta(y_t-p_t)g(x_t,p_t)-2\eta^2g(x_t,p_t)^2
    \right)
  \]
\EndFor
\end{algorithmic}
\end{algorithm}

The following theorem is the main guarantee of this subsection.  Its proof
has two ingredients.  First, the randomized adjacent-grid step controls the
linear bias direction up to one global grid-rounding error.  Second, a
Bernstein-corrected exponential potential turns this one-step control into a
high-probability offset guarantee for every test.

\begin{theorem}[High-probability second-order multicalibration]
\label{thm:online-somc}
Run \Cref{alg:approachability} for $T$ rounds on $\cG$ with rate $\eta$.
Then, with probability at least $1-\delta$, simultaneously for every
$g\in\cG$,
\begin{equation}
  \boxed{
  \Bias_T(g)
  \le2\eta\Mass_T(g)
  +\frac{\log(|\cG|/\delta)+2T/N^2}{\eta}.}
  \label{eq:online-somc-main}
\end{equation}
\end{theorem}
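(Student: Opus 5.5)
We will use the potential $\Phi_t=\sum_{g\in\cG}W_t(g)$, with $\Phi_1=1$ and
\[
W_{T+1}(g)=|\cG|^{-1}\exp\!\big(\eta\Bias_T(g)-2\eta^2\Mass_T(g)\big).
\]
The plan is to show that $\Phi_t$ is, up to a deterministic factor $e^{2\eta^2/N^2}$ per round, a nonnegative supermartingale. By Markov's inequality, $\Phi_{T+1}\le e^{2\eta^2T/N^2}/\delta$ then holds with probability at least $1-\delta$. On that event, $W_{T+1}(g)\le\Phi_{T+1}$ for every $g$, so taking logarithms and dividing by $\eta$ gives
\[
\Bias_T(g)\le2\eta\Mass_T(g)+\frac{\log(|\cG|/\delta)+2\eta^2T/N^2}{\eta}.
\]
Since $\eta\le1/4$, this is stronger than \eqref{eq:online-somc-main}. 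No union bound is needed, because the single potential controls all tests at once.

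\textbf{One-step bound.} Write $z_t(g)=(y_t-p_t)g(x_t,p_t)\in[-1,1]$ and $s_t(g)=g(x_t,p_t)^2$, so that $z_t^2\le s_t$. Since $\eta\le1/4$, we have $|\eta z_t|\le1/4$, and the elementary inequality $e^u\le1+u+u^2$ for $|u|\le1$ gives
\[
e^{\eta z_t-2\eta^2s_t}\le e^{\eta z_t-\eta^2 z_t^2}\,e^{-\eta^2 s_t}.
\]
The working target is $e^{\eta z-\eta^2z^2}\le1+\eta z$ for $|\eta z|\le1/2$, which is the standard inequality $\log(1+u)\ge u-u^2$. This leaves
\[
\Phi_{t+1}\le\sum_g W_t(g)\,(1+\eta z_t(g))\,e^{-\eta^2s_t(g)}.
\]
I will keep the extra factor $e^{-\eta^2 s_t}$ in reserve to absorb the grid-rounding term. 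Dropping it outright, which uses only $e^{-\eta^2 s}\le1$, gives
\[
\Phi_{t+1}\le\Phi_t\Big(1+\eta\sum_g w_t(g)z_t(g)\Big).
\]

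\textbf{Conditional expectation: the main obstacle.} We need to bound
\[
\E_t\Big[\sum_g w_t(g)(y_t-p_t)g(x_t,p_t)\Big].
\]
The outcome $y_t$ cannot depend on the draw $p_t$, so write $\mu_t=\E_t[y_t\mid\psi_t]$. The quantity then equals
\[
\E_{p\sim\psi_t}\big[(\mu_t-p)F_t(p)\big].
\]
This splits into three cases according to the oracle's choice of $\psi_t$.

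In the boundary case $\psi_t=\delta_0$ with $F_t(0)\le0$, the quantity is $\mu_t F_t(0)\le0$. The case $\psi_t=\delta_1$ is symmetric.

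For a point mass on a zero endpoint, the quantity is exactly $0$.

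In the mixing case, $\E_{p\sim\psi_t} F_t(p)=0$, so the quantity reduces to
\[
-\E_{p\sim\psi_t}\big[pF_t(p)\big]=-\E_{p\sim\psi_t}\big[(p-a)F_t(p)\big]=-\lambda_t\frac1N F_t(b).
\]
Write $\lambda=\lambda_t$, $F_a=F_t(a)$ and $F_b=F_t(b)$. Then $-\lambda F_b=\lambda(1-\lambda)(F_a-F_b)$, and this is positive. This is where rounding costs appear.

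To bound it, $|F_a-F_b|\le2$ gives at most $2/N$, which is far too weak since it would lead to a $T/N$ term. The target per-round cost is $2\eta/N^2$, so we must use $\lambda(1-\lambda)$ together with the reserved quadratic weights. Since $|F|^2\le A$ by Jensen's inequality, this should go as follows. Expand the exponent instead as
\[
1+\eta z+\eta^2z^2-2\eta^2 s\le1+\eta z-\eta^2 s,
\]
so that the per-round bound becomes
\[
\eta\,\E\big[(\mu-p)F(p)\big]-\eta^2\,\E_{p\sim\psi_t}\big[A_t(p)\big].
\]
Then use AM-GM on the positive term:
\[
\frac{\eta}{N}\lambda(1-\lambda)|F_a-F_b|\le\eta^2\big((1-\lambda)F_a^2+\lambda F_b^2\big)+\frac{c}{N^2},
\]
for an absolute constant $c$. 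This can be checked with $F_a=\lambda D$, $-F_b=(1-\lambda)D$ where $D=F_a-F_b$, which gives
\[
(1-\lambda)F_a^2+\lambda F_b^2=\lambda(1-\lambda)D^2.
\]
Hence
\[
\frac{\eta}{N}\lambda(1-\lambda)D\le\eta^2\lambda(1-\lambda)D^2+\frac{\lambda(1-\lambda)}{4N^2}\le\eta^2\,\E_{p\sim\psi_t}\big[A_t(p)\big]+\frac{1}{16N^2}.
\]
Thus
\[
\E_t[\Phi_{t+1}]\le\Phi_t\big(1+O(1/N^2)\big),
\]
and the final bound has the form $\log(|\cG|/\delta)/\eta+O(T/(\eta N^2))$, matching \eqref{eq:online-somc-main}. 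The delicate step is making the AM-GM constants fit $2T/N^2$ while keeping the expansion inequalities valid at $\eta\le1/4$. If they do not fit directly, I would tighten the exponential inequality, using $e^u\le1+u+0.6u^2$ for $|u|\le1/4$, to free more of the $-2\eta^2 s$ slack.
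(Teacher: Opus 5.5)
Your route is the paper's. It uses the Bernstein-corrected potential $\sum_g W_t(g)$ as a supermartingale closed with Markov, with no union bound. It uses the exact mixing-case rounding cost $\tfrac1N\lambda_t(1-\lambda_t)(F_t(a)-F_t(b))$, which \Cref{lem:grid-root} computes as $\rho(1-\lambda)F(a)$. And it uses AM--GM against the quadratic term via $F_t^2\le A_t$. Your identity $(1-\lambda)F_a^2+\lambda F_b^2=\lambda(1-\lambda)D^2$ even keeps a factor $\lambda(1-\lambda)\le\tfrac14$ that the paper discards.

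\textbf{Gap: the one-step exponential inequality.} Everything rests on this step, and as written it is not established.
\begin{itemize}
\item The implied bound $e^{\eta z-2\eta^2 s}\le1+\eta z+\eta^2z^2-2\eta^2 s$ is false: at $z=0$ it says $e^{-2\eta^2s}\le1-2\eta^2s$.
\item Applying $e^u\le1+u+u^2$ correctly to $u=\eta z-2\eta^2 s$ brings in cross terms. At $\eta=\tfrac14$, $z=-1$, $s=1$ it gives $1+u+u^2\approx0.766$, which exceeds $1+\eta z=0.75$. So no negative $\eta^2 s$ term survives.
\item Your endpoint $e^{\eta z-2\eta^2 s}\le1+\eta z-\eta^2 s$ happens to be true. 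But at that corner it holds only by about $2\times10^{-4}$ ($e^{-3/8}\approx0.68729$ versus $0.6875$), so it needs an actual proof.
\end{itemize}

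\textbf{Fix.} Finish the route you opened and then abandoned:
$e^{\eta z-2\eta^2 s}\le(1+\eta z)e^{-\eta^2 s}\le(1+\eta z)(1-\tfrac12\eta^2 s)\le1+\eta z-\tfrac38\eta^2 s$, using $|\eta z|\le\tfrac14$. This is \Cref{lem:exp-ineq}, which the paper states with constant $\tfrac14$. Your fallback $e^u\le1+u+0.6u^2$ also works, but only after bounding $u^2\le\tfrac94\eta^2 s$ and noting that $u$ ranges over $[-\tfrac38,\tfrac14]$, not $|u|\le\tfrac14$.

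With a constant $c$ in front of $\eta^2 s$ in place of your $1$, your AM--GM gives per-round cost $\lambda(1-\lambda)/(4cN^2)\le1/(16cN^2)$. For $c=\tfrac14$ this is at most $1/(4N^2)$, well inside $2/N^2$. So fitting the constants is not delicate.

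\textbf{Separate error in your opening.} Drop the claim that the potential grows by only $e^{2\eta^2/N^2}$ per round, giving the ``stronger'' term $2\eta^2T/N^2$. The rounding cost is not $\eta^2$-scaled: your own AM--GM optimum $\eta D/N-c\eta^2D^2\le1/(4cN^2)$ is free of $\eta$, and it is essentially attained. Take a single test equal to $D/2$ on grid points $\le a$ and $-D/2$ on grid points $\ge b$, with $D=1/(4\eta N)\le2$ and $y_t=1$. Then the expected one-step ratio is at least $e^{1/(32N^2)}$, which exceeds $e^{2\eta^2/N^2}$ whenever $\eta<\tfrac18$. What your argument actually proves is the theorem with $T/(16cN^2)$ in place of $2T/N^2$. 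That matches your closing sentence, not your opening one.
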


\paragraph{The grid halfspace oracle.}

Let $\rho=1/N$.  Given a distribution $w$ over $\cG$ and a context $x$, define
\[
  F(\gam_i)=\sum_{g\in\cG}w(g)g(x,\gam_i),
  \qquad
  A(\gam_i)=\sum_{g\in\cG}w(g)g(x,\gam_i)^2.
\]

\begin{lemma}[Algorithmic grid oracle with quadratic slack]
\label{lem:grid-root}
Let $\psi$ be the distribution constructed by the grid-oracle step of
\Cref{alg:approachability} from the values of $F$ on $\Gamma_N$.  Then, for
every $y\in[0,1]$,
\begin{equation}
  \E_{p\sim\psi}[(y-p)F(p)]
  \le\rho\sqrt{\E_{p\sim\psi}A(p)}.
  \label{eq:grid-root}
\end{equation}
\end{lemma}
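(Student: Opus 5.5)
Follow the case structure of the grid-oracle step, writing $\E_\psi$ for expectation over $p\sim\psi$. The quantity to bound is
\[
  \E_{\psi}[(y-p)F(p)]=y\,\E_\psi F(p)-\E_\psi[pF(p)].
\]
In every case we will show the left-hand side is at most $\rho\,\E_\psi|F(p)|$. The Cauchy--Schwarz/Jensen step then finishes the proof. Because $w$ is a probability distribution, Jensen applied to $u\mapsto u^2$ under $w$ gives $F(\gam)^2\le A(\gam)$ for each grid point, so $|F(\gam)|\le\sqrt{A(\gam)}$. Then $\E_\psi|F(p)|\le\E_\psi\sqrt{A(p)}\le\sqrt{\E_\psi A(p)}$ by concavity of the square root.

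\emph{Boundary cases.} If $F(0)\le0$ and $\psi=\delta_0$, the left side is $yF(0)\le0$ since $y\ge0$. If $F(1)\ge0$ and $\psi=\delta_1$, it is $(y-1)F(1)\le0$ since $y\le1$. These order the same way as the algorithm: $F(0)\le 0$ is checked first, and the second branch is used only when $F(0)>0$, but the bound $(y-1)F(1)\le0$ needs only $F(1)\ge0$. If the oracle puts a point mass on a grid point where $F=0$, the expression vanishes. All these cases give at most $0$.

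\emph{Interior case.} Here $F(a)>0>F(b)$ with $b=a+\rho$, and $\psi=(1-\lambda)\delta_a+\lambda\delta_b$. By the choice of $\lambda$, $\E_\psi F(p)=0$, so the $y$-dependence disappears and the left side equals $-\E_\psi[pF(p)]$. Writing $p=a+(p-a)$ and using $\E_\psi F=0$ again, this becomes
\[
  -\E_\psi[(p-a)F(p)]=-\lambda\rho F(b)=\lambda\rho|F(b)|\le\rho\,\E_\psi|F(p)|.
\]
Combined with the Jensen step, this gives \eqref{eq:grid-root}. One could also get $\rho\lambda(1-\lambda)(F(a)-F(b))$ exactly, but that is not needed.

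The only delicate point is that the bound is required for every $y\in[0,1]$ simultaneously, and the mean-zero property of $F$ under $\psi$ is exactly what removes the dependence on $y$ in the interior case. The rest is bookkeeping: the boundary branches are handled separately, and a zero endpoint in the interior branch gives $F(p)=0$ deterministically. Existence of an adjacent sign-change pair when $F(0)>0>F(1)$ follows from a discrete intermediate-value argument.
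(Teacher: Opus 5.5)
Your proof is correct and matches the paper's: the same case split, with the mean-zero property of $F$ under $\psi$ removing the $y$-dependence in the crossing case. Your $\lambda\rho|F(b)|$ equals the paper's $\rho(1-\lambda)F(a)$, and you finish the same way with $F^2\le A$ (Jensen), stated slightly more explicitly as $\E_{p\sim\psi}|F(p)|\le\sqrt{\E_{p\sim\psi}A(p)}$.
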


\begin{proof}[Proof of \Cref{lem:grid-root}]
If $F(0)\le0$, the algorithm sets $\psi=\delta_0$.  Then
$\E[(y-p)F(p)]=yF(0)\le0$, while the right-hand side of
\eqref{eq:grid-root} is nonnegative.  If $F(1)\ge0$, the algorithm sets
$\psi=\delta_1$, and $(y-1)F(1)\le0$.

It remains to handle the crossing case.  The algorithm chooses adjacent grid
points $a<b=a+\rho$ with $F(a)\ge0\ge F(b)$.  If $F(a)=0$ or $F(b)=0$, it
puts all mass on a zero endpoint.  Otherwise it sets
\[
  \lambda=\frac{F(a)}{F(a)-F(b)}\in(0,1),
\]
so that
\[
  (1-\lambda)F(a)+\lambda F(b)=0,
\]
and let $\psi=(1-\lambda)\delta_a+\lambda\delta_b$.  Writing
$(1-\lambda)F(a)=-\lambda F(b)$, we get
\[
\begin{aligned}
  \E[(y-p)F(p)]
  &=(1-\lambda)F(a)(y-a)+\lambda F(b)(y-b)\\
  &=(b-a)(1-\lambda)F(a)
    =\rho(1-\lambda)F(a)\\
  &\le\rho\sqrt{(1-\lambda)A(a)}
    \le\rho\sqrt{\E A(p)}.
\end{aligned}
\]
\end{proof}

\paragraph{The exponential potential.}

\begin{lemma}[Offset exponential inequality]
\label{lem:exp-ineq}
For $\eta\in(0,1/4]$, $g\in[-1,1]$, and $z\in[-|g|,|g|]$,
\begin{equation}
  \exp(\eta z-2\eta^2g^2)
  \le1+\eta z-\frac{\eta^2}{4}g^2.
  \label{eq:exp-ineq}
\end{equation}
\end{lemma}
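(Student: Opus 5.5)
The plan is a second-order Taylor bound on the exponential, followed by a check that the quadratic remainder is dominated by the $-2\eta^2g^2$ correction with room to spare. Set $u=\eta z-2\eta^2g^2$, so that \eqref{eq:exp-ineq} reads $e^u\le 1+u+\tfrac74\eta^2g^2$. It therefore suffices to show $e^u-1-u\le\tfrac74\eta^2g^2$.

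First I will bound the size of $u$. Since $|z|\le|g|\le1$ and $\eta\le1/4$,
\[
  |u|\le\eta|g|+2\eta^2g^2=\eta|g|\,(1+2\eta|g|)\le\eta|g|\cdot\tfrac32 .
\]
In particular $|u|\le\tfrac38$ and $u^2\le\tfrac94\eta^2g^2$.

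Next I will apply Taylor's theorem with Lagrange remainder. We have $e^u=1+u+\tfrac12e^{\xi}u^2$ for some $\xi$ between $0$ and $u$, hence $e^u-1-u\le\tfrac12e^{3/8}u^2$. Combining this with the bound on $u^2$ gives
\[
  e^u-1-u\le\tfrac12e^{3/8}\cdot\tfrac94\,\eta^2g^2 .
\]
Numerically $\tfrac12e^{3/8}\approx0.7275$, so the constant is about $1.637$. This is below $\tfrac74=1.75$, which closes the inequality. Equivalently, $e^u\le1+\eta z-(2-1.637)\eta^2g^2\le1+\eta z-\tfrac14\eta^2g^2$.

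The only delicate step is the constant bookkeeping. The crude bound $e^u\le1+u+u^2$, valid for $|u|\le1$, would give the constant $\tfrac94$ in place of about $1.637$. That exceeds $\tfrac74$ and fails, so the proof must use the sharper remainder $\tfrac12e^{|u|}u^2$ together with the restriction $|u|\le3/8$ coming from $\eta\le1/4$. I will also remark that the bound $|u|\le\tfrac38$ relies on $|g|\le1$. That is where the hypothesis $g\in[-1,1]$ enters, alongside $|z|\le|g|$.
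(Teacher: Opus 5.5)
Your proof is correct, and it takes a different route from the paper's.

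The paper keeps $u=\eta z$ and $v=\eta^2g^2$ separate and factors the exponential as $e^{u-2v}=e^{u-u^2}\cdot e^{-(2v-u^2)}$. It then applies the cited Bernstein-type inequality $e^{u-u^2}\le1+u$ (valid for $u\ge-1/2$) together with $e^{-r}\le1-r/2$ for $r=2v-u^2\in[0,1/8]$. After multiplying, it uses $u^2\le v$ and $u\ge-1/2$ to absorb the cross term $-uv/2$ into $v/4$.

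You instead put the whole exponent into one variable and use a single Lagrange-remainder bound. The constant check $\tfrac98e^{3/8}\le\tfrac74$ (i.e.\ $e^{3/8}\le14/9$) then carries the argument.

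What each buys: the paper's argument is purely algebraic and shows the mechanism. The $-u^2$ Bernstein correction absorbs the second-order term, and the leftover $-v$ produces the negative offset. The price is citing an external inequality. Yours is fully self-contained, but it rests on a numerical comparison with slack of only about $0.11$.

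You can remove the numerics entirely. Since $u\le\eta z\le\eta|g|\le1/4$, the intermediate point satisfies $e^{\xi}\le e^{1/4}\le4/3$ (because $e^{-1/4}\ge3/4$). This gives $e^u-1-u\le\tfrac12\cdot\tfrac43\cdot\tfrac94\,\eta^2g^2=\tfrac32\eta^2g^2\le\tfrac74\eta^2g^2$.
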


\begin{proof}[Proof of \Cref{lem:exp-ineq}]
Set $u=\eta z$ and $v=\eta^2g^2$.  Then $|u|\le1/4$, $u^2\le v$, and
$v\le1/16$.  The standard Bernstein exponential inequality
$e^{u-u^2}\le1+u$ for $u\ge-1/2$ holds, for example, as in
\citet[Chapter~2]{cesabianchi2006prediction}.  Since
$r=2v-u^2$ lies in $[0,1/8]$, we also have $e^{-r}\le1-r/2$.  Therefore
\[
  e^{u-2v}
  =e^{u-u^2}e^{-(2v-u^2)}
  \le(1+u)\left(1-\frac{2v-u^2}{2}\right).
\]
Because $u^2\le v$, the last display is at most
\[
  (1+u)\left(1-\frac v2\right)
  =1+u-\frac v2-\frac{uv}{2}
  \le1+u-\frac v4,
\]
where the final inequality uses $u\ge-1/2$.  Substituting back
$u=\eta z$ and $v=\eta^2g^2$ proves \eqref{eq:exp-ineq}.
\end{proof}

\paragraph{Completing the proof.}

\begin{proof}[Proof of \Cref{thm:online-somc}]
Let $Z_t=\sum_{g\in\cG}W_t(g)$.  Conditional on the past, $x_t$, the
announced distribution, and the nonanticipating value of $y_t$, but not the
fresh draw $p_t$, \Cref{lem:exp-ineq} gives
\[
  \E_t\frac{Z_{t+1}}{Z_t}
  \le
  1+\eta\E_t[(y_t-p_t)F_t(p_t)]
  -\frac{\eta^2}{4}\E_t A_t(p_t).
\]
By \Cref{lem:grid-root},
\[
  \eta\E_t[(y_t-p_t)F_t(p_t)]
  \le
  \eta\rho\sqrt{\E_tA_t(p_t)}
  \le
  \frac{\eta^2}{8}\E_tA_t(p_t)+2\rho^2.
\]
Thus $\E_t[Z_{t+1}/Z_t]\le e^{2\rho^2}$, and
$e^{-2\rho^2(t-1)}Z_t$ is a nonnegative supermartingale.  Markov's
inequality gives $Z_{T+1}\le\delta^{-1}e^{2T/N^2}$ with probability at
least $1-\delta$.  For each $g$,
\[
  \frac1{|\cG|}
  \exp\!\left(\eta\Bias_T(g)-2\eta^2\Mass_T(g)\right)
  \le Z_{T+1}.
\]
Taking logarithms and dividing by $\eta$ proves
\eqref{eq:online-somc-main}.
\end{proof}

\subsection{Square-root form by scaling the test class}
\label{sec:somc-sqrt}

Nonsmooth losses will need a version of \Cref{thm:online-somc} with no linear
term in $\Mass_T(g)$ at all, only a square root of it.  No separate
algorithm is needed for this: it is enough to run
\Cref{alg:approachability} at one fixed rate on a test class that already
contains every rescaling of $\cG$, and then optimize the scale after the
fact.  Let
\[
  \cA_T=\{2^{-j}:j=0,1,\ldots,\lceil\log_2T\rceil\}
\]
be a dyadic grid of scales between $1$ and about $1/T$, and let
\[
  \cG^{\mathrm{sc}}
  =\{s\alpha g:s\in\{-1,1\},\ g\in\cG,\ \alpha\in\cA_T\}
  \subseteq\{g:\mathcal X\times\Gamma_N\to[-1,1]\}.
\]
Since $\Bias_T(\alpha g)=\alpha\Bias_T(g)$ and
$\Mass_T(\alpha g)=\alpha^2\Mass_T(g)$, running
\Cref{alg:approachability} on $\cG^{\mathrm{sc}}$ with rate $\eta=1/4$ and
applying \Cref{thm:online-somc} gives, with probability at least $1-\delta$,
simultaneously for every $s,g,\alpha$,
\begin{equation}
  \alpha s\Bias_T(g)-\frac{\alpha^2}2\Mass_T(g)
  \le4\Lambda_T,
  \qquad
  \Lambda_T
  =1+\log\!\left(\frac{2|\cG||\cA_T|}{\delta}\right)+\frac{2T}{N^2}.
  \label{eq:sqrt-somc-scaled}
\end{equation}
Choosing $s$ to match the sign of $\Bias_T(g)$ turns this into, for every
$\alpha\in\cA_T$,
\begin{equation}
  |\Bias_T(g)|
  \le\frac\alpha2\Mass_T(g)+\frac{4\Lambda_T}\alpha.
  \label{eq:sqrt-somc-per-alpha}
\end{equation}

\begin{corollary}[Square-root second-order control]
\label{cor:sqrt-somc}
There is a universal constant $C$ such that, with probability at least
$1-\delta$, simultaneously for every $g\in\cG$,
\begin{equation}
  |\Bias_T(g)|
  \le C\sqrt{\Lambda_T\Mass_T(g)}+C\Lambda_T.
  \label{eq:sqrt-somc}
\end{equation}
\end{corollary}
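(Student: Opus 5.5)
The plan is to start from \eqref{eq:sqrt-somc-per-alpha} and optimize over $\alpha\in\cA_T$. Since \eqref{eq:sqrt-somc-scaled} holds on a single event of probability at least $1-\delta$, simultaneously for all $s$, $g$, $\alpha$, the choice of $\alpha$ may depend on the realized $\Mass_T(g)$. Fix $g$ and write $M=\Mass_T(g)\in[0,T]$. The continuous minimizer of $\alpha M/2+4\Lambda_T/\alpha$ is $\alpha^\star=\sqrt{8\Lambda_T/M}$, which gives value $2\sqrt{2\Lambda_TM}$. We therefore split into cases according to where $\alpha^\star$ falls relative to the dyadic range $[2^{-\lceil\log_2T\rceil},1]$.

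\emph{Case 1: $\alpha^\star\ge1$, i.e.\ $M\le8\Lambda_T$.} Take $\alpha=1$. Then
\[
|\Bias_T(g)|\le M/2+4\Lambda_T\le8\Lambda_T.
\]

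\emph{Case 2: $\alpha^\star<1$.} Note that $\alpha^\star\ge\sqrt{8\Lambda_T/T}\ge\sqrt{8/T}$ because $\Lambda_T\ge1$ and $M\le T$. Since $\sqrt{8/T}>1/T\ge2^{-\lceil\log_2T\rceil}$, the point $\alpha^\star$ lies inside the grid range. Hence there is a dyadic $\alpha\in\cA_T$ with $\alpha^\star/2<\alpha\le\alpha^\star$. With this choice,
\[
\alpha M/2\le\alpha^\star M/2=\sqrt{2\Lambda_TM},
\qquad
4\Lambda_T/\alpha<8\Lambda_T/\alpha^\star=2\sqrt{8\Lambda_TM}.
\]
So $|\Bias_T(g)|\le C\sqrt{\Lambda_TM}$ with, say, $C=8$.

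Combining the two cases gives \eqref{eq:sqrt-somc} with a universal constant $C$ (for instance $C=8$), on the same probability-$1-\delta$ event.

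The only delicate step is confirming that the grid $\cA_T$ extends low enough to contain a point within a factor $2$ of $\alpha^\star$. This rests on $\Lambda_T\ge1$ and $M\le T$, both of which follow from the definitions, since $|g|\le1$.

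For completeness, \eqref{eq:sqrt-somc-scaled} itself follows from \Cref{thm:online-somc} applied to $\cG^{\mathrm{sc}}$ with $\eta=1/4$ and $|\cG^{\mathrm{sc}}|\le2|\cG||\cA_T|$. Multiplying by $\eta^{-1}\cdot\eta=1$ gives
\[
\alpha s\Bias_T(g)\le\tfrac12\alpha^2\Mass_T(g)+4\bigl(\log(2|\cG||\cA_T|/\delta)+2T/N^2\bigr)\le\tfrac12\alpha^2\Mass_T(g)+4\Lambda_T.
\]
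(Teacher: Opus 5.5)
Your proof is correct and matches the paper's argument: the same case split at $M\le 8\Lambda_T$ with $\alpha=1$, and otherwise a dyadic $\alpha$ within a factor $2$ of $\alpha_*=\sqrt{8\Lambda_T/M}$, which lies in the grid range because $\alpha_*\ge\sqrt{8/T}\ge 1/T\ge 2^{-\lceil\log_2T\rceil}$. One harmless slip is that $8\Lambda_T/\alpha_*=\sqrt{8\Lambda_T M}$, not $2\sqrt{8\Lambda_T M}$; since you only use this as an upper bound, it just loosens the constant, and $C=8$ still works.
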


\begin{proof}[Proof of \Cref{cor:sqrt-somc}]
Work on the event where \eqref{eq:sqrt-somc-per-alpha} holds for every
$\alpha\in\cA_T$, which has probability at least $1-\delta$.  Fix $g$ and
write $M=\Mass_T(g)\le T$.  If $M\le8\Lambda_T$, take $\alpha=1\in\cA_T$ in
\eqref{eq:sqrt-somc-per-alpha} to get $|\Bias_T(g)|\le4\Lambda_T+4\Lambda_T
=8\Lambda_T$.  Otherwise, the continuous minimizer of
$(\alpha/2)M+4\Lambda_T/\alpha$ over $\alpha>0$ is
$\alpha_*=\sqrt{8\Lambda_T/M}<1$, with minimal value
$2\sqrt{2\Lambda_TM}$.  Since $M\le T$ and $\Lambda_T\ge1$,
\[
  \alpha_*^2=\frac{8\Lambda_T}M\ge\frac{8\Lambda_T}T\ge\frac8T\ge\frac1{T^2},
\]
so $\alpha_*\ge1/T$, which is at least the smallest scale
$2^{-\lceil\log_2T\rceil}$ in $\cA_T$.  Hence some $\alpha\in\cA_T$ lies
within a factor of $2$ of $\alpha_*$; substituting such an $\alpha$ into
\eqref{eq:sqrt-somc-per-alpha} changes each term by at most a factor of $2$,
giving $|\Bias_T(g)|\le4\sqrt{2\Lambda_TM}$.  Both cases are covered by
\eqref{eq:sqrt-somc} for a suitable universal constant $C$.
\end{proof}

\subsection{Offline conversion}
\label{sec:offline}

We now convert online second-order multicalibration into an offline
guarantee.  The conversion returns a randomized predictor, which is standard
for online-to-batch reductions.  Unlike a naive holdout argument, it preserves
the sequential prediction rules whose realized residuals were controlled
online.

\begin{theorem}[High-probability offline second-order multicalibration]
\label{thm:online-to-batch}
Let $\cG\subseteq\{g:\mathcal X\times\Gamma_N\to[-1,1]\}$ be finite, let
$Z_1,\ldots,Z_m$ be i.i.d. examples from $\cD$, and run
\Cref{alg:approachability} sequentially on this sample with rate
$\eta\in(0,1/4]$.  Let $\widehat p$ be the randomized historical predictor
constructed below.  Then, with probability at least $1-\delta$ over the
training sample and all training randomization, $\widehat p$ satisfies
\begin{equation}
  \boxed{
  \Bias_\cD(g;\widehat p)
  \le5\eta\Mass_\cD(g;\widehat p)
  +\frac{
    2\log(12|\cG|/\delta)+2m/N^2
  }{\eta m}}
  \label{eq:offline-ca}
\end{equation}
simultaneously for every $g\in\cG$.
\end{theorem}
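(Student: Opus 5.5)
The plan is to define $\widehat p$ as the uniformly random historical predictor. Run \Cref{alg:approachability} on $Z_1,\ldots,Z_m$; after $t-1$ rounds the weights $w_t$ are fixed, and the grid-oracle step defines a map $x\mapsto\psi_t(x)$ depending only on $Z_{<t}$ and the past draws. Let $\widehat p(x)$ draw $\tau$ uniformly from $[m]$ and then output $p\sim\psi_\tau(x)$. Then
\[
\Bias_\cD(g;\widehat p)=\frac1m\sum_{t=1}^m b_t(g),\qquad
\Mass_\cD(g;\widehat p)=\frac1m\sum_{t=1}^m s_t(g),
\]
where $b_t(g)=\E_{(X,Y)\sim\cD,\,p\sim\psi_t(X)}[(Y-p)g(X,p)]$ and $s_t(g)$ is the analogous second moment. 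Both are predictable with respect to the filtration $\mathcal F_{t-1}$ generated by the first $t-1$ rounds. Since $Z_t$ is a fresh draw independent of $\mathcal F_{t-1}$, the realized summands $u_t=(y_t-p_t)g(x_t,p_t)$ and $v_t=g(x_t,p_t)^2$ have conditional means $b_t(g)$ and $s_t(g)$. This i.i.d. environment is a special case of the nonanticipating protocol, so \Cref{thm:online-somc} applies.

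The proof has three high-probability events, each with failure probability $\delta/3$, combined by a union bound.

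\textbf{(i) Online bound.} \Cref{thm:online-somc} gives $\sum_t u_t\le2\eta\sum_t v_t+(\log(3|\cG|/\delta)+2m/N^2)/\eta$ for all $g$.

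\textbf{(ii) Bias transfer.} Lower-bound the realized bias by the population bias using a Freedman/Bernstein exponential supermartingale. Since $|u_t|\le|g|$, we have $\E_t[u_t^2]\le s_t$. The process $\exp\!\big(\eta\sum_t(b_t-u_t)-c\eta^2\sum_t s_t\big)$ is a supermartingale for $\eta\le1/4$ and a constant $c\approx1$, because $e^{x-x^2}\le1+x$ for $x\ge-1/2$. Markov's inequality and a union bound over $\cG$ then give
\[
\sum_t b_t\le\sum_t u_t+\eta\sum_t s_t+\frac{\log(3|\cG|/\delta)}{\eta}.
\]

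\textbf{(iii) Mass transfer.} Upper-bound the realized mass by the population mass. Since $0\le v_t\le1$, $\E_t[v_t^2]\le s_t$, and the same exponential-supermartingale argument at scale $\eta$ gives $\sum_t v_t\le(1+O(\eta))\sum_t s_t+\log(3|\cG|/\delta)/\eta$. Multiplying by $2\eta$ yields $2\eta\sum_t v_t\le 2\eta(1+\eta)\sum_t s_t+2\log(3|\cG|/\delta)$. The last term is absorbed into the $\log/\eta$ budget because $\eta\le1/4$, so $2\le 1/(2\eta)$.

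\textbf{Combining.} Chaining (ii), (i), and (iii) gives
\[
\sum_t b_t\le\big(2\eta(1+\eta)+\eta\big)\sum_t s_t+\frac{2\log(12|\cG|/\delta)+2m/N^2}{\eta}.
\]
Since $\eta\le1/4$, the coefficient is at most $3.5\eta\le5\eta$, so there is slack for a constant $c$ slightly larger than $1$ in the Bernstein steps. Dividing by $m$ gives \eqref{eq:offline-ca}.

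The main obstacle is getting the constants in the two Freedman steps to land inside $5\eta$ and the $2\log(12|\cG|/\delta)$ budget. This is the delicate bookkeeping. I would first try to prove both transfers directly with \Cref{lem:exp-ineq}-style inequalities at the same rate $\eta$, rather than with a generic Freedman bound, so that the deviation terms come out as $\eta\cdot\text{mass}+\log/\eta$ with no square roots and no extra log factors. The structural steps themselves (predictability of $b_t,s_t$ and the union bound) are routine.
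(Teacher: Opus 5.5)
Your architecture matches the paper's proof. You apply \Cref{thm:online-somc} on the training transcript, transfer realized bias and realized mass to their conditional means, and take a union bound over three events of probability $\delta/3$. The paper does the two transfers with Freedman's inequality plus $\sqrt{2V\lambda}\le\eta V+\lambda/(2\eta)$. Your scale-tuned exponential supermartingales are a fine substitute.

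The gap is in the final bookkeeping, which you flagged as the delicate step, and as written it does not close. Put $\beta=\log(3|\cG|/\delta)$. Steps (i), (ii), (iii) contribute additive terms $\beta/\eta$, $\beta/\eta$ and $2\beta$, besides $2m/(N^2\eta)$. The target is $2\log(12|\cG|/\delta)/\eta=2\beta/\eta+2\log 4/\eta$. Steps (i) and (ii) already use up the factor $2$ in front of the logarithm. So the extra $2\beta$ from (iii) would have to fit inside $2\log4/\eta$, i.e.\ you would need $\beta\le\log 4/\eta$. That fails as soon as $3|\cG|/\delta>4^{1/\eta}$ (e.g.\ $3|\cG|/\delta>256$ at $\eta=1/4$), and in the applications $\log|\cG|\approx(N+1)\log|\cH|$ is large. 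Your absorption $2\beta\le\beta/(2\eta)$ only turns the total into $2.5\beta/\eta$, because the budget has no spare $\beta/(2\eta)$.

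The repair is to move your unused coefficient slack ($3\eta+2\eta^2\le 3.5\eta$ against $5\eta$) into the additive term. Do this by running the bias transfer at scale $2\eta$ instead of $\eta$. This is allowed because $|b_t-u_t|\le 2$ gives $2\eta(b_t-u_t)\le 1$. Then $e^x\le 1+x+x^2$ for $x\le 1$ gives $\E_{t-1}e^{2\eta(b_t-u_t)}\le\exp\!\big(4\eta^2\E_{t-1}u_t^2\big)\le\exp(4\eta^2 s_t)$. This inequality, not $e^{x-x^2}\le 1+x$, is also the correct justification for your $c=1$: the latter penalizes the realized square, which is not predictable. You then get $\sum_t b_t\le\sum_t u_t+2\eta\sum_t s_t+\beta/(2\eta)$. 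Keeping (i) and your (iii) unchanged,
\[
  \sum_t b_t
  \le(4\eta+2\eta^2)\sum_t s_t+\frac{\beta}{\eta}+\frac{\beta}{2\eta}+2\beta+\frac{2m}{N^2\eta}
  \le\frac92\eta\sum_t s_t+\frac{2\beta+2m/N^2}{\eta},
\]
using $2\eta^2\le\eta/2$ and $2\beta\le\beta/(2\eta)$ for $\eta\le 1/4$. Since $\beta\le\log(12|\cG|/\delta)$, dividing by $m$ gives \eqref{eq:offline-ca}.
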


\paragraph{The randomized historical predictor.}

Let $Z_1,\ldots,Z_m$, with $Z_t=(X_t,Y_t)$, be i.i.d. from a distribution
$\cD$.  Run \Cref{alg:approachability} sequentially on this sample.  Before
round $t$, the history determines a predictor
\[
  \pi_t:\mathcal X\to\Delta(\Gamma_N),
\]
namely the distribution that the online algorithm would produce after seeing a
new context $x$ and the first $t-1$ examples.  Store these predictors.

The offline output $\widehat p$ is defined by:
\begin{enumerate}[label=(\roman*)]
  \item draw $J$ uniformly from $\{1,\ldots,m\}$;
  \item on input $x$, draw $p\sim\pi_J(x)$.
\end{enumerate}
The index $J$ and prediction draw are fresh randomization at test time.

\paragraph{A martingale Bernstein lemma.}

We use the following standard fixed-time form of Freedman's inequality.

\begin{lemma}[Freedman inequality]
\label{lem:freedman}
Let $(X_t,\mathcal F_t)_{t=1}^m$ be martingale differences satisfying
$|X_t|\le1$, and let
\[
  V_m=\sum_{t=1}^m\E[X_t^2\mid\mathcal F_{t-1}].
\]
For every $\lambda>0$, with probability at least $1-2e^{-\lambda}$,
\begin{equation}
  \abs{\sum_{t=1}^mX_t}
  \le\sqrt{2V_m\lambda}+\frac{2\lambda}{3}.
  \label{eq:freedman}
\end{equation}
\end{lemma}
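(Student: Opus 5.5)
The plan is to follow the classical exponential-supermartingale route to Freedman's inequality, and then deal separately with the fact that the variance proxy $V_m$ in \eqref{eq:freedman} is random. It suffices to prove the upper tail $\sum_tX_t\le\sqrt{2V_m\lambda}+2\lambda/3$ with probability at least $1-e^{-\lambda}$. Applying the same argument to $-X_t$, which has the same conditional variances, and taking a union bound then gives the two-sided statement with probability $1-2e^{-\lambda}$.

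\textbf{Step 1: exponential supermartingale.} Write $S_k=\sum_{t\le k}X_t$ and $V_k=\sum_{t\le k}\E[X_t^2\mid\mathcal F_{t-1}]$. For $|x|\le1$ and $\theta\in(0,3)$, the elementary bound
\[
e^{\theta x}\le1+\theta x+\frac{\theta^2x^2}{2(1-\theta/3)}
\]
holds. Taking conditional expectations, using $\E[X_t\mid\mathcal F_{t-1}]=0$, and then applying $1+u\le e^u$ shows that
\[
M_k(\theta)=\exp\!\Bigl(\theta S_k-\frac{\theta^2}{2(1-\theta/3)}V_k\Bigr)
\]
is a nonnegative supermartingale with $\E M_m(\theta)\le1$. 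This step is routine.

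\textbf{Step 2: from a fixed $\theta$ to the self-normalized bound.} For a fixed $\theta$, Markov's inequality gives, with probability at least $1-e^{-\lambda}$,
\[
S_m\le\frac{\lambda}{\theta}+\frac{\theta V_m}{2(1-\theta/3)}.
\]
The right-hand side is minimized at a $\theta$ of order $\sqrt{\lambda/V_m}$, and at that choice it becomes exactly $\sqrt{2V_m\lambda}+2\lambda/3$. This is the standard Bernstein optimization: set $1/\theta=\sqrt{V_m/(2\lambda)}+1/3$. The difficulty is that this optimal $\theta$ depends on the random quantity $V_m$, so the fixed-$\theta$ Markov bound cannot be invoked at it directly.

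I expect Step 2 to be the main obstacle. My first attempt is to avoid peeling over scales of $V_m$, since peeling introduces the $\log\log m$ penalty, and instead to use the classical Freedman/Bercu--Touati argument. That argument works with the joint event $\{S_m\ge x,\ V_m\le v\}$ and shows that its probability is at most $\exp(-x^2/(2v+2x/3))$ for every fixed pair $(x,v)$. I would then reparametrize this bound along the curve $x=\sqrt{2v\lambda}+2\lambda/3$, on which the exponent equals exactly $-\lambda$. The target event $\{S_m\ge\sqrt{2V_m\lambda}+2\lambda/3\}$ should then be handled by conditioning on the realized value $v=V_m$ and applying a stopping-time version of the supermartingale bound, at the first time the partial sum crosses the curve evaluated at the current $V_k$. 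Because $V_k$ is nondecreasing and predictable, the crossing time is a stopping time, and optional stopping applies to $M(\theta)$ with $\theta$ matched to the predictable value $V_k$.

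If this stopping argument only goes through up to constant factors, which is the honest risk here, the fallback is a method-of-mixtures bound. Integrating $M_m(\theta)$ against a prior on $\theta$ gives $\sqrt{2V_m\lambda}$ plus lower-order terms. In the paper's later use, inside \Cref{thm:online-to-batch} and with $V_m\le m$, those terms can be absorbed into the constant inside $\log(12|\cG|/\delta)$. In that case I would state the lemma with the standard cited form of Freedman's inequality and attach this derivation to it.
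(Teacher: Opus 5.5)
You were right to single out Step~2, but the obstacle there is fatal, not technical. With the random $V_m$ inside the threshold and no additional term, the displayed inequality is false, so neither your stopping-time argument nor any other can prove it.

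\textbf{Counterexample.} Let $\epsilon_t$ be i.i.d.\ Rademacher signs, let $\tau$ be the first $k$ with $\sum_{s\le k}\epsilon_s>\sqrt{2k\lambda}+2\lambda/3$, and put $X_t=\epsilon_t\1\{t\le\tau\}$. Since $\{t\le\tau\}\in\mathcal F_{t-1}$, these are martingale differences with $|X_t|\le1$ and $V_m=\min(\tau,m)$. On $\{\tau\le m\}$ one has $S_m=\sum_{s\le\tau}\epsilon_s>\sqrt{2V_m\lambda}+2\lambda/3$. By the law of the iterated logarithm, $\limsup_{k\to\infty}k^{-1/2}\sum_{s\le k}\epsilon_s=+\infty$ almost surely, so $\Pr(\tau\le m)\to1$. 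This exceeds $2e^{-\lambda}$ for every $\lambda>\log2$ once $m$ is large. The same construction shows that any correction to $\lambda$ that makes the bound valid must grow with $m$.

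\textbf{Why your two moves fail.} First, you cannot condition on the realized $V_m$: it is not $\mathcal F_0$-measurable, and the Freedman bound $\Pr(S_m\ge x,\,V_m\le v)\le\exp(-x^2/(2v+2x/3))$ holds only for deterministic $(x,v)$. Second, $M(\theta)$ is a supermartingale only for a fixed $\theta$. With a predictable $\theta_t$ the exponent becomes $\sum_t\theta_tX_t-\sum_t\frac{\theta_t^2}{2(1-\theta_t/3)}\E[X_t^2\mid\mathcal F_{t-1}]$, not $\theta_\tau S_\tau-\frac{\theta_\tau^2}{2(1-\theta_\tau/3)}V_\tau$. A random variance proxy genuinely requires peeling over dyadic ranges of $V_m$, at a $\log\log m$-type cost. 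A mixture also incurs an $m$-dependent cost: it puts a $\log V_m$ term inside the square root, not merely lower-order terms. The paper offers no argument beyond citing the fixed-time Freedman--Bernstein inequality, whose standard form uses a deterministic variance bound, so it does not support the displayed statement either.

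\textbf{A cleaner repair.} Your fallback of weakening the statement is the right instinct, but a cleaner repair costs nothing. The lemma is used only in \Cref{thm:online-to-batch}, and there only through the relaxations $\sqrt{2V\lambda}\le\eta V+\lambda/(2\eta)$ and $\sqrt{2V\lambda}\le V+\lambda/2$, with coefficients fixed in advance. The first display of your Step~2 already yields such linear forms directly, with no optimization over $\theta$: Markov's inequality at a \emph{fixed} $\theta\in(0,3)$ gives $S_m\le\frac{\theta}{2(1-\theta/3)}V_m+\frac{\lambda}{\theta}$ with probability at least $1-e^{-\lambda}$.
\begin{itemize}
\item Apply it to $(\mu_t-Z_t)/2$, whose conditional variance is at most $\nu_t/4$, with $\theta=12\eta/(3+4\eta)$. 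This gives $\sum_t\mu_t\le\sum_tZ_t+\eta V+\lambda/(2\eta)+2\lambda/3$.
\item Apply it to $R_t-\nu_t$ with $\theta=1$. This gives $\sum_tR_t\le\frac{7}{4}V+\lambda$.
\end{itemize}
Both are at least as strong as what the paper extracts from the lemma. So the offline theorem goes through with the lemma replaced by this fixed-$\theta$ statement.
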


\begin{proof}[Proof of \Cref{lem:freedman}]
This is the two-sided fixed-time Freedman--Bernstein inequality; see, for
example, \citet[Chapter~2]{cesabianchi2006prediction}.
\end{proof}

\paragraph{Distributional transfer.}

For a fixed test $g$, define on training round $t$
\[
  Z_t^g=(Y_t-p_t)g(X_t,p_t),
  \qquad
  R_t^g=g(X_t,p_t)^2.
\]
Let
\[
  \mu_t^g=\E[Z_t^g\mid\mathcal F_{t-1}],
  \qquad
  \nu_t^g=\E[R_t^g\mid\mathcal F_{t-1}],
\]
where the conditional expectation integrates over the fresh i.i.d. example
and the fresh prediction draw.  By construction of the historical predictor
mixture,
\begin{equation}
  \Bias_\cD(g;\widehat p)=\frac1m\sum_{t=1}^m\mu_t^g,
  \qquad
  \Mass_\cD(g;\widehat p)=\frac1m\sum_{t=1}^m\nu_t^g.
  \label{eq:historical-predictor-identities}
\end{equation}

\begin{proof}[Proof of \Cref{thm:online-to-batch}]
Let
\[
  \lambda=\log(12|\cG|/\delta),
  \qquad
  \Lambda_m=2\lambda+\frac{2m}{N^2},
\]
and fix $g\in\cG$.  Abbreviate
$Z_t=Z_t^g$, $R_t=R_t^g$, $\mu_t=\mu_t^g$, $\nu_t=\nu_t^g$, and
$V=\sum_t\nu_t$.  Since $R_t,\nu_t\in[0,1]$, $|R_t-\nu_t|\le1$, and
\Cref{lem:freedman} applies to $R_t-\nu_t$ directly.  Since $Z_t,\mu_t$ only
lie in $[-1,1]$, their difference satisfies $|Z_t-\mu_t|\le2$ rather than
the unit bound \Cref{lem:freedman} assumes, so we instead apply the lemma to
the rescaled difference $(Z_t-\mu_t)/2$ -- using
$\E[(Z_t-\mu_t)^2\mid\mathcal F_{t-1}]\le\E[Z_t^2\mid\mathcal F_{t-1}]
\le\nu_t$, since $Z_t^2\le R_t$ pointwise -- and multiply the resulting
inequality by $2$.  Union bound over $g$.  With probability at least
$1-\delta/3$, simultaneously for all $g$,
\[
  \sum_tR_t\le2V+2\lambda,
  \qquad
  \sum_t\mu_t
  \le\sum_tZ_t+\sqrt{2V\lambda}+\frac{4\lambda}{3}.
\]
By a union bound, \Cref{thm:online-somc} also holds on the training
transcript with failure probability $\delta/3$.  On the intersection of
these events,
\begin{align*}
  \sum_t\mu_t
  &\le2\eta\sum_tR_t
      +\frac{\log(3|\cG|/\delta)+2m/N^2}{\eta}
      +\sqrt{2V\lambda}+\frac{4\lambda}{3}\\
  &\le4\eta V+4\eta\lambda
      +\frac{\log(3|\cG|/\delta)+2m/N^2}{\eta}
      +\sqrt{2V\lambda}+\frac{4\lambda}{3}\\
  &\le5\eta V+\frac{\Lambda_m}{\eta}.
\end{align*}
The last line uses
$\sqrt{2V\lambda}\le\eta V+\lambda/(2\eta)$ and $\eta\le1/4$, which give
$4\eta\lambda+\lambda/(2\eta)+4\lambda/3\le2\lambda/\eta$ throughout this
range with room to spare.  Divide by
$m$ and use \eqref{eq:historical-predictor-identities}.
\end{proof}

Exactly as in \Cref{sec:somc-sqrt}, running the historical-predictor
construction on the signed, geometrically scaled test class
$\cG^{\mathrm{sc}}$ and optimizing the scale after the fact removes the
linear $\Mass_\cD$ term at the cost of a square root.

\begin{corollary}[Offline square-root second-order control]
\label{cor:offline-sqrt-somc}
There is a universal constant $C$ such that, with probability at least
$1-\delta$, simultaneously for every $g\in\cG$,
\begin{equation}
  |\Bias_\cD(g;\widehat p)|
  \le
  C\sqrt{\frac{\Lambda_m}m\Mass_\cD(g;\widehat p)}
  +C\frac{\Lambda_m}m,
  \label{eq:offline-sqrt-somc}
\end{equation}
where $\Lambda_m=1+2\log(24|\cG||\cA_m|/\delta)+2m/N^2$ and
$\cA_m=\{2^{-j}:0\le j\le\lceil\log_2m\rceil\}$.
\end{corollary}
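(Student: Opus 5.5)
The plan is to mirror the proof of \Cref{cor:sqrt-somc}, with \Cref{thm:online-to-batch} playing the role that \Cref{thm:online-somc} played there. First, run the historical-predictor construction on $\cG^{\mathrm{sc}}$, built with the dyadic grid $\cA_m$ in place of $\cA_T$, at the fixed rate $\eta=1/4$. Then apply \Cref{thm:online-to-batch} to this finite class, which has size $|\cG^{\mathrm{sc}}|=2|\cG||\cA_m|$. Note that $\cG^{\mathrm{sc}}$ still consists of $[-1,1]$-valued tests because $\alpha\le1$.

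On the resulting event of probability at least $1-\delta$, the bound holds simultaneously for every triple $(s,g,\alpha)$. The relevant quantities scale as
\[
\Bias_\cD(s\alpha g;\widehat p)=s\alpha\Bias_\cD(g;\widehat p),\qquad \Mass_\cD(s\alpha g;\widehat p)=\alpha^2\Mass_\cD(g;\widehat p),
\]
and the predictor $\widehat p$ is one fixed object for all of these tests. With $\eta=1/4$ the additive term of \eqref{eq:offline-ca} is
\[
\frac{4\bigl(2\log(24|\cG||\cA_m|/\delta)+2m/N^2\bigr)}{m}\le\frac{4\Lambda_m}{m}.
\]
This gives $s\alpha\Bias_\cD(g)\le\tfrac54\alpha^2\Mass_\cD(g)+4\Lambda_m/m$. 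Choosing $s$ to match the sign of $\Bias_\cD(g)$ and dividing by $\alpha$ yields the offline analogue of \eqref{eq:sqrt-somc-per-alpha}: for every $\alpha\in\cA_m$,
\[
|\Bias_\cD(g;\widehat p)|\le\tfrac54\alpha M+\frac{4\Lambda_m}{\alpha m},\qquad M=\Mass_\cD(g;\widehat p)\in[0,1].
\]

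Next, optimize over $\alpha$ using the same two cases as in \Cref{cor:sqrt-somc}. If $mM\le c\Lambda_m$ for a suitable constant $c$, take $\alpha=1$; the bound is then $O(\Lambda_m/m)$. Otherwise the continuous minimizer is $\alpha_*\asymp\sqrt{\Lambda_m/(mM)}$, and it is less than $1$ in this case. It is also at least $\sqrt{\Lambda_m/m}\ge1/\sqrt m\ge1/m$, using $M\le1$ and $\Lambda_m\ge1$. So $\alpha_*$ lies in the range $[2^{-\lceil\log_2 m\rceil},1]$ covered by $\cA_m$, and some dyadic $\alpha\in\cA_m$ is within a factor $2$ of it. Substituting that $\alpha$ gives $|\Bias_\cD(g)|=O\bigl(\sqrt{\Lambda_m M/m}\bigr)$, and combining the two cases gives \eqref{eq:offline-sqrt-somc} for a universal constant $C$.

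The only delicate points are the bookkeeping. The log factor $\log(12|\cG^{\mathrm{sc}}|/\delta)$ coming from \Cref{thm:online-to-batch} equals the $\log(24|\cG||\cA_m|/\delta)$ in the definition of $\Lambda_m$. The grid must reach down to about $1/m$, which is why $M\le1$ is used in place of $M\le T$. Neither step should be a real obstacle.
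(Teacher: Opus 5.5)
Your proposal is correct and follows essentially the same route as the paper: you apply \Cref{thm:online-to-batch} at $\eta=1/4$ to the signed dyadic class $\cG^{\mathrm{sc}}$ of size $2|\cG||\cA_m|$ (the source of $\log(24|\cG||\cA_m|/\delta)$ in $\Lambda_m$), choose the sign, and optimize over $\alpha\in\cA_m$ with the same two-case split as in \Cref{cor:sqrt-somc}. The only cosmetic difference is that you lower-bound $\alpha_*$ directly via $M\le1$, whereas the paper rescales to $M'=mM\le m$; the two are equivalent.
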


\begin{proof}[Proof of \Cref{cor:offline-sqrt-somc}]
Apply \Cref{thm:online-to-batch} with rate $\eta=1/4$ to
$\cG^{\mathrm{sc}}=\{s\alpha g:s\in\{-1,1\},g\in\cG,\alpha\in\cA_m\}$, whose
cardinality is $2|\cG||\cA_m|$, using
$\Bias_\cD(\alpha g;\widehat p)=\alpha\Bias_\cD(g;\widehat p)$ and
$\Mass_\cD(\alpha g;\widehat p)=\alpha^2\Mass_\cD(g;\widehat p)$.  This gives,
with probability at least $1-\delta$, simultaneously for every $s,g,\alpha$,
\[
  \alpha s\Bias_\cD(g;\widehat p)
  \le\frac54\alpha^2\Mass_\cD(g;\widehat p)+\frac{4\Lambda_m}m,
\]
using $2\log(12|\cG^{\mathrm{sc}}|/\delta)\le\Lambda_m$.  Choosing $s$ to
match the sign of $\Bias_\cD(g;\widehat p)$ and writing
$M=\Mass_\cD(g;\widehat p)\le1$ gives, for every $\alpha\in\cA_m$,
\[
  |\Bias_\cD(g;\widehat p)|
  \le\frac{5\alpha}4M+\frac{4\Lambda_m}{\alpha m}
  =\frac1m\left(\frac{5\alpha}4M'+\frac{4\Lambda_m}\alpha\right),
  \qquad M'=mM\le m.
\]
The bracketed expression is exactly of the form optimized in the proof of
\Cref{cor:sqrt-somc}, with $m$ in place of $T$, $M'$ in place of
$\Mass_T(g)$, and $\Lambda_m$ in place of $\Lambda_T$ (the constant $5/4$ in
place of $1/2$ only changes the case-split threshold and the universal
constant, not the argument).  That optimization over $\alpha\in\cA_m$ bounds
the bracket by $C\sqrt{\Lambda_mM'}=C\sqrt{\Lambda_mmM}$ for a suitable
universal $C$, and dividing by $m$ gives
\[
  |\Bias_\cD(g;\widehat p)|
  \le\frac{C\sqrt{\Lambda_mmM}}m
  =C\sqrt{\frac{\Lambda_m}mM},
\]
which is \eqref{eq:offline-sqrt-somc} (the additive term $C\Lambda_m/m$
appears in the same way in the small-$M'$ case of that optimization).
\end{proof}

The offline theorem is stated for a finite test class, but the guarantee
extends from a finite uniform cover to the full class with only the following
approximation cost.

\begin{lemma}[Uniform-cover transfer]
\label{lem:cover-transfer}
Let $\widetilde\cG$ be a uniform $\rho$-cover of a class $\cG$.  If a
randomized predictor $\hat p$ satisfies distributional $(c,a)$ second-order
multicalibration for $\widetilde\cG$, then it satisfies distributional
$(2c,\,a+(2c+1)\rho)$ second-order multicalibration for $\cG$.
\end{lemma}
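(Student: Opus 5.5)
Fix $g\in\cG$ and pick $\widetilde g\in\widetilde\cG$ with $\sup_{x,p}|g(x,p)-\widetilde g(x,p)|\le\rho$; write $\epsilon=g-\widetilde g$, so $|\epsilon|\le\rho$ pointwise. The plan is to transfer the bias and the mass separately, then combine them using the distributional guarantee for $\widetilde g$. The approximation errors are absorbed using only the boundedness facts $|Y-p|\le1$, $|g|,|\widetilde g|\le1$, and $\rho$ small. (If $\rho\ge1$ the claim is trivial anyway, since the bias is at most $1\le a+(2c+1)\rho$.)

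\emph{Step 1 (bias).} By linearity,
\[
\Bias_\cD(g;\hat p)=\Bias_\cD(\widetilde g;\hat p)+\E[(Y-p)\epsilon(X,p)]\le\Bias_\cD(\widetilde g;\hat p)+\rho .
\]
This produces the ``$+\rho$'' part of the additive term.

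\emph{Step 2 (mass).} I need an upper bound on $\Mass_\cD(\widetilde g)$ in terms of $\Mass_\cD(g)$, since $c\,\Mass_\cD(\widetilde g)$ appears on the right-hand side. Pointwise, $\widetilde g^2=(g-\epsilon)^2$. The crude inequality $(u+v)^2\le 2u^2+2v^2$ gives $\widetilde g^2\le 2g^2+2\rho^2$, which accounts for the factor $2c$ in front of the mass. Alternatively, expanding directly, $\widetilde g^2=g^2-2g\epsilon+\epsilon^2\le g^2+2\rho|g|+\rho^2$, and $2\rho|g|\le g^2+\rho^2$ again yields $\widetilde g^2\le 2g^2+2\rho^2$. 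Since $\rho\le1$, we have $2\rho^2\le2\rho$. Taking expectations,
\[
\Mass_\cD(\widetilde g)\le2\Mass_\cD(g)+2\rho .
\]

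\emph{Step 3 (combine).} Applying the $(c,a)$ guarantee for $\widetilde g$ and then Step 2,
\[
\Bias_\cD(g)\le c\,\Mass_\cD(\widetilde g)+a+\rho\le 2c\,\Mass_\cD(g)+a+(2c+1)\rho .
\]
This is exactly the claimed $(2c,\,a+(2c+1)\rho)$ guarantee for $\cG$.

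The only step needing care is Step 2: the mass must be compared in the direction that upper-bounds $\Mass(\widetilde g)$ by $\Mass(g)$. The factor $2$ is what lets the cross term be absorbed without a square-root dependence on the mass, and the reduction $\rho^2\le\rho$ rests on the trivial-case remark above.
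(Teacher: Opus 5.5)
Your proof is correct and is essentially the paper's argument: you shift the bias by at most $\rho$ using $|Y-p|\le1$, bound $\widetilde g^2\le 2g^2+2\rho^2\le 2g^2+2\rho$, and substitute into the $(c,a)$ guarantee for $\widetilde g$. Your separate treatment of $\rho\ge1$ is a harmless addition, since the paper uses $2\rho^2\le2\rho$ without comment. Like the paper, you implicitly use $c\ge0$ when multiplying the mass bound by $c$, and your trivial-case remark additionally uses $a\ge0$.
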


\begin{proof}[Proof of \Cref{lem:cover-transfer}]
Fix $g\in\cG$ and choose $\widetilde g$ with
$\|g-\widetilde g\|_\infty\le\rho$.  Since $|Y-p|\le1$,
\[
  \Bias_\cD(g)\le\Bias_\cD(\widetilde g)+\rho.
\]
Also $\widetilde g^2\le2g^2+2\rho^2\le2g^2+2\rho$, and hence
\[
  \Bias_\cD(g)\le2c\Mass_\cD(g)+a+(2c+1)\rho.
\]
\end{proof}

\section{Lipschitz Proper Losses}
\label{sec:proper-families}

This section instantiates the generic second-order control of
\Cref{sec:somc} for a first, concrete family of proper losses: those whose
partial losses are Lipschitz.  We start with a single fixed Lipschitz proper
loss, pass to finite families of such losses, and then, in the next
subsection, use monotone transforms to reach every Lipschitz proper loss at
once.  Two further families -- bounded proper losses and convex Lipschitz
proper losses -- are treated in the sections that follow this one.

The section's key inequality -- and, later, the key inequality of the convex
Lipschitz family in \Cref{sec:convex-lipschitz} -- both come from one fact
about convex functions in general: the curvature of a convex function
controls how much its Bregman divergence must grow away from a point.  We
record this fact once, as a lemma about a general convex $F$, and specialize
it below to the negative Bayes risk $F_\ell$ to obtain
\Cref{lem:quadratic-growth}, the inequality this section is built around.

\begin{lemma}[Smooth Bregman lower bound]
\label{lem:smooth-bregman}
Let $F:[0,1]\to\R$ be convex and differentiable, and suppose $F'$ is
$W$-Lipschitz.  Then
\begin{equation}
  D_F(p,q)
  \ge\frac{(F'(p)-F'(q))^2}{2W}
  \label{eq:smooth-bregman}
\end{equation}
for every $p,q\in[0,1]$.
\end{lemma}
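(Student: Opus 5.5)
The plan is to prove the one-dimensional co-coercivity inequality for convex functions with Lipschitz derivative, using the convention $D_F(p,q)=F(p)-F(q)-F'(q)(p-q)$ from \Cref{lem:proper-identity}. Fix $q$ and consider the auxiliary convex function $\psi(r)=F(r)-F(q)-F'(q)(r-q)$ on $[0,1]$. It satisfies $\psi\ge0$, $\psi(q)=0$, and $\psi'(r)=F'(r)-F'(q)$ is $W$-Lipschitz. The claim then becomes $\psi(p)\ge\psi'(p)^2/(2W)$. On all of $\R$ this would follow from the descent lemma: stepping from $p$ to $p-\psi'(p)/W$ lowers $\psi$ by at least $\psi'(p)^2/(2W)$, and $\psi\ge0$ gives the bound. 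The only real work is that the domain is $[0,1]$, so the step may leave the interval.

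\textbf{Step 1 (monotonicity and sign).} Since $\psi'$ is nondecreasing with $\psi'(q)=0$, we have $\psi'(r)$ of the same sign as $r-q$. Assume without loss of generality that $p>q$ and $d:=\psi'(p)>0$; the other case is symmetric. On $[q,p]$, Lipschitzness gives $\psi'(r)\ge\max(0,\,d-W(p-r))$.

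\textbf{Step 2 (integrate instead of stepping).} I avoid leaving the domain by writing $\psi(p)=\int_q^p\psi'(r)\,dr$, which is valid since $\psi'$ is Lipschitz and hence absolutely continuous. The lower bound $\max(0,d-W(p-r))$ is positive only for $r>p-d/W$. This point lies in $[q,p)$ because $d=\psi'(p)-\psi'(q)\le W(p-q)$. So the integral is at least the area of a triangle with height $d$ and base $d/W$, which gives $\psi(p)\ge d^2/(2W)$. This is exactly \eqref{eq:smooth-bregman}.

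The step I expect to need care is this boundary issue, and the integral form resolves it: the triangle fits inside $[q,p]$ precisely because of the Lipschitz bound between $p$ and $q$. The case $p<q$ is the mirror argument with $d<0$, and $p=q$ is trivial.
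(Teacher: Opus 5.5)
Your proof is correct and is essentially the paper's argument: the paper also writes $D_F(p,q)=\int_q^p(F'(u)-F'(q))\,du$, bounds the integrand below by $\max\{0,a-W(p-u)\}$ with $a=F'(p)-F'(q)$, and uses $a\le W(p-q)$ to fit the triangle of area $a^2/(2W)$ inside $[q,p]$, handling $p<q$ by symmetry. Your auxiliary function $\psi$ and the descent-lemma remark are only motivation and repackaging of that same computation.
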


\begin{proof}[Proof of \Cref{lem:smooth-bregman}]
Assume $p>q$ and let $a=F'(p)-F'(q)\ge0$.  For $u\in[q,p]$,
\[
  F'(u)-F'(q)
  \ge\max\{0,a-W(p-u)\}.
\]
Since $a\le W(p-q)$,
\begin{align*}
  D_F(p,q)
  &=\int_q^p(F'(u)-F'(q))\,du\\
  &\ge\int_{p-a/W}^p(a-W(p-u))\,du
  =\frac{a^2}{2W}.
\end{align*}
The case $p<q$ is symmetric.
\end{proof}

We call a proper loss $\ell$ \emph{$L$-Lipschitz} if both partial losses
satisfy $|\ell(p,y)-\ell(q,y)|\le L|p-q|$ for $y\in\{0,1\}$.  The next lemma
is the key loss-specific inequality used throughout this section: it upgrades
the identity of \Cref{lem:proper-identity} to a quantitative statement in
which the drift term controls the square of the bias coefficient, with an
explicit constant governed by $L$.

\begin{lemma}[Quadratic-growth inequality]
\label{lem:quadratic-growth}
If $\ell$ is an $L$-Lipschitz proper loss, then $F_\ell$ is differentiable,
$F_\ell'=-\Delta_\ell$ is $2L$-Lipschitz, and for every $p,q\in[0,1]$,
\begin{equation}
  L_p(q)-L_p(p)
  \ge\frac1{4L}(\Delta_\ell(p)-\Delta_\ell(q))^2.
  \label{eq:quadratic-growth}
\end{equation}
Consequently,
\begin{equation}
  \ell(p,y)-\ell(q,y)
  \le(y-p)(\Delta_\ell(p)-\Delta_\ell(q))
  -\frac1{4L}(\Delta_\ell(p)-\Delta_\ell(q))^2.
  \label{eq:lipschitz-key-inequality}
\end{equation}
Conversely, if \eqref{eq:quadratic-growth} holds with $4L$ replaced by a
constant $K$, then both partial losses are $K/2$-Lipschitz.
\end{lemma}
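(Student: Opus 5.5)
The argument runs in four steps: establish differentiability of $F_\ell$, bound the Lipschitz constant of $\Delta_\ell$, invoke \Cref{lem:smooth-bregman}, and then treat the converse separately.

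\emph{Differentiability and the Lipschitz bound.} By \Cref{lem:proper-identity}, $-\Delta_\ell(q)\in\partial F_\ell(q)$ for every $q$. Since each partial loss is $L$-Lipschitz,
\[
  |\Delta_\ell(p)-\Delta_\ell(q)|\le|\ell(p,1)-\ell(q,1)|+|\ell(p,0)-\ell(q,0)|\le2L|p-q|,
\]
so $-\Delta_\ell$ is a continuous selection of $\partial F_\ell$. A convex function on $[0,1]$ whose subdifferential admits a continuous selection is differentiable on the interior. The reason is that the one-sided derivatives $F'_-(q)\le F'_+(q)$ satisfy $F'_+(q)=\lim_{u\downarrow q}F'_-(u)$, and a continuous selection forces the left and right limits to agree. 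At the endpoints we simply interpret $F_\ell'$ as this selection. Hence $F_\ell'=-\Delta_\ell$, and this derivative is $2L$-Lipschitz.

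\emph{Quadratic growth.} By \Cref{lem:proper-identity}, $L_p(q)-L_p(p)=D_{F_\ell}(p,q)$. Applying \Cref{lem:smooth-bregman} with $W=2L$ gives
\[
  D_{F_\ell}(p,q)\ge\frac{(\Delta_\ell(p)-\Delta_\ell(q))^2}{4L},
\]
which is \eqref{eq:quadratic-growth}. Substituting this into \eqref{eq:proper-identity} yields \eqref{eq:lipschitz-key-inequality} directly. The only point needing care here is that \Cref{lem:smooth-bregman} integrates $F'$ over $[q,p]$. This is legitimate because $F_\ell'$ is Lipschitz, so $F_\ell$ is $C^1$ and absolutely continuous.

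\emph{Converse.} Suppose $L_p(q)-L_p(p)\ge(\Delta_\ell(p)-\Delta_\ell(q))^2/K$ for all $p,q$. The plan is to show that $\Delta_\ell$ is $K/2$-Lipschitz and then recover the partial losses from it. Properness gives the first-order structure: from \eqref{eq:proper-div-formula}, $L_p(q)-L_p(p)=F_\ell(p)-F_\ell(q)+(p-q)\Delta_\ell(q)$. Adding this divergence to the one with $p$ and $q$ swapped gives
\[
  (p-q)(\Delta_\ell(q)-\Delta_\ell(p))\ge\frac{2}{K}(\Delta_\ell(p)-\Delta_\ell(q))^2.
\]
Hence $|\Delta_\ell(p)-\Delta_\ell(q)|\le\tfrac K2|p-q|$.

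For the partial losses, \eqref{eq:proper-div-formula} gives the canonical forms $\ell(q,0)=-F_\ell(q)-q\Delta_\ell(q)$ and $\ell(q,1)=-F_\ell(q)+(1-q)\Delta_\ell(q)$. Differentiating with $F_\ell'=-\Delta_\ell$ yields
\[
  \frac{d}{dq}\ell(q,0)=-q\Delta_\ell'(q),\qquad \frac{d}{dq}\ell(q,1)=(1-q)\Delta_\ell'(q),
\]
both of magnitude at most $|\Delta_\ell'|\le K/2$, since $\Delta_\ell$ is Lipschitz and hence differentiable almost everywhere. This step is the main obstacle, because the almost-everywhere differentiability has to be handled carefully. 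A cleaner route avoids derivatives entirely: compute $\ell(p,0)-\ell(q,0)=-(L_q(p)-L_q(q))-q(\Delta_\ell(p)-\Delta_\ell(q))$ and bound each piece directly. The divergence term $L_q(p)-L_q(q)$ can be bounded using the monotonicity of $\Delta_\ell$, and the second term by the $K/2$-Lipschitz bound just proved. Either route gives $K/2$-Lipschitz partial losses.
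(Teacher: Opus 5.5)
Your proposal is correct and follows the paper's proof essentially step for step: the continuous $2L$-Lipschitz selection $-\Delta_\ell\in\partial F_\ell$ forces differentiability, \Cref{lem:smooth-bregman} with $W=2L$ gives the quadratic growth, and for the converse you add the two divergences to get the $K/2$-Lipschitz bound on $\Delta_\ell$ and then pass to the partial losses through the canonical formulas. One small slip, harmless because your differentiation route already finishes the argument: in the optional ``cleaner route'' the identity should read $\ell(p,0)-\ell(q,0)=+\bigl(L_q(p)-L_q(q)\bigr)-q\bigl(\Delta_\ell(p)-\Delta_\ell(q)\bigr)$, since $\ell(p,0)=L_q(p)-q\Delta_\ell(p)$, so the divergence term enters with a plus sign rather than a minus sign.
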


\begin{proof}[Proof of \Cref{lem:quadratic-growth}]
The triangle inequality gives
$|\Delta_\ell(p)-\Delta_\ell(q)|\le2L|p-q|$.  Thus, by
\Cref{lem:proper-identity}, $g(q)=-\Delta_\ell(q)$ is a continuous
$2L$-Lipschitz selection from $\partial F_\ell(q)$.  For $x<z$, the
subgradient inequalities give
$g(x)\le(F_\ell(z)-F_\ell(x))/(z-x)\le g(z)$; letting $z\downarrow x$ and
$z\uparrow x$ shows that $F_\ell$ is differentiable with $F_\ell'=g$, hence
$F_\ell'$ is $2L$-Lipschitz.  \Cref{lem:smooth-bregman} with $W=2L$, together
with the Bregman-divergence identity of \Cref{lem:proper-identity}, gives
\eqref{eq:quadratic-growth}; substituting into \Cref{lem:proper-identity}
proves \eqref{eq:lipschitz-key-inequality}.

For the converse, apply the assumed inequality to $(p,q)$ and to $(q,p)$ and
add.  The identity
\[
  [L_p(q)-L_p(p)]+[L_q(p)-L_q(q)]
  =-(p-q)(\Delta_\ell(p)-\Delta_\ell(q))
\]
and monotonicity of $\Delta_\ell$ (which is nonincreasing, since
$-\Delta_\ell\in\partial F_\ell$ and $F_\ell$ is convex) give
$|\Delta_\ell(p)-\Delta_\ell(q)|\le(K/2)|p-q|$.  The canonical formulas
$\ell(p,0)=L_p(p)-p\Delta_\ell(p)$,
$\ell(p,1)=L_p(p)+(1-p)\Delta_\ell(p)$ then imply that both
partial losses are $K/2$-Lipschitz.
\end{proof}

Lipschitzness of the partial losses is often introduced as a separate
condition from smoothness of the negative Bayes risk, but for proper losses
the two coincide up to a factor of $2$, so there is no need to treat them as
two different hypotheses.

\begin{corollary}[Lipschitzness and smoothness coincide]
\label{cor:lipschitz-smooth-equivalence}
If both partial losses of $\ell$ are $L$-Lipschitz, then $F_\ell'$ is
$2L$-Lipschitz (\Cref{lem:quadratic-growth}).  Conversely, if $F_\ell'$ is
$W$-Lipschitz, then both partial losses of $\ell$ are $W$-Lipschitz.
\end{corollary}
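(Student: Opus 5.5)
The first direction is already contained in \Cref{lem:quadratic-growth}, so the plan is to cite it and spend the effort on the converse. For that, assume $F_\ell'$ exists and is $W$-Lipschitz. Since \Cref{lem:proper-identity} gives $-\Delta_\ell(q)\in\partial F_\ell(q)$, and a differentiable convex function has singleton subdifferential at interior points, we get $\Delta_\ell=-F_\ell'$ on $(0,1)$. At the endpoints we will use the one-sided derivative and continuity, or simply extend by continuity. Consequently $\Delta_\ell$ is $W$-Lipschitz and nonincreasing.

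The partial losses are then written through the canonical formulas already used in the proof of \Cref{lem:quadratic-growth}. With $B(p)=L_p(p)=-F_\ell(p)$ denoting the Bayes risk, these read
\[
  \ell(p,0)=B(p)-p\Delta_\ell(p),
  \qquad
  \ell(p,1)=B(p)+(1-p)\Delta_\ell(p).
\]
Since $B'=-F_\ell'=\Delta_\ell$, differentiating gives
\[
  \frac{d}{dp}\ell(p,0)=\Delta_\ell(p)-\Delta_\ell(p)-p\Delta_\ell'(p)=-p\,\Delta_\ell'(p),
  \qquad
  \frac{d}{dp}\ell(p,1)=(1-p)\,\Delta_\ell'(p).
\]
Here $\Delta_\ell'$ exists almost everywhere and satisfies $|\Delta_\ell'|\le W$, because $\Delta_\ell$ is $W$-Lipschitz.

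Because $p,1-p\in[0,1]$, both derivatives are bounded in absolute value by $W$ almost everywhere. To avoid relying on a.e.\ differentiation, we will argue with increments instead: for $p<q$,
\[
  \ell(q,0)-\ell(p,0)=\bigl[B(q)-B(p)-p(\Delta_\ell(q)-\Delta_\ell(p))\bigr]-(q-p)\Delta_\ell(q).
\]
We then write $B(q)-B(p)=\int_p^q\Delta_\ell$, which gives
\[
  \ell(q,0)-\ell(p,0)=\int_p^q\bigl(\Delta_\ell(u)-\Delta_\ell(q)\bigr)\,du-p\bigl(\Delta_\ell(q)-\Delta_\ell(p)\bigr).
\]

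The main obstacle is to get the constant $W$ rather than $2W$ from this representation. Since $\Delta_\ell$ is nonincreasing, the integrand $\Delta_\ell(u)-\Delta_\ell(q)$ is nonnegative, and $-p(\Delta_\ell(q)-\Delta_\ell(p))$ is nonnegative as well. So both terms have the same sign, and their sum is at most
\[
  (q-p)\bigl(\Delta_\ell(p)-\Delta_\ell(q)\bigr)+p\bigl(\Delta_\ell(p)-\Delta_\ell(q)\bigr)=q\bigl(\Delta_\ell(p)-\Delta_\ell(q)\bigr).
\]
Using $q\le 1$ and the $W$-Lipschitz bound on $\Delta_\ell$, this is at most $W(q-p)$. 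The outcome-$1$ partial loss is handled symmetrically, with the factor $(1-p)$ playing the role of $p$, or equivalently by the reflection $p\mapsto 1-p$. This gives the $W$-Lipschitz bound for both partial losses.
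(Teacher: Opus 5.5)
Your argument is essentially the paper's. Both use the canonical formulas $\ell(p,0)=B(p)-p\Delta_\ell(p)$ and $\ell(p,1)=B(p)+(1-p)\Delta_\ell(p)$, whose derivatives $-p\Delta_\ell'=pF_\ell''$ and $(1-p)\Delta_\ell'=-(1-p)F_\ell''$ are bounded by $W$. Your signs are the correct ones; the paper's display of $\ell(p,0)$ in this proof has a sign slip, although the derivative it states is right. Your increment computation is a clean replacement for the a.e.\ differentiation step, which in the paper tacitly needs the partial losses to be absolutely continuous. The bookkeeping checks out and gives $0\le\ell(q,0)-\ell(p,0)\le q\bigl(\Delta_\ell(p)-\Delta_\ell(q)\bigr)\le W(q-p)$, so you also get monotonicity for free.

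The one step that does not follow from the hypotheses is the sentence about the endpoints. At $0$ and $1$, the subdifferential of a convex function on $[0,1]$ is a half-line. So \Cref{lem:proper-identity} only gives $\Delta_\ell(0)\ge-F_\ell'(0)$ and $\Delta_\ell(1)\le-F_\ell'(1)$, and continuity of $\Delta_\ell$ there is not assumed. Your own formulas show exactly what is left free. $\ell(\cdot,0)$ sees $\Delta_\ell(0)$ only through the factor $p=0$, and $\ell(\cdot,1)$ sees $\Delta_\ell(1)$ only through the factor $1-p=0$. So the undetermined values are $\ell(1,0)$ and $\ell(0,1)$, and these can genuinely jump.

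For instance, take $\ell(p,0)=p^2$ and $\ell(p,1)=(1-p)^2$, but redefine $\ell(0,1)=5$. Since $L_p(0)=5p\ge p(1-p)=L_p(p)$, the loss is still proper and its Bayes risk is unchanged. Hence $F_\ell(p)=p^2-p$ has a $2$-Lipschitz derivative, yet $\ell(\cdot,1)$ is discontinuous at $0$. The converse as literally stated is therefore false.

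The fix is to add the convention $\Delta_\ell=-F_\ell'$ on all of $[0,1]$. Equivalently, $\ell$ is the canonical loss generated by $F_\ell$, or its partial losses are continuous at $0$ and $1$. The paper's proof uses this convention silently when it writes the canonical formulas with $F_\ell'$ at every $p$. State it as a hypothesis instead of ``extending by continuity,'' and your proof is complete.
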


\begin{proof}[Proof of \Cref{cor:lipschitz-smooth-equivalence}]
Only the converse needs proof.  If $F_\ell'$ is $W$-Lipschitz, then
$F_\ell''$ exists almost everywhere with $|F_\ell''|\le W$.  Differentiating
the canonical formulas
$\ell(p,0)=-F_\ell(p)-pF_\ell'(p)$ and
$\ell(p,1)=-F_\ell(p)-(1-p)F_\ell'(p)$
gives $\ell(\cdot,0)'(p)=pF_\ell''(p)$ and
$\ell(\cdot,1)'(p)=-(1-p)F_\ell''(p)$, both bounded in absolute value by $W$
since $p,1-p\le1$.
\end{proof}

Convexity of the partial losses is not enough by itself to give any
quadratic-growth constant, however large -- Lipschitzness (equivalently,
smoothness) is doing real work above.

\begin{example}[Convex bounded proper loss with no quadratic-growth constant]
\label{ex:convex-infinite-kappa}
Let
\[
  w(p)=\frac1{\sqrt{p(1-p)}},
\]
and define
\[
  \ell(p,0)=\int_0^p u w(u)\,du,
  \qquad
  \ell(p,1)=\int_p^1(1-u)w(u)\,du.
\]
The loss is bounded, strictly proper, and both partial losses are convex.
Its negative Bayes risk has curvature $F''(p)=w(p)$, which diverges as
$p\to0$ or $p\to1$.  Consequently $F_\ell'=-\Delta_\ell$ is not
$W$-Lipschitz for any finite $W$, so no quadratic-growth inequality
$L_p(q)-L_p(p)\ge(1/K)(\Delta_\ell(p)-\Delta_\ell(q))^2$ holds for any finite
$K$: locally,
\[
  \frac{(F'(p)-F'(q))^2}{D_F(p,q)}\to2F''(q)\to\infty
  \qquad\text{as }q\to0\text{ or }q\to1.
\]
Thus convexity alone does not control the quadratic-growth constant, and
\Cref{lem:quadratic-growth} applies to this loss for no finite Lipschitz (or,
equivalently, smoothness) constant.
\end{example}

Consequently, every result in this section stated for an $L$-Lipschitz
proper loss applies verbatim to a proper loss whose negative Bayes risk is
$W$-smooth, simply by taking $L=W$; we do not restate the theorems
separately for smoothness.

\subsection{A single Lipschitz loss}

For a swap rule $\phi$, define
\[
  g_{\ell,\phi}(x,p)
  =\frac{\Delta_\ell(p)-\Delta_\ell(\phi(p)(x))}{2L}.
\]
Since $\ell$ is $L$-Lipschitz, these tests take values in $[-1,1]$.

\begin{theorem}[Single Lipschitz proper loss]
\label{thm:single-lipschitz}
Let $\ell$ be an $L$-Lipschitz proper loss and let $\cH$ be finite.  There is
a choice of the prediction grid such that, with probability at least
$1-\delta$,
\[
  \SwapReg_T(\ell,\cH)
  =\widetilde O\!\left(LT^{1/3}(\log|\cH|)^{2/3}\right).
\]
The historical predictor satisfies
\[
  \SwapAgn_\cD(\ell,\cH)
  =\widetilde O\!\left(L\left(\frac{\log|\cH|}{m}\right)^{2/3}\right).
\]
\end{theorem}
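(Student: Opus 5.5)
The plan is to reduce the statement to \Cref{thm:online-somc} (online) and \Cref{thm:online-to-batch} (offline). We use the test class $\cG_\ell=\{g_{\ell,\phi}:\phi\in\Phi_N\}$, and the quadratic-growth inequality will absorb the $\Mass$ term. By \Cref{lem:quadratic-growth}, the tests take values in $[-1,1]$ and $|\cG_\ell|\le|\cH|^{N+1}$. So $\log|\cG_\ell|\le(N+1)\log|\cH|$, and by \Cref{prop:factorization} the algorithm runs with $(N+1)|\cH|$ weights.

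\emph{Online pointwise reduction.} Fix $\phi$ and a round $t$, and put $q=\phi(p_t)(x_t)\in[0,1]$. \Cref{lem:proper-identity} holds for arbitrary $q$, not only grid points, so \eqref{eq:lipschitz-key-inequality} applies with $\Delta_\ell(p_t)-\Delta_\ell(q)=2L\,g_{\ell,\phi}(x_t,p_t)$. This gives
\[
  \ell(p_t,y_t)-\ell(q,y_t)\le 2L(y_t-p_t)g_{\ell,\phi}(x_t,p_t)-L\,g_{\ell,\phi}(x_t,p_t)^2 .
\]
Summing over $t$ gives $\SwapReg_T(\ell,\phi)\le 2L\Bias_T(g_{\ell,\phi})-L\Mass_T(g_{\ell,\phi})$.

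\emph{Online bound.} Run \Cref{alg:approachability} on $\cG_\ell$ with $\eta=1/4$. Only upper bounds on $\Bias$ are needed, so one-sided tests suffice. On the event of \Cref{thm:online-somc},
\[
  2L\Bias_T(g)\le L\Mass_T(g)+8L\bigl(\log(|\cG_\ell|/\delta)+2T/N^2\bigr).
\]
The $\Mass$ terms therefore cancel exactly, uniformly over all $\phi$, which leaves
\[
  \SwapReg_T(\ell,\cH)\le 8L\bigl((N+1)\log|\cH|+\log(1/\delta)+2T/N^2\bigr).
\]
Balancing $N\log|\cH|$ against $T/N^2$ suggests $N\asymp(T/\log|\cH|)^{1/3}$. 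This choice gives $O\bigl(L\,T^{1/3}(\log|\cH|)^{2/3}+L\log(1/\delta)\bigr)$.

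\emph{Offline bound.} Take expectations of the same pointwise inequality over $(X,Y)\sim\cD$ and $p\sim\widehat p(X)$. This gives
\[
  \E[\ell(p,Y)-\ell(\phi(p)(X),Y)]\le 2L\Bias_\cD(g_{\ell,\phi};\widehat p)-L\Mass_\cD(g_{\ell,\phi};\widehat p).
\]
Now apply \Cref{thm:online-to-batch} on the $m$ samples with $\eta=1/10$, so that $2L\cdot5\eta=L$ and the $\Mass_\cD$ terms cancel again. The result is
\[
  \SwapAgn_\cD(\ell,\cH)\le\frac{40L\bigl(\log(12|\cG_\ell|/\delta)+m/N^2\bigr)}{m}.
\]
Choosing $N\asymp(m/\log|\cH|)^{1/3}$ yields $\widetilde O\bigl(L(\log|\cH|/m)^{2/3}\bigr)$.

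\emph{Main obstacle.} The only delicate point is making the cancellation exact. The constant multiplying $\Mass$ in the second-order guarantee must be at most the drift coefficient $L$. This forces the rate choices $\eta=1/4$ online and $\eta\le1/10$ offline, both within the admissible range $(0,1/4]$. One should also check that the supremum over the exponentially many selectors is paid only through $\log|\cG_\ell|=(N+1)\log|\cH|$. It is precisely this linear dependence on $N$, traded against the $T/N^2$ rounding term, that produces the $1/3$ exponents, so I would verify it carefully.
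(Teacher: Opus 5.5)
Your proposal is correct and follows the paper's proof essentially step for step. Both use the pointwise bound $\SwapReg_T(\ell,\phi)\le 2L\Bias_T(g_{\ell,\phi})-L\Mass_T(g_{\ell,\phi})$ from \Cref{lem:quadratic-growth}, exact cancellation of the mass term via \Cref{thm:online-somc} at $\eta=1/4$ online and \Cref{thm:online-to-batch} at $\eta=1/10$ offline, the count $\log|\cG_\ell|\le(N+1)\log|\cH|$, and the grid choices $N\asymp(T/\log|\cH|)^{1/3}$ and $N\asymp(m/\log|\cH|)^{1/3}$, and your explicit constants, such as the offline $40L(\log(12|\cG_\ell|/\delta)+m/N^2)/m$, check out.
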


\begin{proof}[Proof of \Cref{thm:single-lipschitz}]
Fix a swap rule $\phi$.  By \Cref{lem:quadratic-growth},
\[
  \SwapReg_T(\ell,\phi)
  \le2L\Bias_T(g_{\ell,\phi})-L\Mass_T(g_{\ell,\phi}).
\]
Run \Cref{alg:approachability} on the selector class
$\{g_{\ell,\phi}:\phi\in\Phi_N\}$, of size at most $|\cH|^{N+1}$, with
$\eta=1/4$.  By \Cref{thm:online-somc}, the $\Mass_T$ term cancels exactly
and
\[
  \SwapReg_T(\ell,\phi)
  \le8L\left[(N+1)\log|\cH|+\log\frac1\delta+\frac{2T}{N^2}\right].
\]
This event is simultaneous over all swap rules, so it bounds
$\SwapReg_T(\ell,\cH)$.  Optimizing $N\asymp(T/\log|\cH|)^{1/3}$ proves the
online statement.

For the offline statement, apply \Cref{thm:online-to-batch} to the same
selector class with $\eta=1/10$.  Again the population mass term cancels,
leaving
\[
  \SwapAgn_\cD(\ell,\cH)
  =O\!\left(L\,\frac{(N+1)\log|\cH|+\log(1/\delta)+m/N^2}{m}\right).
\]
Optimize $N\asymp(m/\log|\cH|)^{1/3}$.
\end{proof}

By \Cref{prop:factorization}, the exponential weights on the selector class
factorize over prediction buckets, so the algorithm above uses
$(N+1)|\cH|$ scalar weights rather than $|\cH|^{N+1}$.

The online exponent in \Cref{thm:single-lipschitz} cannot be improved in
general.  The next result is stated for the half-Brier loss, and uses the
following sharp online recalibration lower bound as a black box.

\begin{theorem}[Recalibration lower bound, black-box form, \citep{hu2026recalibration}]
\label{thm:recalibration-lb}
There are constants $c_0,\eps_0>0$ such that for every
$\eps\le\eps_0$ and every $T\le c_0\eps^{-3}$, there is a nonanticipating
distribution over hints $q_t\in[1/4,3/4]$ and labels
$Y_t\sim\Ber(q_t)$ for which no forecaster simultaneously has
\[
  \E\mathcal K_1(p_{1:T},Y_{1:T})\le\eps
\]
and
\[
  \E\frac1T\sum_{t=1}^T
  \left[(p_t-Y_t)^2-(q_t-Y_t)^2\right]
  \le\eps^2.
\]
Here
\[
  \mathcal K_1(p_{1:T},y_{1:T})
  =\frac1T\sum_v\left|\sum_{t:p_t=v}(p_t-y_t)\right|.
\]
\end{theorem}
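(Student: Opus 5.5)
The plan is a two-regime counting argument on a randomized hard instance. The target is to show that any forecaster meeting the squared-error condition must spread its forecasts over about $1/\eps$ distinct levels. Unavoidable label noise on those level sets then forces $\mathcal K_1\gtrsim (T\eps)^{-1/2}$, which exceeds $\eps$ exactly when $T\lesssim\eps^{-3}$.

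\emph{Construction.} Take a grid $Q$ of spacing $\asymp\eps$ inside $[1/4,3/4]$, with $|Q|\asymp1/\eps$. Draw each hint $q_t$ uniformly from $Q$, independently of the past, and draw $Y_t\sim\Ber(q_t)$. This is nonanticipating, and the forecaster sees $q_t$ before predicting. The identity $\E[(p_t-Y_t)^2-(q_t-Y_t)^2\mid q_t,\text{past}]=\E[(p_t-q_t)^2\mid q_t,\text{past}]$ turns the squared-error condition into
\[
\frac1T\sum_t\E(p_t-q_t)^2\le\eps^2 .
\]
By Markov's inequality, a constant fraction of rounds then have $|p_t-q_t|\le C\eps$ for a large constant $C$. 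With spacing at least $4C\eps$, each forecast value $v$ used on such rounds is essentially tied to one hint value, so there are $\gtrsim1/\eps$ level sets of total size $\gtrsim T$.

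\emph{Noise lower bound.} Split each level-set residual as
\[
\sum_{t:p_t=v}(p_t-Y_t)=\sum_{t:p_t=v}(p_t-q_t)+\sum_{t:p_t=v}(q_t-Y_t).
\]
The second sum is a martingale whose increments have conditional variance at least $3/16$, because $q_t\in[1/4,3/4]$ and $Y_t$ is fresh given $p_t$ and the past. I would lower bound $\E\bigl|\sum_{t:p_t=v}(q_t-Y_t)\bigr|\gtrsim\E\sqrt{n_v}$, where $n_v=|\{t:p_t=v\}|$, via a Paley--Zygmund or Burkholder-type anti-concentration bound for martingales with bounded increments. Summing over $\gtrsim1/\eps$ levels of typical size $T\eps$ gives $\sum_v\sqrt{n_v}\gtrsim\sqrt{T/\eps}$, so the noise part of $\mathcal K_1$ is $\gtrsim(T\eps)^{-1/2}$. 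When $T\le c_0\eps^{-3}$ this exceeds $\eps$, because $(T\eps)^{-1/2}\ge c_0^{-1/2}\eps$.

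\emph{Main obstacle.} The difficulty is that the forecaster is adaptive. It can choose $p_t$, or even which level to reuse, based on the running noise of each level set. That lets it try to cancel the noise term with the deterministic term $\sum(p_t-q_t)$, or to merge and split levels. The first thing I would try is to charge every cancellation to the squared-error budget. Cancelling noise of order $\sqrt{n_v}$ on a level set requires drift $\sum_{t:p_t=v}(p_t-q_t)\approx\sqrt{n_v}$. By Cauchy--Schwarz this costs $\sum_{t:p_t=v}(p_t-q_t)^2\gtrsim1$ per level, hence $\gtrsim1/\eps$ in total. That is at most $T\eps^2$ only when $T\gtrsim\eps^{-3}$, which is the regime excluded by hypothesis.

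Making this rigorous probably needs a potential or ``calibration-debt'' argument in the style of the recalibration lower bounds. If the simple grid-of-hints instance turns out to leak, I would instead randomize the hints hierarchically, for example as a random walk at scale $\eps$, so that the forecaster cannot predict in advance which levels will need correction. This last step is where I expect most of the technical work, and it is why the present paper imports the result from \citet{hu2026recalibration} as a black box rather than reproving it.
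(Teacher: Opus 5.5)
The paper does not prove this statement; it imports it as a black box from \citet{hu2026recalibration}. Your sketch therefore has to stand on its own, and as written it does not. Your setup is sound: the grid instance, the reduction of the squared-error condition to $\frac1T\sum_t\E(p_t-q_t)^2\le\eps^2$, and the heuristic scaling $\mathcal K_1\gtrsim(T\eps)^{-1/2}$. The gap is the step that carries the whole lower bound, namely the noise bound against an adaptive forecaster. You leave it open, and the inequality you propose for it is false as stated.

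The claim $\E\bigl|\sum_{t:p_t=v}(q_t-Y_t)\bigr|\gtrsim\E\sqrt{n_v}$ fails for predictably chosen level sets. Burkholder--Davis--Gundy lower-bounds the running maximum of a stopped martingale, not its terminal value. A forecaster can feed a level until its running noise returns to $0$, then switch to a fresh value $v+10^{-9}$ at negligible squared cost. For a $\pm1$ walk stopped at its first return to $0$ or at time $2k$, the ballot theorem gives $\E|S_\tau|=1$ exactly, while $\E\sqrt{\tau}\asymp\log k$. Summed over the $n$ rounds of one hint value, this strategy has $\sum_v|N_v|$ of order $\sqrt n$ but $\sum_v\sqrt{n_v}$ of order $\sqrt n\log n$. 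So neither per-level anticoncentration nor your per-level ``cost $\gtrsim1$'' charging can be made rigorous, since the latter presupposes per-level noise $\approx\sqrt{n_v}$. Nor do you control the number or sizes of the level sets.

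The repair is short and needs neither a potential argument nor hierarchical hints; your i.i.d.\ grid instance (spacing $4C\eps$, $|Q|\asymp1/(C\eps)$, grid points $q^{(j)}$) already works. Write $T\mathcal K_1=\sum_v|D_v+N_v|$ with $D_v=\sum_{t:p_t=v}(p_t-q_t)$ and $N_v=\sum_{t:p_t=v}(q_t-Y_t)$.

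\emph{Drift.} Handle it globally: $\sum_v|D_v|\le\sum_t|p_t-q_t|$, whose expectation is at most $T\eps$ by Cauchy--Schwarz. Hence $\E\mathcal K_1\ge\frac1T\E\sum_v|N_v|-\eps$, with no per-level accounting.

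\emph{Noise.} Aggregate \emph{before} applying anticoncentration. Let $j(v)$ be the index of the grid point nearest $v$; then $\sum_v|N_v|\ge\sum_j|M_j|$ with $M_j=\sum_t\1\{j(p_t)=j\}(q_t-Y_t)$. Compare $M_j$ with $A_j=\sum_t\1\{q_t=q^{(j)}\}(q_t-Y_t)$. The law of $A_j$ does not depend on the forecaster, and $\E|A_j|\gtrsim\sqrt{T/|Q|}$ when $T\ge|Q|$.

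\emph{Misassignment.} The difference $M_j-A_j$ is a martingale with predictable coefficients in $\{-1,0,1\}$, supported on rounds with $j(p_t)\ne j(q_t)$. Each such round costs $(p_t-q_t)^2\ge(2C\eps)^2$, so their expected number is at most $T/(4C^2)$. Bounding $\E|M_j-A_j|$ in $L^2$ and applying Cauchy--Schwarz over cells gives $\sum_j\E|M_j-A_j|\le\sqrt{|Q|T/(2C^2)}$. This is a small fraction of $\sum_j\E|A_j|\gtrsim\sqrt{|Q|T}$ once $C$ is large.

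\emph{Conclusion.} Hence $\E\mathcal K_1\gtrsim\sqrt{|Q|/T}-\eps\gtrsim\eps/\sqrt{Cc_0}-\eps>\eps$ once $c_0\ll1/C$. When $T<|Q|$, bound $|M_j-A_j|$ by the number of misassigned rounds touching cell $j$ instead; this gives a constant lower bound.

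This aggregation is what defeats level splitting and optional stopping. Retiring a level does not stop its cell's noise, because rounds with hint $q^{(j)}$ keep arriving and can be diverted to another cell only at a cost.
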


\begin{theorem}[Fixed three-hypothesis lower bound]
\label{thm:brier-lb}
Let
\[
  \ell_{\mathrm B}(p,y)=\frac12(p-y)^2.
\]
There is a fixed domain $\cX$ and a fixed class $\cH_\star$ of three
hypotheses such that every randomized learner satisfies
\[
  \sup_{(x_t,y_t)}
  \E\SwapReg_T(\ell_{\mathrm B},\cH_\star)
  \ge cT^{1/3}
\]
for all sufficiently large $T$.
\end{theorem}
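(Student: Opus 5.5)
We reduce to \Cref{thm:recalibration-lb}.  Take $\cX=\{1/4,1/2,3/4\}$ or, more conveniently, $\cX=[1/4,3/4]$ with the context $x_t$ equal to the hint $q_t$.  We then choose three hypotheses built from the identity map $x\mapsto x$ and two constants, for instance $\cH_\star=\{x\mapsto x,\ x\mapsto 0,\ x\mapsto 1\}$.  The point of this choice is that a hypothesis-swap comparator over $\cH_\star$ can both reproduce the hint and, via constant hypotheses, certify calibration defects.  The argument is by contradiction.  Suppose a learner had $\E\SwapReg_T(\ell_{\mathrm B},\cH_\star)\le c T^{1/3}$ on the hard environment of \Cref{thm:recalibration-lb} with $\eps\asymp T^{-1/3}$, so that $T\le c_0\eps^{-3}$.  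We show that both of that theorem's conditions then hold, which is impossible.

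\textbf{Step 1 (Brier condition).}  Take the constant swap $\phi\equiv(x\mapsto x)$.  Since $\ell_{\mathrm B}$ is half the squared loss, this gives
\[
  \E\frac1T\sum_t[(p_t-Y_t)^2-(q_t-Y_t)^2]\le\frac{2cT^{1/3}}{T}\lesssim\eps^2 .
\]

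\textbf{Step 2 (calibration condition).}  For each level $v$, compare $\ell_{\mathrm B}(v,y)$ with the best constant.  Write $n_v$ for the number of rounds with $p_t=v$ and $\bar y_v$ for the average of $y_t$ over those rounds.  Then
\[
  \sum_{t:p_t=v}\bigl[\ell_{\mathrm B}(v,y_t)-\ell_{\mathrm B}(\bar y_v,y_t)\bigr]=\tfrac12 n_v(v-\bar y_v)^2 .
\]
The class contains only the constants $0$ and $1$, so we instead use the swap to $x\mapsto x$ together with the identity of \Cref{lem:proper-identity}.  Within bucket $v$, the swap regret against $q_t$ equals
\[
  \sum_{t:p_t=v}(y_t-v)(q_t-v)\;-\;\tfrac12\sum_{t:p_t=v}(q_t-v)^2 .
\]
Because $Y_t\sim\Ber(q_t)$, the first sum concentrates around $\sum(q_t-v)^2$.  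Bucket-wise we can choose between this hypothesis and a fixed alternative, so summing the better option over buckets should yield $\sum_v n_v(\bar q_v-v)^2\lesssim cT^{1/3}+\text{noise}$.  Cauchy--Schwarz then bounds $\sum_v n_v|\bar q_v-v|$.  Finally, $\sum_v|\sum_{t:p_t=v}(y_t-q_t)|$ is a martingale term of order $\sqrt{n_v}$ per bucket.

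\textbf{Main obstacle.}  Converting the quadratic, per-bucket swap regret into the $\ell_1$ calibration error $\mathcal K_1\le\eps$ without paying a factor of $\sqrt{\#\text{buckets}}$.  The Cauchy--Schwarz step gives
\[
  \sum_v n_v|\bar q_v-v|\le\sqrt{T\sum_v n_v(\bar q_v-v)^2}\lesssim\sqrt{T\cdot T^{1/3}}=T^{2/3},
\]
so $\mathcal K_1$ picks up a contribution of order $T^{-1/3}=\eps$, as required.  The martingale term $\sum_v\sqrt{n_v}$ is harmless only when the number of distinct levels is $O(T^{1/3})$.

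To handle the levels, I would first try a rounding argument.  Merging the learner's predictions to a grid of mesh $T^{-1/3}$ changes the squared loss by at most $O(T^{2/3})$ in total.  That is too large, so instead I would lean on the fact that the black box tolerates any forecaster, including one with a post-hoc coarsening.  The remaining fallback is to reinterpret $\mathcal K_1$ through the $(y_t-q_t)$ decomposition and bound it in expectation via the bias test $\mathrm{sign}(\bar q_v-v)$ in each bucket.  Adjusting constants $c$ relative to $c_0,\eps_0$ then closes the contradiction for all large $T$.
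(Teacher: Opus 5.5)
\textbf{There is a genuine gap in Step~2: nothing in your argument bounds the label-noise part of $\mathcal K_1$, and the class $\{x\mapsto x,0,1\}$ cannot bound it.}

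Your Step~1 is fine and matches the paper. Using $x\mapsto x$ in every bucket certifies that $R_0=\sum_t[(p_t-Y_t)^2-(q_t-Y_t)^2]$ is small. Since $\E R_0=\E\sum_t(p_t-q_t)^2$, Cauchy--Schwarz then controls the $\sum_t|p_t-q_t|$ part of $T\mathcal K_1$.

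The problem is the other part of $T\mathcal K_1$, namely $M_T=\sum_v|\sum_{t\in I_v}(Y_t-q_t)|$. You flag this as the ``main obstacle'' but never bound it.
\begin{itemize}
\item The constants $0$ and $1$ are at distance at least $1/4$ from every hint. Once $p_t\approx q_t$, swapping to either one in a bucket of size $n_v$ costs order $n_v$ in expectation. The label fluctuation such a swap could exploit is only of order $\sqrt{n_v}$, so these hypotheses certify nothing outside tiny buckets.
\item The identity hypothesis certifies only Brier closeness to the hint. No argument using only that information can yield $\E\mathcal K_1\le\eps$, with or without post-hoc coarsening (a coarsening is just another online forecaster). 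The truthful forecaster $p_t=q_t$ has zero Brier regret. By \Cref{thm:recalibration-lb}, it and every coarsening of it must therefore violate one of the two conditions when $T\le c_0\eps^{-3}$.
\item Your last fallback, the test $\mathrm{sign}(\bar q_v-v)$, targets the $p-q$ discrepancy, which is already handled. It does not touch $M_T$.
\end{itemize}

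\textbf{The missing idea is to put small perturbations $q\pm s$ of the hint, with $s\asymp\eps$, into $\cH_\star$.} The comparator then picks the sign per bucket after seeing the labels, which turns the noise into a \emph{linear} swap-regret gain. Since $(q-y)^2-(q+\tau s-y)^2=2\tau s(y-q)-s^2$, choosing $\tau_v=\mathrm{sign}\sum_{t\in I_v}(Y_t-q_t)$ gives, for the unscaled square loss, $\overline{\SwapReg}_T\ge R_0+2sM_T-Ts^2$. Hence $\E\overline{\SwapReg}_T\le rT$ implies $\E M_T/T\le(r+s^2)/(2s)$. This bound does not depend on the number of buckets, which is exactly what removes your obstacle.

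To keep the class fixed while $s=\eps/2\asymp T^{-1/3}$ varies, the paper puts $s$ into the context:
\begin{itemize}
\item $\cX=[1/4,3/4]\times[0,1/8]$ and $x_t=(q_t,s)$;
\item $\cH_\star=\{(q,s)\mapsto q,\ (q,s)\mapsto q+s,\ (q,s)\mapsto q-s\}$.
\end{itemize}
With $r\le\eps^2/64$ one gets $\E R_0/T\le\eps^2$ and $\E\mathcal K_1\le(r+s^2)/(2s)+\sqrt r<\eps$. This contradicts \Cref{thm:recalibration-lb} whenever $T\le c_0\eps^{-3}$, and taking $\eps\asymp T^{-1/3}$ gives the $cT^{1/3}$ lower bound.
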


\begin{proof}[Proof of \Cref{thm:brier-lb}]
Let
\[
  \cX=[1/4,3/4]\times[0,1/8]
\]
and
\[
  h_0(q,s)=q,
  \qquad
  h_+(q,s)=q+s,
  \qquad
  h_-(q,s)=q-s.
\]
Fix $\eps$ and put $s=\eps/2$.  Feed the learner contexts
$x_t=(q_t,s)$ from the hard recalibration instance.  Work first with the
unscaled square loss $\bar\ell(p,y)=(p-y)^2$, and let
$\overline{\SwapReg}_T$ denote the corresponding swap regret.  Then
$\overline{\SwapReg}_T=2\SwapReg_T(\ell_{\mathrm B},\cH_\star)$.

Let
\[
  R_0=\sum_t[(p_t-Y_t)^2-(q_t-Y_t)^2].
\]
The constant selector $h_0$ gives $\overline{\SwapReg}_T\ge R_0$, and
conditional unbiasedness gives
\[
  \E R_0=\E\sum_t(p_t-q_t)^2.
\]
For every realized prediction value $v$, let
$I_v=\{t:p_t=v\}$ and choose $h_+$ or $h_-$ according to the sign of
$\sum_{t\in I_v}(Y_t-q_t)$.  Since
\[
  (q-y)^2-(q+\tau s-y)^2
  =2\tau s(y-q)-s^2,
\]
this selector gives
\[
  \overline{\SwapReg}_T
  \ge R_0+2sM_T-Ts^2,
  \qquad
  M_T=\sum_v\left|\sum_{t\in I_v}(Y_t-q_t)\right|.
\]
If $\E\overline{\SwapReg}_T\le rT$, then
\[
  \E\sum_t(p_t-q_t)^2\le rT,
  \qquad
  \frac{\E M_T}{T}\le\frac{r+s^2}{2s}.
\]
Moreover,
\[
  T\mathcal K_1(p,Y)
  \le M_T+\sum_t|p_t-q_t|,
\]
so Cauchy--Schwarz gives
\[
  \E\mathcal K_1(p,Y)
  \le\frac{r+s^2}{2s}+\sqrt r.
\]
If $r\le\eps^2/64$ and $s=\eps/2$, the right side is less than
$\eps$, while $\E R_0/T\le\eps^2$.  This contradicts
\Cref{thm:recalibration-lb} whenever $T\le c_0\eps^{-3}$.  Choosing
$\eps\asymp T^{-1/3}$ yields
\[
  \E\SwapReg_T(\ell_{\mathrm B},\cH_\star)
  =\frac12\E\overline{\SwapReg}_T
  \ge cT^{1/3}.
\]
\end{proof}

\begin{corollary}[Optimal smooth-loss time exponent from \Cref{thm:single-lipschitz} and \Cref{thm:brier-lb}]
\label{cor:optimal-smooth-time}
For the fixed pair $(\ell_{\mathrm B},\cH_\star)$,
\[
  \inf_{\text{learners}}\sup_{\text{adversaries}}
  \E\SwapReg_T(\ell_{\mathrm B},\cH_\star)
  =\Theta(T^{1/3})
\]
up to logarithmic factors and universal constants.
\end{corollary}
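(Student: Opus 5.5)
The statement combines an upper bound and a lower bound for the fixed pair $(\ell_{\mathrm B},\cH_\star)$. I would prove each direction by invoking the corresponding earlier theorem and checking that its hypotheses are met by this specific pair.

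\emph{Upper bound.} First I check that $\ell_{\mathrm B}(p,y)=\frac12(p-y)^2$ is a proper loss whose partial losses are $1$-Lipschitz on $[0,1]$. Indeed, $\partial_p\frac12 p^2=p\in[0,1]$ and $\partial_p\frac12(1-p)^2=-(1-p)\in[-1,0]$. Its conditional risk $L_p(q)=\frac12(q-p)^2+\frac12p(1-p)$ is minimized at $q=p$, so the loss is proper. Equivalently, $F_{\ell_{\mathrm B}}(p)=-\frac12p(1-p)$ has $F''=1$, so \Cref{cor:lipschitz-smooth-equivalence} also applies.

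Next, the class $\cH_\star=\{h_0,h_+,h_-\}$ has $|\cH_\star|=3$. Two details need checking. The hypotheses $h_\pm(q,s)=q\pm s$ take values in $[1/8,7/8]\subseteq[0,1]$ on $\cX=[1/4,3/4]\times[0,1/8]$. Also, \Cref{thm:single-lipschitz} never uses finiteness of $\cX$, only finiteness of $\cH$.

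Applying \Cref{thm:single-lipschitz} with $L=1$ and $\log|\cH_\star|=\log 3$ gives, with probability at least $1-\delta$,
\[
  \SwapReg_T(\ell_{\mathrm B},\cH_\star)=\widetilde O(T^{1/3}).
\]
To convert this to an expectation bound, I choose $\delta=1/T$. On the failure event, the swap regret is at most $T$, because $\ell_{\mathrm B}\in[0,1/2]$. So the failure event contributes $O(1)$, and the expected swap regret against any nonanticipating adversary is $\widetilde O(T^{1/3})$. This argument is uniform over adversaries, so it bounds the $\inf\sup$ from above.

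\emph{Lower bound.} \Cref{thm:brier-lb} states that every randomized learner suffers
\[
  \sup_{(x_t,y_t)}\E\SwapReg_T(\ell_{\mathrm B},\cH_\star)\ge cT^{1/3}
\]
for large $T$, on the same $\cX$ and $\cH_\star$. Taking the infimum over learners gives $\Omega(T^{1/3})$.

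One point needs care here. The hard instance in \Cref{thm:brier-lb} is a distribution over sequences, with random labels $Y_t\sim\Ber(q_t)$, rather than a single worst-case sequence. The expected regret under that distribution is at most the supremum over adaptive nonanticipating environments. Such environments are allowed by the online protocol, so the distributional lower bound transfers to the $\sup$ over adversaries.

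Combining the two directions gives $\Theta(T^{1/3})$ up to logarithmic factors and universal constants.

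I expect the main obstacle to be bookkeeping rather than mathematics: matching the adversary model and the grid. The lower bound of \Cref{thm:brier-lb} holds for every learner, in particular every grid size $N$. The upper bound uses the particular grid $N\asymp T^{1/3}$ supplied by \Cref{thm:single-lipschitz}. The infimum in the statement ranges over all learners, including the choice of grid, so both directions apply to the same $\inf\sup$ quantity.
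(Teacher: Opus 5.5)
Your proposal is correct and follows the paper's own argument: you verify that $\ell_{\mathrm B}$ is a $1$-Lipschitz proper loss so that \Cref{thm:single-lipschitz} with $|\cH_\star|=3$ gives the $\widetilde O(T^{1/3})$ upper bound, and you invoke \Cref{thm:brier-lb} for the matching $\Omega(T^{1/3})$ lower bound. Your extra bookkeeping is sound and fills in details the paper leaves implicit: the hypotheses are valued in $[1/8,7/8]$, the high-probability bound becomes an expectation bound by taking $\delta=1/T$ and using $\SwapReg_T\le T/2$, and the same adaptive adversary model is used on both sides.
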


The half-Brier loss $\ell_{\mathrm B}(p,y)=\frac12(p-y)^2$ has
$\ell_{\mathrm B}(\cdot,0)'(p)=p$ and $\ell_{\mathrm B}(\cdot,1)'(p)=p-1$, both
bounded by $1$ in absolute value on $[0,1]$, so $\ell_{\mathrm B}$ is
$1$-Lipschitz and \Cref{thm:single-lipschitz} gives the matching upper bound
$\widetilde O(T^{1/3})$; \Cref{thm:brier-lb} gives the lower bound.

\subsection{Finite families of Lipschitz losses}

The single-loss result extends to a finite set of proper losses by running the
same second-order multicalibration algorithm on the union of the
loss-and-selector induced test classes.  The selector part contributes
$(N+1)\log|\cH|$, while the loss family contributes only $\log|\cL|$.

\begin{theorem}[Finite Lipschitz families]
\label{thm:finite-lipschitz-family}
Let $\cL$ be finite and suppose every $\ell\in\cL$ is $L$-Lipschitz.  One
learner satisfies, simultaneously for every $\ell\in\cL$,
\[
  \SwapReg_T(\ell,\cH)
  =\widetilde O\!\left(
    LT^{1/3}(\log|\cH|+\log|\cL|)^{2/3}
  \right)
\]
and
\[
  \SwapAgn_\cD(\ell,\cH)
  =\widetilde O\!\left(
    L\left(\frac{\log|\cH|+\log|\cL|}{m}\right)^{2/3}
  \right).
\]
\end{theorem}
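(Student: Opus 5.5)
The plan is to follow the proof of \Cref{thm:single-lipschitz} essentially verbatim, with the single selector class replaced by the union of the loss-specific selector classes. For each $\ell\in\cL$ and $\phi\in\Phi_N$ let
\[
  g_{\ell,\phi}(x,p)=\frac{\Delta_\ell(p)-\Delta_\ell(\phi(p)(x))}{2L},
\]
which takes values in $[-1,1]$ because every $\ell\in\cL$ is $L$-Lipschitz. Set $\cG_\cL=\bigcup_{\ell\in\cL}\{g_{\ell,\phi}:\phi\in\Phi_N\}$, so that $|\cG_\cL|\le|\cL|\,|\cH|^{N+1}$ and
\[
  \log|\cG_\cL|\le(N+1)\log|\cH|+\log|\cL|.
\]

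\emph{Online bound.} Fix $\ell\in\cL$ and $\phi$. Summing \eqref{eq:lipschitz-key-inequality} over rounds gives
\[
  \SwapReg_T(\ell,\phi)\le2L\Bias_T(g_{\ell,\phi})-L\Mass_T(g_{\ell,\phi}).
\]
Run \Cref{alg:approachability} once on $\cG_\cL$ with $\eta=1/4$. On the single event of \Cref{thm:online-somc}, which has probability at least $1-\delta$ and holds simultaneously for all tests in $\cG_\cL$, we have $2L\cdot2\eta\Mass_T=L\Mass_T$. The mass terms therefore cancel, leaving
\[
  \SwapReg_T(\ell,\phi)\le8L\Big[(N+1)\log|\cH|+\log|\cL|+\log\tfrac1\delta+\tfrac{2T}{N^2}\Big]
\]
uniformly over $\ell$ and $\phi$. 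Taking the supremum over $\phi$ and choosing $N\asymp(T/(\log|\cH|+\log|\cL|))^{1/3}$ balances $N\log|\cH|$ against $T/N^2$ and gives the claimed $\widetilde O\big(LT^{1/3}(\log|\cH|+\log|\cL|)^{2/3}\big)$. The $\log|\cL|$ term does not multiply $N$, so it only makes the choice of $N$ slightly more conservative.

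\emph{Offline bound.} Apply \Cref{thm:online-to-batch} to $\cG_\cL$ with $\eta=1/10$, so that $5\eta=1/2$. Taking expectations of \eqref{eq:lipschitz-key-inequality} under $\cD$ and the predictor $\widehat p$ gives
\[
  \E\big[\ell(p,Y)-\ell(\phi(p)(X),Y)\big]\le2L\Bias_\cD(g_{\ell,\phi};\widehat p)-L\Mass_\cD(g_{\ell,\phi};\widehat p).
\]
Since $2L\cdot\tfrac12\Mass_\cD=L\Mass_\cD$, the population mass term cancels as well, leaving
\[
  \SwapAgn_\cD(\ell,\cH)=O\!\left(L\,\frac{(N+1)\log|\cH|+\log|\cL|+\log(1/\delta)+m/N^2}{m}\right)
\]
for all $\ell$ at once. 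Choosing $N\asymp(m/(\log|\cH|+\log|\cL|))^{1/3}$ then gives the stated rate.

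\emph{Main obstacle.} The cancellation needs one common learning rate for the whole family, since the algorithm cannot use a different $\eta$ for each loss. This works here only because all losses share the constant $L$ and every test is normalized by the same $2L$.

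Computational efficiency also needs checking. \Cref{prop:factorization} does not apply directly to the union over $\cL$. I would handle this with a mixture: the weights of the combined class are a mixture over $\ell\in\cL$ of product weights, so the algorithm can be implemented with $|\cL|(N+1)|\cH|$ scalars.
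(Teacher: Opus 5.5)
Your proposal is correct and is essentially the paper's proof. Like the paper, you run the single-loss algorithm on the union $\{g_{\ell,\phi}:\ell\in\cL,\phi\in\Phi_N\}$, whose log-cardinality is at most $(N+1)\log|\cH|+\log|\cL|$, use the shared constant $L$ so that the mass terms cancel at $\eta=1/4$ online and $\eta=1/10$ offline, and then optimize $N$. Your mixture-of-products remark, which implements the weights with $|\cL|(N+1)|\cH|$ scalars, is a correct extra observation that the paper does not spell out.
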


\begin{proof}[Proof of \Cref{thm:finite-lipschitz-family}]
Run the algorithm of \Cref{thm:single-lipschitz} on the union
$\{g_{\ell,\phi}:\ell\in\cL,\phi\in\Phi_N\}$ of the selector classes, whose
logarithmic cardinality is at most $\log|\cL|+(N+1)\log|\cH|$.  Since every
$\ell\in\cL$ shares the same Lipschitz constant $L$, the coefficient
cancellation in the proof of \Cref{thm:single-lipschitz} (at $\eta=1/4$
online and $\eta=1/10$ offline) applies uniformly, and this displayed event
now bounds $\SwapReg_T(\ell,\cH)$ (respectively $\SwapAgn_\cD(\ell,\cH)$)
simultaneously for every $\ell\in\cL$.  Optimize the grid.
\end{proof}

\subsection{All Lipschitz proper losses through monotone transforms}

This subsection records a useful auxiliary route for proper losses with
Lipschitz partial losses.  The main statement is the rate theorem
\Cref{thm:offline-all-lipschitz}.  It follows by reducing the loss family to
monotone transforms in \Cref{thm:monotone-transform} and covering that
transform class in \Cref{prop:M2-cover}.

\begin{theorem}[Offline simultaneous Lipschitz proper losses]
\label{thm:offline-all-lipschitz}
Let $\cH$ be finite and let $\cL_{\mathrm{Lip}}$ be the class of proper losses
with $1$-Lipschitz partial losses.  There is a choice of prediction grid and
finite cover for the offline historical-predictor procedure such that, with
probability at least $1-\delta$ over the training sample and training
randomization,
\[
  \sup_{\ell\in\cL_{\mathrm{Lip}}}
  \SwapAgn_\cD(\ell,\cH)
  =
  \widetilde O\!\left(
    \left(\frac{\log|\cH|}{m}\right)^{2/3}
    +\frac1{\sqrt m}
    +\frac{\log(1/\delta)}{m}
  \right).
\]
\end{theorem}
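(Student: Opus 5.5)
The plan is to reduce every $\ell\in\cL_{\mathrm{Lip}}$ to a test built from a single monotone function of the prediction. Then we run the offline historical predictor on a finite cover of the resulting test class, and absorb the cover error with \Cref{lem:cover-transfer}.

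\textbf{Step 1: parametrize by a monotone transform.} For $\ell\in\cL_{\mathrm{Lip}}$, \Cref{lem:quadratic-growth} shows that $u_\ell=-\Delta_\ell/2$ is nonincreasing, up to sign $1$-Lipschitz, and takes values in an interval of length at most $1$. After subtracting a constant, which cancels in $\Delta_\ell(p)-\Delta_\ell(q)$, we may take $u_\ell:[0,1]\to[-1/2,1/2]$. The test $g_{\ell,\phi}(x,p)=(\Delta_\ell(p)-\Delta_\ell(\phi(p)(x)))/2$ therefore depends on $\ell$ only through a monotone $1$-Lipschitz function $u$. This gives the transform class $\cM$ and the tests $g_{u,\phi}(x,p)=u(\phi(p)(x))-u(p)$, up to orientation. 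By \eqref{eq:lipschitz-key-inequality} with $L=1$,
\[
  \ell(p,y)-\ell(q,y)\le 2(y-p)g_{u,\phi}(x,p)-g_{u,\phi}(x,p)^2 .
\]
Taking expectations gives $\SwapAgn_\cD(\ell,\phi)\le2\Bias_\cD(g_{u,\phi})-\Mass_\cD(g_{u,\phi})$. This is the content I expect \Cref{thm:monotone-transform} to record.

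\textbf{Step 2: cover the transforms.} The class $\cM$ of monotone $1$-Lipschitz functions into $[-1/2,1/2]$ has a uniform $\rho$-cover of size $\exp(O(1/\rho))$: discretize $u$ at mesh $\rho$ in both the argument and the value, which is what \Cref{prop:M2-cover} should provide. The induced test class $\{g_{\tilde u,\phi}\}$ is then a uniform $2\rho$-cover of $\{g_{u,\phi}\}$, with log-cardinality
\[
  (N+1)\log|\cH|+O(1/\rho).
\]
Applying \Cref{thm:online-to-batch} with $\eta=1/20$ gives $(c,a)$ control with $c=1/4$ and
\[
  a=O\!\left(\frac{(N+1)\log|\cH|+1/\rho+\log(1/\delta)+m/N^2}{m}\right).
\]
\Cref{lem:cover-transfer} then gives $(1/2,\ a+O(\rho))$ control on the full class. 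With $2c=1/2$, the mass term is exactly cancelled by $-\Mass_\cD$ in Step 1, leaving
\[
  \SwapAgn_\cD\le 2a+O(\rho).
\]

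\textbf{Step 3: optimize.} Choose $N\asymp(m/\log|\cH|)^{1/3}$, which gives the $(\log|\cH|/m)^{2/3}$ term. Then balance $1/(\rho m)$ against $\rho$ by taking $\rho\asymp m^{-1/2}$, which gives the $1/\sqrt m$ term. The $\log(1/\delta)/m$ term is carried along unchanged.

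\textbf{Main obstacle.} The delicate step is the cover transfer. \Cref{lem:cover-transfer} pays an additive $\rho$ rather than $\rho^2$, and this is why the rate loses to $1/\sqrt m$. One must also check that the doubling of $c$ in that lemma still leaves the mass cancellation intact. This forces a smaller $\eta$, which only costs constants. A finer argument exploiting the bounded Lipschitz structure could avoid the $1/\sqrt m$ term, but the statement does not require it.
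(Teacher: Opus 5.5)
Your proposal is correct and follows the paper's proof essentially step for step. You reduce to the monotone-transform tests of \Cref{thm:monotone-transform}, then run the historical predictor with $\eta=1/20$ on a uniform $\rho$-cover so that \Cref{lem:cover-transfer} gives $c=1/2$ and exact mass cancellation, and finally take $N\asymp(m/\log|\cH|)^{1/3}$ and $\rho\asymp m^{-1/2}$. The remaining differences are cosmetic and do not affect the $\widetilde O$ rate: your centered normalization $u=-\Delta_\ell/2$ is actually nondecreasing rather than nonincreasing (harmless, and it replaces the paper's $\cM_2$ with $g(0)=0$), and your claimed $O(1/\rho)$ cover entropy needs an increment-encoding argument beyond your sketch, which as written gives the paper's $O(\rho^{-1}\log(1/\rho))$.
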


The proof of \Cref{thm:offline-all-lipschitz} appears after
\Cref{thm:monotone-transform,prop:M2-cover}, which provide the reduction and
covering estimates used to obtain the displayed rate.

We now introduce the transform class that will replace the original continuum
of Lipschitz proper losses.

Let
\begin{equation}
  \cM_2
  =\left\{
    g:[0,1]\to\R:
    g(0)=0,\ g\text{ is nonincreasing, and }
    |g(p)-g(q)|\le2|p-q|
  \right\}.
  \label{eq:M2}
\end{equation}
Define
\begin{equation}
  \cG_{\mathrm{mon}}(\cH)
  =\left\{
    (x,p)\mapsto\frac{g(p)-g(\phi(p)(x))}{2}:
    g\in\cM_2,\ \phi\in\Phi_N
  \right\}.
  \label{eq:G-mon}
\end{equation}

The next theorem explains why this transform class is the right one: if the
predictor is second-order multicalibrated against these transform tests, then
it is swap-agnostic for every Lipschitz proper loss at once.

\begin{theorem}[Monotone-transform reduction]
\label{thm:monotone-transform}
If a transcript satisfies one-sided $(c,a)$ second-order multicalibration for
$\cG_{\mathrm{mon}}(\cH)$ with $c\le1/2$, then simultaneously for every
proper loss with
$1$-Lipschitz partial losses,
\begin{equation}
  \SwapReg_T(\ell,\cH)
  \le2a.
  \label{eq:monotone-transform-online}
\end{equation}
The same statement holds distributionally.
\end{theorem}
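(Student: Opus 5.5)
Fix a proper loss $\ell$ with $1$-Lipschitz partial losses and a swap rule $\phi$. The plan is to express the swap regret through a single test in $\cG_{\mathrm{mon}}(\cH)$ and then let the second-order multicalibration guarantee cancel the quadratic term, exactly as in the proof of \Cref{thm:single-lipschitz}.

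First, identify the monotone transform. By \Cref{lem:quadratic-growth}, $F_\ell$ is differentiable and $\Delta_\ell=-F_\ell'$ is $2$-Lipschitz. It is also nonincreasing, since $F_\ell$ is convex. Define $g(p)=\Delta_\ell(p)-\Delta_\ell(0)$. Then $g(0)=0$, $g$ is nonincreasing, and $g$ is $2$-Lipschitz, so $g\in\cM_2$. Because $\Delta_\ell(p)-\Delta_\ell(q)=g(p)-g(q)$, the associated test
\[
  g_\phi(x,p)=\frac{g(p)-g(\phi(p)(x))}{2}=\frac{\Delta_\ell(p)-\Delta_\ell(\phi(p)(x))}{2}
\]
lies in $\cG_{\mathrm{mon}}(\cH)$. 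This is the test $g_{\ell,\phi}$ from the single-loss section with $L=1$. The normalization $g(0)=0$ costs nothing, since only differences of $\Delta_\ell$ enter the regret.

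Second, apply \eqref{eq:lipschitz-key-inequality} with $L=1$, $p=p_t$ and $q=\phi(p_t)(x_t)$, and sum over $t$. Since the bias coefficient equals $2g_\phi$, the summand is at most $(y_t-p_t)\,2g_\phi(x_t,p_t)-\tfrac14\cdot 4g_\phi(x_t,p_t)^2$. Summing gives
\[
  \SwapReg_T(\ell,\phi)\le 2\Bias_T(g_\phi)-\Mass_T(g_\phi).
\]
The one-sided $(c,a)$ hypothesis, together with $c\le1/2$ and $\Mass_T\ge0$, then yields
\[
  \SwapReg_T(\ell,\phi)\le 2c\,\Mass_T(g_\phi)+2a-\Mass_T(g_\phi)\le 2a.
\]
The hypothesis holds for every test in the class at once, so this bound is uniform over $\phi$ and over $\ell$. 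Taking the supremum over $\phi$ gives \eqref{eq:monotone-transform-online}.

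For the distributional statement, take expectations of the same pointwise inequality under $(X,Y)\sim\cD$ and $p\sim\hat p(X)$. Since \eqref{eq:lipschitz-key-inequality} holds for every $y$, this gives $\SwapAgn$-excess risk at most $2\Bias_\cD(g_\phi)-\Mass_\cD(g_\phi)\le 2a$.

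The only step that needs care is checking that the induced $g$ really lies in $\cM_2$. Lipschitzness with constant $2$ (rather than $1$) comes from \Cref{lem:quadratic-growth}, and monotonicity comes from convexity of $F_\ell$. It is also worth confirming that the test is bounded in $[-1,1]$: $|g(p)-g(q)|\le 2|p-q|\le 2$, so the division by $2$ keeps it in range. Once membership is established, the rest is the same cancellation already used in \Cref{thm:single-lipschitz}.
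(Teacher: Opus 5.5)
Your proposal is correct and follows the paper's proof essentially line for line. Like the paper, you normalize to $g_\ell=\Delta_\ell-\Delta_\ell(0)\in\cM_2$, apply \eqref{eq:lipschitz-key-inequality} with $L=1$ to get $\SwapReg_T(\ell,\phi)\le 2\Bias_T-\Mass_T$, and let $c\le 1/2$ absorb the mass term, with the distributional case handled identically by taking expectations.
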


\begin{proof}[Proof of \Cref{thm:monotone-transform}]
Fix such a loss and let $g_\ell(p)=\Delta_\ell(p)-\Delta_\ell(0)$.  By the
proof of \Cref{lem:quadratic-growth} with $L=1$, $\Delta_\ell$ is nonincreasing
and $2$-Lipschitz, so $g_\ell\in\cM_2$.  By \Cref{eq:lipschitz-key-inequality}
with $L=1$, writing $a_\phi(x,p)=(g_\ell(p)-g_\ell(\phi(p)(x)))/2$ for the
corresponding test in $\cG_{\mathrm{mon}}(\cH)$,
\[
  \ell(p,y)-\ell(\phi(p)(x),y)
  \le2(y-p)a_\phi(x,p)-a_\phi(x,p)^2.
\]
Summing over the transcript and using $(c,a)$ second-order
multicalibration with $c\le1/2$,
\[
  \SwapReg_T(\ell,\phi)
  \le(2c-1)\Mass_T(a_\phi)+2a
  \le2a.
\]
Take the supremum over $\phi$.  The distributional proof is identical.
\end{proof}

To make the offline procedure finite, we approximate the transform class by a
uniform cover.  The following crude entropy estimate is enough for the rate
claimed in \Cref{thm:offline-all-lipschitz}.

\begin{proposition}[A crude cover of monotone Lipschitz transforms]
\label{prop:M2-cover}
There is a universal constant $C_0$ such that, for every
$\rho\in(0,1)$,
\[
  \log\mathcal N_\infty(\rho,\cM_2)
  \le\frac{C_0}{\rho}\log\!\left(\frac{C_0}{\rho}\right).
\]
If $\cH$ is finite, then
\[
  \mathcal N_\infty(\rho,\cG_{\mathrm{mon}}(\cH))
  \le|\cH|^{N+1}\mathcal N_\infty(\rho,\cM_2).
\]
\end{proposition}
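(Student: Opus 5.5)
We prove the two claims separately: an explicit grid construction for the cover of $\cM_2$, and a product argument for $\cG_{\mathrm{mon}}(\cH)$.

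\textbf{Cover of $\cM_2$.} Every $g\in\cM_2$ satisfies $g(0)=0$, is nonincreasing and is $2$-Lipschitz, so $g$ maps $[0,1]$ into $[-2,0]$. Fix a spacing $\tau\asymp\rho$, say $\tau=\rho/8$, and $K=\lceil1/\tau\rceil$ knots $u_k=k/K$. To each $g$ we attach a piecewise-linear approximant $\widetilde g$ as follows.

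1. Set $\widetilde g(0)=0$.
2. At each knot, define $\widetilde g(u_k)$ by rounding $g(u_k)$ to the value grid $\tau\mathbb Z\cap[-2,0]$.
3. Interpolate linearly between consecutive knots.

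For $u\in[u_k,u_{k+1}]$, the Lipschitz bound gives $|g(u)-g(u_k)|\le2/K\le2\tau$. Rounding costs at most $\tau$ at each knot, and linear interpolation keeps the error within $\tau+2\tau$ between knots. The resulting uniform error is a constant multiple of $\tau$, which is at most $\rho$ by the choice of constants.

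Counting the approximants: the consecutive increments $g(u_{k+1})-g(u_k)$ lie in $[-2/K,0]$. After rounding, each rounded increment lies in a set of $O(1)$ multiples of $\tau$, because $2/K\asymp\tau$ and rounding adds at most one extra step on either side. The number of approximants is therefore at most $C^K$, which already gives $\log\mathcal N_\infty\le C_0/\rho$.

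If one prefers not to track the increments, bound the count by $(2/\tau+1)^{K}$ instead. This gives $\log\mathcal N_\infty\le (C/\rho)\log(C/\rho)$, which is exactly the stated crude bound. I would present this simpler count, since it matches the statement.

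One subtlety is that the approximant need not itself lie in $\cM_2$. This does not matter, because covers need not be proper. The paper's definition of a uniform $\rho$-cover only asks for a finite set of functions with sup-distance at most $\rho$.

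\textbf{Cover of $\cG_{\mathrm{mon}}(\cH)$.} Given $\widetilde\cM$, a $\rho$-cover of $\cM_2$, take the class
\[
\{(x,p)\mapsto(\widetilde g(p)-\widetilde g(\phi(p)(x)))/2:\widetilde g\in\widetilde\cM,\ \phi\in\Phi_N\}.
\]
Its cardinality is at most $|\Phi_N|\,|\widetilde\cM|=|\cH|^{N+1}\mathcal N_\infty(\rho,\cM_2)$.

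For the approximation error, fix $g\in\cM_2$ and $\phi\in\Phi_N$, and pair $g$ with its cover element $\widetilde g$ while keeping the same $\phi$. Then
\[
\bigl|\tfrac{g(p)-g(\phi(p)(x))}2-\tfrac{\widetilde g(p)-\widetilde g(\phi(p)(x))}2\bigr|\le\tfrac{\rho+\rho}2=\rho,
\]
which uses that $\phi(p)(x)\in[0,1]$ lies in the domain of $g$.

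The only mildly delicate step is the counting in the first part, and choosing $\tau$ so that the constants close. This is routine.
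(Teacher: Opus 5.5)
Your proof is correct and follows the paper's argument. You place knots at spacing of order $\rho$, quantize the values into $[-2,0]$, and linearly interpolate to get an $(O(1/\rho))^{O(1/\rho)}$ count. For $\cG_{\mathrm{mon}}(\cH)$ you approximate only $g$ while keeping the selector $\phi$ fixed. Two of your remarks are valid refinements the paper does not spell out: counting rounded increments removes the logarithm, and the interpolants stay $[-2,0]$-valued, so the induced cover tests remain $[-1,1]$-valued as the algorithm requires.
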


\begin{proof}[Proof of \Cref{prop:M2-cover}]
Partition $[0,1]$ into intervals of length at most $\rho/8$.  Quantize the
value of $g$ at each grid point to a mesh of width $\rho/4$ in $[-2,0]$ and
linearly interpolate.  Lipschitzness and quantization give uniform error at
most $\rho/2$ after adjusting constants.  There are $O(1/\rho)$ grid points
and $O(1/\rho)$ possible quantized values at each point, proving the first
claim.  For the second, approximate only $g$; the same approximation works for
every difference $[g(p)-g(h(x))]/2$.
\end{proof}

\begin{remark}[Choice of cover radius]
Because the final reduction sums over $N+1$ buckets, a uniform cover radius of
order $\eps$ contributes order $\eps$ directly in the offset term.  For smooth
transform families, the logarithm of the cover size often grows only
logarithmically in $1/\rho$.  For the class of all monotone Lipschitz
transforms, a crude uniform cover is larger; sharper distributional covers are
an important direction for improvement.
\end{remark}

We now combine the reduction and the cover to prove the rate theorem stated
at the beginning of the subsection.

\begin{proof}[Proof of \Cref{thm:offline-all-lipschitz}]
Let $\widetilde\cG$ be a uniform $\rho$-cover of
$\cG_{\mathrm{mon}}(\cH)$ and run the offline algorithm of
\Cref{thm:online-to-batch} on $\widetilde\cG$ with rate $\eta=1/20$.  This
gives distributional $(5\eta,\bar a_m)$ second-order multicalibration for
$\widetilde\cG$ with
\[
  \bar a_m
  =
  \frac{
    2\log(12|\widetilde\cG|/\delta)+2m/N^2
  }{\eta m}.
\]
By \Cref{lem:cover-transfer}, the output satisfies distributional
$(\bar c_m,\bar a_m')$ second-order multicalibration for
$\cG_{\mathrm{mon}}(\cH)$ with
\[
  \bar c_m=10\eta=\frac12,
  \qquad
  \bar a_m'
  =\bar a_m+(10\eta+1)\rho
  =\frac{\Lambda_m}{\eta m}+(10\eta+1)\rho,
\]
where
\[
  \Lambda_m
  =
  2\log\!\left(\frac{12|\widetilde\cG|}{\delta}\right)
  +\frac{2m}{N^2}.
\]
\Cref{thm:monotone-transform} therefore gives, simultaneously for all
proper losses with $1$-Lipschitz partial losses,
\[
  \SwapAgn_\cD(\ell,\cH)
  \le2\bar a_m'.
\]
Using \Cref{prop:M2-cover},
\[
  \log|\widetilde\cG|
  \le (N+1)\log|\cH|
  +O\!\left(\frac1\rho\log\frac1\rho\right).
\]
Optimizing the grid size gives
$N\asymp(m/\log|\cH|)^{1/3}$, contributing
$\widetilde O((\log|\cH|/m)^{2/3})$.  Optimizing the transform-cover radius
gives $\rho\asymp m^{-1/2}$ up to logarithmic factors, contributing
$\widetilde O(m^{-1/2})$.  The remaining confidence term contributes
$O(\log(1/\delta)/m)$.
\end{proof}

\section{Bounded Proper Losses}
\label{sec:bounded}

A bounded proper loss need not be Lipschitz (equivalently, by
\Cref{cor:lipschitz-smooth-equivalence}, its negative Bayes risk need not be
smooth): its slope may jump, so \Cref{lem:quadratic-growth} need not hold for
any finite constant, and the single-loss and finite-family arguments of
\Cref{sec:proper-families} do not apply.  Worse, even the trick that handled
a continuum of Lipschitz losses -- covering the loss family in the sup norm
and running the finite-family algorithm on the cover
(\Cref{thm:offline-all-lipschitz}) -- breaks down here.  A threshold is a
discontinuity, not a Lipschitz parameter: two V-shaped losses with
arbitrarily close thresholds disagree by a constant amount at every point
strictly between the two thresholds, so the V-shaped family admits no finite
cover of this kind at all.  This is why passing from finitely many
thresholds to every threshold in $[0,1]$ needs a genuinely new ingredient --
a finite-range assumption on the comparator class $\cH$, in place of an
approximation argument -- and why this section has more moving parts than
the Lipschitz and convex-Lipschitz families treated elsewhere in the paper.

We record the known V-shaped basis decomposition and state our main
bounded-loss rate, \Cref{thm:bounded-proper-rates}, in
\Cref{subsec:v-decomposition}.  \Cref{subsec:v-finite} proves the rate for
any finite set of thresholds by applying the square-root form of
\Cref{alg:approachability} to V-shaped basis losses.
\Cref{subsec:v-full-range} is where the new ingredient appears: it extends
the finite-threshold guarantee to \emph{every} threshold in $[0,1]$, exactly
and with no approximation, under the finite-range assumption.
\Cref{subsec:v-assembly} integrates this over the representation of
\Cref{lem:proper-measure-representation} to complete the proof, and
\Cref{subsec:v-lower} gives matching lower bounds.

\subsection{The V-shaped decomposition and the main theorem}
\label{subsec:v-decomposition}

For $v\in[0,1]$ and $\tau\in[-1,1]$, define
\begin{equation}
  \Th_{v,\tau}(u)
  =\begin{cases}
    1,&u<v,\\
    \tau,&u=v,\\
    -1,&u>v,
  \end{cases}
  \qquad
  \ell_{v,\tau}(u,y)=(y-v)\Th_{v,\tau}(u).
  \label{eq:v-primitive}
\end{equation}
The tie parameter is needed only to represent the value of a proper loss at a
kink of its Bayes risk.  The canonical choice $\tau=1$ recovers the convention
$2\1\{u\le v\}-1$.

\begin{lemma}[Measure representation of bounded proper losses]
\label{lem:proper-measure-representation}
Let $\ell$ be proper and satisfy $|\ell(p,y)|\le M$.  Then there are
constants $c,d\in\R$, a finite nonnegative Borel measure $\mu_\ell$ on
$[0,1]$, and tie parameters $\tau_v\in[-1,1]$ for the atoms of $\mu_\ell$
such that
\begin{equation}
  \ell(p,y)
  =c+dy+\int_{[0,1]}\ell_{v,\tau_v}(p,y)\,d\mu_\ell(v).
  \label{eq:proper-measure-representation}
\end{equation}
Moreover,
\[
  \mu_\ell([0,1])\le2M.
\]
The values of $\tau_v$ on the nonatomic part of $\mu_\ell$ are irrelevant.
This V-shaped threshold decomposition of bounded proper losses is due to
\citet{kleinberg2023ucalibration} and \citet{li2022optimization}.
\end{lemma}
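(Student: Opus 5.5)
The plan is to start from the canonical representation in the proof of \Cref{lem:proper-identity}. Since $\ell(p,y)=L_p(p)+(y-p)\Delta_\ell(p)$, we have
\[
  \ell(p,y)=-F_\ell(p)+(y-p)\Delta_\ell(p),
\]
where $G=-\Delta_\ell$ is a nondecreasing selection of $\partial F_\ell$. Boundedness gives $|\Delta_\ell|\le 2M$, so $G$ is a bounded monotone function on $[0,1]$. I would define $\mu_\ell$ as half the Lebesgue--Stieltjes measure of $G$, with the endpoint behaviour handled as described below. Then
\[
  \mu_\ell([0,1])\le\tfrac12\big(\sup\Delta_\ell-\inf\Delta_\ell\big)\le\tfrac12(2M+2M)=2M,
\]
which is the claimed mass bound.

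Next, I would split each basis loss into an outcome-dependent part and a drift part:
\[
  \ell_{v,\tau}(p,y)=(y-p)\Th_{v,\tau}(p)+(p-v)\Th_{v,\tau}(p).
\]
Because $\Th_{v,\tau}(p)=\operatorname{sign}(v-p)$ whenever $p\ne v$, and the $\tau$ term vanishes when $p=v$, the second piece equals $-|p-v|$. Integrating gives
\[
  \int\ell_{v,\tau_v}(p,y)\,d\mu_\ell(v)=(y-p)S(p)-\int|p-v|\,d\mu_\ell(v),
  \qquad
  S(p)=\int\Th_{v,\tau_v}(p)\,d\mu_\ell(v).
\]
It then suffices to verify two identities for some constants $\alpha,\beta$:
\[
  F_\ell(p)=\int|p-v|\,d\mu_\ell(v)+\alpha+\beta p,
  \qquad
  \Delta_\ell(p)=S(p)-\beta.
\]
Substituting these, the terms $-\beta p$ and $+\beta p$ cancel, and we obtain $\ell(p,y)=-\alpha-\beta y+\int\ell_{v,\tau_v}\,d\mu_\ell$. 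This is \eqref{eq:proper-measure-representation} with $c=-\alpha$ and $d=-\beta$.

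For the second identity, $S(p)=\mu_\ell((p,1])-\mu_\ell([0,p))+\tau_p\mu_\ell(\{p\})$. Between atoms, $S$ decreases at rate $2\mu_\ell$, matching $G=-\Delta_\ell$ increasing by $2\mu_\ell$. At an atom $v$, $S(v)$ can be any value between the one-sided limits as $\tau_v$ ranges over $[-1,1]$. The actual value $\Delta_\ell(v)$ lies in that jump interval because $\Delta_\ell$ is monotone, so I would choose $\tau_v$ to match it. This is exactly why tie parameters are needed. The first identity follows from the second, since $\frac{d}{dp}\int|p-v|\,d\mu_\ell=-S(p)$ away from atoms and $F_\ell'=-\Delta_\ell$ almost everywhere. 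Both sides are convex and absolutely continuous, so they agree up to an affine term.

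The main obstacle is the endpoints. $\Delta_\ell(0)$ may differ from $\lim_{p\downarrow0}\Delta_\ell(p)$, and similarly at $1$, because $\partial F_\ell$ at a boundary point is a half-line truncated by the chosen selection. I would handle this by placing atoms at $v=0$ and $v=1$ equal to half the boundary jumps, with $\tau_0$ and $\tau_1$ chosen to reproduce $\Delta_\ell(0)$ and $\Delta_\ell(1)$. An atom at $0$ contributes $|p|\mu_\ell(\{0\})$ to the drift part, which is linear in $p$ and is absorbed into $\alpha+\beta p$; the atom at $1$ behaves the same way. These endpoint atoms are still bounded by the total variation of $\Delta_\ell$, so the bound $\mu_\ell([0,1])\le2M$ survives.
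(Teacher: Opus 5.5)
Your proposal is correct and follows essentially the same route as the paper: take $\mu_\ell$ to be half the curvature (Lebesgue--Stieltjes) measure of the nondecreasing selection $-\Delta_\ell\in\partial F_\ell$, write $F_\ell$ as an affine function plus $\int|p-v|\,d\mu_\ell(v)$, choose the tie parameters at atoms so the basis slopes reproduce $\Delta_\ell$, and bound the mass by $\tfrac12(\Delta_\ell(0)-\Delta_\ell(1))\le 2M$. Your explicit atoms at $v=0$ and $v=1$ for the boundary jumps of the selection (which really occur, e.g.\ when $-\Delta_\ell(0)<F_\ell'(0^+)$) are left implicit in the paper's appeal to the distributional second derivative, so keep them; the continuity of $F_\ell$ on the closed interval, which your absolute-continuity step uses, follows because $F_\ell$ is $2M$-Lipschitz as a supremum of affine functions with slopes bounded by $2M$.
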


\begin{theorem}[Bounded proper losses]
\label{thm:bounded-proper-rates}
Suppose there is a finite set $\Gamma_\cH\subseteq[0,1]$ with
$h(\cX)\subseteq\Gamma_\cH$ for every $h\in\cH$, and write
$K=|\Gamma_\cH|$.  The localized algorithm based on
\Cref{alg:approachability} has a choice of prediction grid such that, with
probability at least $1-\delta$, it is simultaneously swap-agnostic for all
proper losses bounded in $[-1,1]$:
\[
  \sup_{\ell:\ |\ell|\le1}
  \SwapReg_T(\ell,\cH)
  =
  \widetilde O\!\left(
    \sqrt{T\left(\log|\cH|+\log K+\log\frac1\delta\right)}
  \right).
\]
The corresponding historical-predictor conversion satisfies
\[
  \sup_{\ell:\ |\ell|\le1}
  \SwapAgn_\cD(\ell,\cH)
  =
  \widetilde O\!\left(
    \sqrt{\frac{\log|\cH|+\log K+\log(1/\delta)}m}
  \right).
\]
\end{theorem}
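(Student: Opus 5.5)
The plan is to reduce to V-shaped basis losses via \Cref{lem:proper-measure-representation}, and then control every basis loss uniformly with a localized, square-root form of second-order multicalibration.

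\textbf{Reduction to basis losses.} Fix a proper $\ell$ with $|\ell|\le1$ and write it as in \eqref{eq:proper-measure-representation}. The affine part $c+dy$ does not depend on the prediction, so it cancels in every comparison $\ell(p_t,y_t)-\ell(\phi(p_t)(x_t),y_t)$. Hence
\[
\SwapReg_T(\ell,\phi)=\int\SwapReg_T(\ell_{v,\tau_v},\phi)\,d\mu_\ell(v)\le 2\sup_{v,\tau}\SwapReg_T(\ell_{v,\tau},\cH),
\]
using $\mu_\ell([0,1])\le2$. Since $\ell_{v,\tau}$ is affine in $\tau$, for fixed $v$ the swap regret is a supremum of affine functions of $\tau$, hence convex in $\tau$, so it suffices to take $\tau\in\{-1,1\}$. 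It therefore suffices to bound $\SwapReg_T(\ell_{v,\tau},\cH)$ uniformly over all thresholds $v\in[0,1]$.

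\textbf{Per-bucket bound for one threshold.} For a single V-loss, split $y-v=(y-p)+(p-v)$:
\[
\ell_{v,\tau}(p,y)-\ell_{v,\tau}(q,y)=(y-p)\bigl(\Th(p)-\Th(q)\bigr)+(p-v)\bigl(\Th(p)-\Th(q)\bigr).
\]
The second term is $\le -|p-v|\,|\Th(p)-\Th(q)|\le -\tfrac12|p-v|(\Th(p)-\Th(q))^2$, because $\Th$ is nonincreasing. This is the analogue of \Cref{lem:v-drift}.

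I would run \Cref{alg:approachability} in the scaled form of \Cref{cor:sqrt-somc} on the localized tests
\[
g_{i,v,\tau,h}(x,p)=\1\{p=\gam_i\}\bigl(\Th_{v,\tau}(p)-\Th_{v,\tau}(h(x))\bigr)/2 .
\]
Each test depends on $\phi$ only through $h_i$, so the class has size $O(N\,|\cH|\,|V|)$ and its log-size stays small; there is no $(N+1)\log|\cH|$ term. Writing $M_i$ for the mass of bucket $i$'s test and $d_i=|\gam_i-v|$, bucket $i$ contributes at most $2C\sqrt{\Lambda_TM_i}+2C\Lambda_T-2d_iM_i$. This is at most
\[
O\bigl(\min\{\sqrt{\Lambda_TM_i},\,\Lambda_T/d_i\}\bigr)+2C\Lambda_T .
\]

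\textbf{Harmonic sum and choice of grid.} Split the buckets at distance $r=1/N$ from $v$ (\Cref{lem:harmonic-drift}).
\begin{itemize}[leftmargin=*]
\item \emph{Near buckets.} There are $O(1)$ of them, so they give $O(\sqrt{\Lambda_TT})$.
\item \emph{Far buckets.} They give $\sum_i\Lambda_T/d_i=O(\Lambda_TN\log N)$.
\item \emph{Additive terms.} These add $O(N\Lambda_T)$.
\end{itemize}
Here $\Lambda_T$ contains $2T/N^2$. Taking $N\asymp\sqrt{T/\log(|\cH|K/\delta)}$ makes every piece $\widetilde O\bigl(\sqrt{T(\log|\cH|+\log K+\log(1/\delta))}\bigr)$. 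For the offline statement I would repeat the same computation with \Cref{cor:offline-sqrt-somc} in place of \Cref{cor:sqrt-somc}, dividing by $m$.

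\textbf{Main obstacle: all thresholds at once.} I would handle a continuum of $v$ using the finite-range assumption (\Cref{cor:v-family-full-range}). For $p\in\Gamma_N$ and $h(x)\in\Gamma_\cH$, the difference $\Th_{v,\tau}(p)-\Th_{v,\tau}(h(x))$ depends on $v$ only through its position relative to the finite set $\Gamma_N\cup\Gamma_\cH$: which open cell it lies in, or which point it equals. So the bias terms take only $O(N+K)$ distinct test patterns (times two values of $\tau$), and I would include one representative test per pattern, which adds only $\log(N+K)$.

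The delicate point is the drift. Within a cell the pattern is fixed, but $d_i=|\gam_i-v|$ still varies with $v$. I would bound the drift below using the exact $v$ in the harmonic-sum step, since only the bias is controlled through the representative test. Near buckets, where $d_i$ may be tiny, are already absorbed by the $\sqrt{\Lambda_TM_i}$ branch of the minimum, so the bound should go through exactly, with no approximation error.
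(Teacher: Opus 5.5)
Your proposal is correct and follows essentially the paper's route: the V-shaped measure representation, localized per-bucket tests run through the square-root form of \Cref{alg:approachability}, the harmonic drift bound with $N\asymp\sqrt{T/\Lambda_T}$, the exact representative-threshold argument of \Cref{cor:v-family-full-range} (bias controlled through the representative test, drift through the true $v$), and \Cref{cor:offline-sqrt-somc} offline. The only cosmetic difference is that you reduce to $\tau\in\{-1,1\}$ by convexity of swap regret in $\tau$ rather than by splitting atoms as in \Cref{rem:extreme-ties}, and that is equally valid.
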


The proof of \Cref{lem:proper-measure-representation} is given in
\Cref{subsec:v-finite} below, alongside the basis-loss drift calculation;
the proof of \Cref{thm:bounded-proper-rates} itself is assembled in
\Cref{subsec:v-assembly}.

\subsection{Controlling a finite set of thresholds}
\label{subsec:v-finite}

The proof starts with a drift inequality for a single V-shaped basis loss.
This is the main piece of machinery that lets the localized algorithm handle
nonsmooth losses.

\begin{lemma}[V-shaped basis-loss drift]
\label{lem:v-drift}
The loss $\ell_{v,\tau}$ is proper.  For $p,q\in[0,1]$, define
\[
  g_{v,\tau}(p,q)
  =\frac12\left(
    \Th_{v,\tau}(p)-\Th_{v,\tau}(q)
  \right)\in[-1,1].
\]
Then, for every $y\in\{0,1\}$,
\begin{equation}
  \ell_{v,\tau}(p,y)-\ell_{v,\tau}(q,y)
  \le2(y-p)g_{v,\tau}(p,q)
    -2|p-v|g_{v,\tau}(p,q)^2.
  \label{eq:v-drift}
\end{equation}
If neither $p$ nor $q$ equals $v$, equality holds.
\end{lemma}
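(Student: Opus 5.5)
The plan is to compute everything exactly from the definition \eqref{eq:v-primitive}. I will not invoke \Cref{lem:proper-identity} as a black box. The V-shaped loss is affine in $y$ with a piecewise-constant slope, so a direct decomposition is cleaner. I will write $\Th$ for $\Th_{v,\tau}$ and $g$ for $g_{v,\tau}(p,q)$.

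\emph{Properness.} I compute the conditional risk $L_p(q)=(1-p)\ell_{v,\tau}(q,0)+p\,\ell_{v,\tau}(q,1)=(p-v)\Th(q)$. Since $\Th(q)\in[-1,1]$, this is minimized over $q$ by any $q$ with $\Th(q)=-\operatorname{sign}(p-v)$ when $p\ne v$. If $p<v$ the minimizer needs $\Th(q)=1$, which holds at $q=p<v$. If $p>v$ it needs $\Th(q)=-1$, which holds at $q=p>v$. If $p=v$ the risk is identically $0$. So $p$ is always a minimizer and the loss is proper.

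\emph{Drift inequality.} First I write
\[
  \ell_{v,\tau}(p,y)-\ell_{v,\tau}(q,y)=(y-v)(\Th(p)-\Th(q))=2(y-v)g=2(y-p)g+2(p-v)g .
\]
It then suffices to show the pointwise bound $(p-v)g\le-|p-v|g^2$, together with equality when $p,q\ne v$. I split into three cases according to the position of $p$.
\begin{itemize}
  \item If $p=v$, both sides vanish.
  \item If $p<v$, then $\Th(p)=1$, so $g=(1-\Th(q))/2\in[0,1]$. Hence $(p-v)g=-|p-v|g\le-|p-v|g^2$, using $g\ge g^2$ on $[0,1]$.
  \item If $p>v$, then $\Th(p)=-1$, so $g=-(1+\Th(q))/2\in[-1,0]$. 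Hence $(p-v)g=-|p-v|\,|g|\le-|p-v|g^2$, using $|g|\ge g^2$.
\end{itemize}
The range claim $g\in[-1,1]$ follows immediately from $\Th\in[-1,1]$.

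\emph{Equality case.} If additionally $q\ne v$, then $\Th(q)\in\{\pm1\}$, so $g\in\{-1,0,1\}$. On this set $|g|=g^2$, and the two inequalities above become equalities. If $p\ne v$ but $q=v$, the tie value $\tau$ can make $|g|$ strictly between $0$ and $1$. This is exactly where the inequality may be strict.

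The only point needing care is this sign bookkeeping. The factor $|p-v|$ appears because $(p-v)$ and $g$ always have opposite signs, so their product equals $-|p-v||g|$. I would verify that sign relation carefully in each case; everything else is routine.
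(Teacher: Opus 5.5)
Your proposal is correct and follows essentially the same route as the paper. Both arguments compute the conditional risk $(p-v)\Th_{v,\tau}(q)$ to get properness, split $2(y-v)g=2(y-p)g+2(p-v)g$, and use that $p-v$ and $g$ have opposite signs together with $|g|\ge g^2$, which is an equality when $|g|\in\{0,1\}$; your explicit case split on the position of $p$ just spells out the sign relation that the paper states in one line.
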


\begin{proof}[Proof of \Cref{lem:v-drift}]
Under a true mean $p$, the expected basis loss at prediction $q$ is
$(p-v)\Th_{v,\tau}(q)$.  If $p<v$, it is minimized by predictions below $v$;
if $p>v$, it is minimized by predictions above $v$; and if $p=v$, every
prediction is optimal.  Thus the loss is proper.

The exact algebra gives
\begin{align*}
  \ell_{v,\tau}(p,y)-\ell_{v,\tau}(q,y)
  &=2(y-v)g_{v,\tau}(p,q)\\
  &=2(y-p)g_{v,\tau}(p,q)
    +2(p-v)g_{v,\tau}(p,q).
\end{align*}
If $p\neq v$ and $g_{v,\tau}(p,q)\neq0$, the sign of
$g_{v,\tau}(p,q)$ is opposite to the sign of $p-v$.  If $p=v$, the
product below is zero.  In either case,
\[
  (p-v)g_{v,\tau}(p,q)
  \le-|p-v|\,|g_{v,\tau}(p,q)|
  \le-|p-v|g_{v,\tau}(p,q)^2.
\]
If neither prediction is tied at $v$, then $|g|\in\{0,1\}$ and equality holds.
\end{proof}

We can now prove the basis decomposition of
\Cref{lem:proper-measure-representation} above.

\begin{proof}[Proof of \Cref{lem:proper-measure-representation}]
Let $F=F_\ell$.  It is convex, and
$-\Delta_\ell(p)\in\partial F(p)$.  Let $D^2F$ be the distributional second
derivative, a finite nonnegative measure, and set
\[
  \mu_\ell=\frac12D^2F.
\]
The standard one-dimensional representation of a convex function gives
\begin{equation}
  F(p)=\alpha+\beta p+
  \int_{[0,1]}|p-v|\,d\mu_\ell(v).
  \label{eq:convex-abs-rep}
\end{equation}
At every atom $v$, choose $\tau_v$ so that the selected subgradient of the
term $\mu_\ell(\{v\})|p-v|$ at $p=v$ equals
$-\mu_\ell(\{v\})\tau_v$.  This is possible because the subdifferential of
$|p-v|$ at $v$ is $[-1,1]$.  With these choices, the selected subgradient of
\eqref{eq:convex-abs-rep} is exactly $-\Delta_\ell(p)$.

The proper loss generated by a convex potential $F$ and selected subgradient
$s(p)\in\partial F(p)$ is
\[
  \ell(p,y)=-F(p)-(y-p)s(p).
\]
For $F_v(p)=|p-v|$ and $s_v(p)=-\Th_{v,\tau_v}(p)$, this generated loss is
exactly $\ell_{v,\tau_v}(p,y)$.  The affine part
$\alpha+\beta p$ contributes the prediction-independent term
$-\alpha-\beta y$.  This proves \eqref{eq:proper-measure-representation}.

Finally, $F'=-\Delta_\ell$ wherever the derivative exists.  Since
$|\ell(p,y)|\le M$, $|\Delta_\ell(p)|\le2M$.  Therefore the total increase of
the monotone derivative $F'$ is at most $4M$, and
\[
  \mu_\ell([0,1])=\frac12D^2F([0,1])\le2M.
\]
\end{proof}

\begin{remark}[Extreme tie values suffice]
\label{rem:extreme-ties}
For every $\tau\in[-1,1]$, $\ell_{v,\tau}=\tfrac{1+\tau}2\ell_{v,1}
+\tfrac{1-\tau}2\ell_{v,-1}$ pointwise, since the two sides agree away from
$u=v$ and, at $u=v$, evaluate to
$\tfrac{1+\tau}2(1)+\tfrac{1-\tau}2(-1)=\tau$.  Splitting each atom of
$\mu_\ell$ at $v$ with mass $w$ and tie $\tau_v$ into mass $w(1+\tau_v)/2$ at
$(v,1)$ and mass $w(1-\tau_v)/2$ at $(v,-1)$ therefore refines the
representation of \Cref{lem:proper-measure-representation} to one using
only the two extreme tie values, without changing the total mass
$\mu_\ell([0,1])$.  We use only $\tau\in\{-1,1\}$ from here on.
\end{remark}

The finite-family theorem will sum the basis-loss bounds over grid points.  The
only extra estimate needed for that summation is the following harmonic
inequality.

\begin{lemma}[Harmonic drift--variance inequality]
\label{lem:harmonic-drift}
Fix $v\in[0,1]$, set $a_i=|\gam_i-v|$, and let
$D_0,\ldots,D_N\ge0$ satisfy $\sum_iD_i\le T$.  For every $R>0$,
\begin{equation}
  \sum_{i=0}^N\left(R\sqrt{D_i}-2a_iD_i\right)
  \le R\sqrt{2T}+\frac{R^2}{4}NH_N,
  \label{eq:harmonic-drift}
\end{equation}
where $H_N=\sum_{k=1}^N1/k$.
\end{lemma}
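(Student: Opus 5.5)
The plan is to bound each summand separately by maximizing over $D_i$, and then to split the indices into the few grid points nearest $v$ and the rest. For a fixed $i$ the summand $R\sqrt{D_i}-2a_iD_i$ is a concave function of $D_i\ge0$. When $a_i>0$, maximizing over $s=\sqrt{D_i}$ gives
\[
  Rs-2a_is^2\le\frac{R^2}{8a_i}.
\]
This bound is useful only when $a_i$ is not too small. For the indices closest to $v$ I will instead drop the negative drift and keep only $R\sqrt{D_i}$.

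\emph{Near indices.} Let $S$ be the set of at most two indices whose grid points $\gam_i$ are adjacent to $v$, that is, the $i$ with $|\gam_i-v|<1/N$ (at most two, or one if $v$ lies on the grid). For $i\in S$ I bound the summand by $R\sqrt{D_i}$. Cauchy--Schwarz then gives
\[
  \sum_{i\in S}R\sqrt{D_i}\le R\sqrt{|S|\sum_iD_i}\le R\sqrt{2T},
\]
which is the first term of \eqref{eq:harmonic-drift}.

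\emph{Far indices.} Order the remaining grid points by their distance to $v$. Because $v$ lies between two consecutive grid points, for each $k\ge1$ there are at most two remaining indices with $a_i\ge k/N$ at the $k$th step: one on each side, and each side's distances grow in steps of $1/N$ starting from at least $1/N$. Summing the per-term bound $R^2/(8a_i)$ over both sides gives
\[
  2\sum_{k=1}^{N}\frac{R^2N}{8k}=\frac{R^2}{4}NH_N,
\]
which is the second term. Every summand is thus covered by one of the two cases, and adding them yields \eqref{eq:harmonic-drift}.

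\emph{Main obstacle.} The delicate step is the indexing. The claim that the $k$th closest far point on each side is at distance at least $k/N$ has to hold both when $v$ is exactly a grid point (then $a_i=0$ there, and that index must go into $S$) and when $v$ sits strictly between two grid points. The cases $v=0$ and $v=1$, where there is only one side, only make the count smaller. I will handle this by writing $v\in[\gam_j,\gam_{j+1}]$, putting $j$ and $j+1$ into $S$, and observing that $\gam_{j-k}$ and $\gam_{j+1+k}$ lie at distance at least $k/N$ from $v$ for $k\ge1$. Only the far indices use the bound $R^2/(8a_i)$, so no division by zero occurs.
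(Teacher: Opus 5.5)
Your proposal is correct and follows the paper's proof step for step. Drop the drift on the at most two grid points nearest $v$ and apply Cauchy--Schwarz to get $R\sqrt{2T}$; bound every other summand by its concave maximum $R^2/(8a_i)$; then sum the reciprocal distances as a two-sided harmonic series to get $\frac{R^2}{4}NH_N$. Your explicit indexing with $v\in[\gam_j,\gam_{j+1}]$, $S=\{j,j+1\}$, and $a_{j-k},a_{j+1+k}\ge k/N$ is a cleaner version of the paper's ``up to constants'' counting and gives $\sum_{i\notin S}1/a_i\le 2NH_N$ exactly.
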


\begin{proof}[Proof of \Cref{lem:harmonic-drift}]
The summand is potentially large only when the grid point $\gam_i$ lies close
to the threshold $v$, because then the negative drift coefficient $a_i$ is
small.  We first isolate those nearby grid points.  There are at most two grid
points with $a_i<1/N$, one on each side of $v$, and their contribution is at
most
\[
  R\sum_{i:a_i<1/N}\sqrt{D_i}
  \le R\sqrt{2\sum_iD_i}
  \le R\sqrt{2T}
\]
by Cauchy--Schwarz.

For every remaining point, the drift term controls large values of $D_i$.
Writing $z=\sqrt D$, the expression
$R\sqrt D-2a_iD=Rz-2a_iz^2$ is a concave quadratic in $z\ge0$.  Its maximum
occurs at $z=R/(4a_i)$, which gives
\[
  R\sqrt D-2a_iD\le\frac{R^2}{8a_i}.
\]

It remains to sum the reciprocal distances.  Away from the two nearest grid
points, the grid spacing implies that on each side of $v$ the distances are
at least $1/N,2/N,\ldots,N/N$, up to constants.  Hence the reciprocal
distances are bounded by a harmonic sum:
\[
  \sum_{i:a_i\ge1/N}\frac1{a_i}\le2NH_N.
\]
Adding the nearby contribution and the far-grid contribution proves the
claim.
\end{proof}

We next state the finite-threshold online theorem that drives the bounded-loss
rate.  Let $\cV$ be a finite set of pairs $(v,\tau)$ and define
\[
  g_{v,\tau,h}(x,p)
  =\frac12\left(
    \Th_{v,\tau}(p)-\Th_{v,\tau}(h(x))
  \right).
\]

\begin{theorem}[Finite V-shaped family]
\label{thm:finite-v-family}
Let $\cH$ and $\cV$ be finite.  Run \Cref{alg:approachability} on the
localized test class
\[
  \cG_{\cV}
  =\left\{
    (x,p)\mapsto
    \1\{p=\gam_i\}g_{v,\tau,h}(x,\gam_i):
    i=0,\ldots,N,\ (v,\tau)\in\cV,\ h\in\cH
  \right\}
\]
using the square-root form \Cref{cor:sqrt-somc}.
There is a universal constant $C'$ such that, with probability at least
$1-\delta$, simultaneously for every $(v,\tau)\in\cV$ and every swap
comparator,
\begin{align}
  \SwapReg_T(\ell_{v,\tau},\cH)
  &\le C'\sqrt{\Lambda_TT}
    +C'\Lambda_TNH_N
    +C'\Lambda_T(N+1),
  \label{eq:v-family-regret}
\end{align}
where $\Lambda_T=\Lambda_T(\cG_{\cV},\delta,N)$ is as in
\Cref{cor:sqrt-somc}.
\end{theorem}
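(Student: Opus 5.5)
Fix $(v,\tau)\in\cV$ and a swap rule $\phi$ with $h_i=\phi(\gam_i)$. I will first decompose the swap regret bucket by bucket using \Cref{lem:v-drift}. For $t\in I_i$ we have $p_t=\gam_i$, and the localized test $g_i(x,p)=\1\{p=\gam_i\}g_{v,\tau,h_i}(x,\gam_i)$ satisfies $g_i(x_t,p_t)=g_{v,\tau}(p_t,h_i(x_t))$ on those rounds and vanishes elsewhere. Summing \eqref{eq:v-drift} over $t\in I_i$ and writing $a_i=|\gam_i-v|$ gives
\[
  \sum_{t\in I_i}\left[\ell_{v,\tau}(p_t,y_t)-\ell_{v,\tau}(h_i(x_t),y_t)\right]
  \le 2\Bias_T(g_i)-2a_i\Mass_T(g_i),
\]
because both $\Bias_T(g_i)$ and $\Mass_T(g_i)$ only collect rounds in $I_i$. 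Summing over $i$ yields $\SwapReg_T(\ell_{v,\tau},\phi)\le\sum_{i=0}^N\left(2\Bias_T(g_i)-2a_i\Mass_T(g_i)\right)$.

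Next I apply \Cref{cor:sqrt-somc} to the class $\cG_\cV$. Each $g_i$ belongs to $\cG_\cV$, and the event is simultaneous over the class, so it is uniform over all $(v,\tau)$ and all $\phi$. On this event, $2\Bias_T(g_i)\le 2C\sqrt{\Lambda_T D_i}+2C\Lambda_T$, where $D_i=\Mass_T(g_i)$. Since $|g_i|\le 1$ and $g_i$ is supported on $I_i$, we get $D_i\le|I_i|$, and therefore $\sum_i D_i\le T$. The additive terms sum to $2C\Lambda_T(N+1)$, which is the last term of \eqref{eq:v-family-regret}.

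The remaining sum $\sum_i\left(2C\sqrt{\Lambda_T}\sqrt{D_i}-2a_iD_i\right)$ is exactly the form handled by \Cref{lem:harmonic-drift} with $R=2C\sqrt{\Lambda_T}$. It is bounded by $2C\sqrt{2\Lambda_T T}+C^2\Lambda_T NH_N$. Collecting the three contributions gives \eqref{eq:v-family-regret} with $C'=\max\{2\sqrt2C,\,C^2,\,2C\}$. Taking the supremum over $\phi$ completes the argument.

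\textbf{Main obstacle.} The step needing the most care is checking that the localization makes the bias and mass of $g_i$ line up exactly with the bucket-$i$ terms, and that the $D_i$ share the single budget $T$. These facts are what let the harmonic lemma absorb the $N+1$ per-bucket square roots into $\sqrt{T}$ plus an $N\log N$ term, rather than paying $\sqrt{N T}$. A minor point is the tie case $p_t=v$, where $a_i=0$. There \Cref{lem:v-drift} still holds, since the drift term is then zero, and \Cref{lem:harmonic-drift} already treats the nearest grid points separately by Cauchy--Schwarz, so no extra argument is needed.
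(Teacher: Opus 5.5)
Your proposal is correct and follows the paper's proof essentially step for step. You apply \Cref{lem:v-drift} bucket by bucket to the localized tests and bound each bias with \Cref{cor:sqrt-somc}, which produces the additive term $2C\Lambda_T(N+1)$. You then invoke \Cref{lem:harmonic-drift} with $R=2C\sqrt{\Lambda_T}$ using $\sum_i D_i\le T$, exactly as the paper does, and arrive at the same constants.
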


\begin{proof}[Proof of \Cref{thm:finite-v-family}]
On the simultaneous second-order event, fix $(v,\tau)$ and a swap rule
$h_i$.  Put
\[
  D_i=\sum_{t\in I_i}g_{v,\tau,h_i}(x_t,\gam_i)^2.
\]
For each bucket $i$, the test $(x,p)\mapsto\1\{p=\gam_i\}g_{v,\tau,h_i}(x,p)$
lies in $\cG_\cV$, and its online bias is exactly
$\sum_{t\in I_i}(y_t-\gam_i)g_{v,\tau,h_i}(x_t,\gam_i)$, since $p_t=\gam_i$
for $t\in I_i$.  By \Cref{lem:v-drift} and \Cref{cor:sqrt-somc} applied to
this test,
\begin{align*}
  \SwapReg_T(\ell_{v,\tau},\phi)
  &\le\sum_i\left(
    2C\sqrt{\Lambda_TD_i}
    -2|\gam_i-v|D_i
  \right)+2C\Lambda_T(N+1),
\end{align*}
where $C$ is the constant of \Cref{cor:sqrt-somc}.  Since the buckets are
disjoint and $|g|\le1$, $\sum_iD_i\le T$.  Apply \Cref{lem:harmonic-drift}
with $R=2C\sqrt{\Lambda_T}$ to bound the sum by
$2C\sqrt{2\Lambda_TT}+C^2\Lambda_TNH_N$, which proves
\eqref{eq:v-family-regret} for a suitable universal $C'$.
\end{proof}

\subsection{From finitely many thresholds to every threshold}
\label{subsec:v-full-range}

\Cref{thm:finite-v-family} controls only the finitely many thresholds in
$\cV$, while \Cref{lem:proper-measure-representation} represents a bounded
proper loss using a threshold measure supported on all of $[0,1]$.  Bridging
this gap by approximating the continuum of thresholds with a uniform
$\rho$-cover of the \emph{loss} $\ell_{v,\tau}$ does not work: for any
$v\ne v'$, taking $p$ strictly between them gives
$\Th_{v,\tau}(p)\ne\Th_{v',\tau'}(p)$, so
$\sup_{p,y}|\ell_{v,\tau}(p,y)-\ell_{v',\tau'}(p,y)|$ is bounded away from
$0$ however close $v$ and $v'$ are -- the threshold is a discontinuity in
$v$, not a Lipschitz parameter, so the V-shaped family admits no finite
$\rho$-cover in this sense for any $\rho<2$.  Instead, we use the following
assumption to control every threshold \emph{exactly}, with no approximation
at all.

Only in this section, assume that there is a finite set
$\Gamma_\cH\subseteq[0,1]$ with $h(\cX)\subseteq\Gamma_\cH$ for every
$h\in\cH$, and write $K=|\Gamma_\cH|$.  Unlike the prediction grid
$\Gamma_N$, which is a modeling choice of the learner, $\Gamma_\cH$
constrains the comparator class itself; it is natural for quantized score
classes but excludes classes with a continuum of possible outputs.  Let
$\cS_N=\Gamma_N\cup\Gamma_\cH$.

\begin{corollary}[Every threshold from a finite range]
\label{cor:v-family-full-range}
Under the finite-range assumption above, run \Cref{thm:finite-v-family} with
$\cV=\cS_N\times\{-1,1\}$, so $|\cV|\le2(N+1+K)$.  Then, on the resulting
event of probability at least $1-\delta$,
\eqref{eq:v-family-regret} (with $\cV=\cS_N\times\{-1,1\}$ in the definition
of $\Lambda_T$) holds simultaneously for \emph{every} $v\in[0,1]$,
$\tau\in\{-1,1\}$, and every swap comparator -- not merely the pairs already
in $\cV$.
\end{corollary}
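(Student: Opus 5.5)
The plan is to show that for an arbitrary threshold $v\in[0,1]$ and $\tau\in\{-1,1\}$, the swap regret $\SwapReg_T(\ell_{v,\tau},\phi)$ coincides with, or is dominated by, the swap regret of some pair $(v',\tau')\in\cV$ against the same comparator. The key observation is that all quantities in the regret depend on the threshold only through the function $\Th_{v,\tau}$ restricted to the finite set $\cS_N$. This is because predictions $p_t$ lie in $\Gamma_N$ and comparator values $\phi(p_t)(x_t)$ lie in $\Gamma_\cH$ by the finite-range assumption. We will also use that $\ell_{v,\tau}(u,y)=(y-v)\Th_{v,\tau}(u)$, so
\[
  \ell_{v,\tau}(p,y)-\ell_{v,\tau}(q,y)=2(y-v)g_{v,\tau}(p,q),
\]
which depends on $v$ linearly through the factor $(y-v)$ and otherwise only through $\Th_{v,\tau}$ on $\cS_N$.

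If $v\in\cS_N$, the pair $(v,\tau)$ already lies in $\cV$ and there is nothing to prove. Otherwise $v$ lies strictly between two consecutive points $s<s'$ of $\cS_N\cup\{0,1\}$, or it lies beyond the extreme points. For every $u\in\cS_N$ we then have $\Th_{v,\tau}(u)=\Th_{s',1}(u)=\Th_{s,-1}(u)$: points $u\le s$ give $1$, and points $u\ge s'$ give $-1$. Hence the localized tests $\1\{p=\gam_i\}g_{v,\tau,h}$ restricted to $\Gamma_N\times\cX$ coincide with those for $(s',1)$, which lie in $\cG_\cV$. (At the boundary, if $v>\max\cS_N$ then $\Th_{v,\tau}\equiv1$ on $\cS_N$ and the regret is $0$; similarly below.) Since the tests coincide, the event of \Cref{cor:sqrt-somc} already controls $|\Bias|$ of each bucket test for $v$ with the same $\Lambda_T$.

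Next I will rerun the proof of \Cref{thm:finite-v-family} verbatim for this $v$. The identity $\ell_{v,\tau}(p,y)-\ell_{v,\tau}(q,y)=2(y-p)g+2(p-v)g$ together with the drift bound of \Cref{lem:v-drift} holds for the actual $v$. This gives
\[
  \SwapReg_T(\ell_{v,\tau},\phi)\le\sum_i\bigl(2C\sqrt{\Lambda_TD_i}-2|\gam_i-v|D_i\bigr)+2C\Lambda_T(N+1),
\]
with the $D_i$ identical to those for $(s',1)$. \Cref{lem:harmonic-drift} is stated for every $v\in[0,1]$, so it applies directly with this $v$ and yields \eqref{eq:v-family-regret}.

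The main obstacle, and the point to check carefully, is that nothing in the argument secretly requires $v$ itself to be indexed by a test. The bias control is only needed for the test function, which is shared with a pair in $\cV$, while the drift coefficient $|\gam_i-v|$ and the harmonic lemma are used for the true $v$. One subtlety is the tie handling when $v\in\Gamma_\cH\setminus\Gamma_N$ or vice versa: including both $\tau=\pm1$ for every point of $\cS_N$ covers all ties. This is why $\cV=\cS_N\times\{-1,1\}$ is chosen, and why $|\cV|\le 2(N+1+K)$ enters $\Lambda_T$ only logarithmically.
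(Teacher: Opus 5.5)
Your strategy is the same as the paper's. You replace $(v,\tau)$ by a representative in $\cV$ whose threshold function agrees with $\Th_{v,\tau}$ on $\cS_N$. By the finite-range assumption, every localized bucket test for $v$ then coincides on the realized transcript with a test in $\cG_\cV$. You keep the drift coefficient $|\gam_i-v|$ of \Cref{lem:v-drift} and \Cref{lem:harmonic-drift} at the true $v$.

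However, the representatives you name have their tie values reversed. Your matching claim $\Th_{v,\tau}(u)=\Th_{s',1}(u)=\Th_{s,-1}(u)$ for all $u\in\cS_N$ fails exactly at the endpoints:
\begin{itemize}
\item Since $s'>v$, we have $\Th_{v,\tau}(s')=-1$, but $\Th_{s',1}(s')=\tau=1$. Your justification that ``points $u\ge s'$ give $-1$'' is false at $u=s'$.
\item Since $s<v$, we have $\Th_{v,\tau}(s)=1$, but $\Th_{s,-1}(s)=-1$.
\end{itemize}
This is not cosmetic, because $s'\in\cS_N$ can be realized as a grid value or as a comparator value. For instance, on a round with $\gam_i\le s$ and $h(x_t)=s'$, the $v$-test equals $\tfrac12(1-(-1))=1$, while the $(s',1)$-test equals $\tfrac12(1-1)=0$. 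So the second-order control you invoke for $(s',1)$ says nothing about the bucket test you actually need.

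The fix is to put each neighbor on the side of $v$ where it actually lies. Use $(s',-1)$, which is the paper's $(v_{\mathrm{rep}},-1)$ with $v_{\mathrm{rep}}$ the smallest point of $\cS_N$ that is $\ge v$. Alternatively use $(s,1)$. With either choice the threshold functions agree on all of $\cS_N$, and the rest of your argument goes through as you describe.

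Two smaller points. Since $0,1\in\Gamma_N\subseteq\cS_N$, the neighbors $s,s'$ always exist in $\cS_N$, so your boundary case is vacuous. Also, both tie values are needed in $\cV$ for two reasons: to cover $v\in\cS_N$ with arbitrary $\tau$, and to make this off-grid matching possible. The latter is exactly the step where the sign got flipped.
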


\begin{proof}[Proof of \Cref{cor:v-family-full-range}]
Fix $v\in[0,1]$, $\tau\in\{-1,1\}$, and a swap rule $\phi$ with
$h_i=\phi(\gam_i)$.  Let $v_{\mathrm{rep}}$ be the smallest element of
$\cS_N$ with $v_{\mathrm{rep}}\ge v$ (taking $v_{\mathrm{rep}}=1$ if
$v>\max\cS_N$), and let $\tau_{\mathrm{rep}}=\tau$ if $v_{\mathrm{rep}}=v$
and $\tau_{\mathrm{rep}}=-1$ otherwise.  For every $z\in\cS_N$,
$\Th_{v_{\mathrm{rep}},\tau_{\mathrm{rep}}}(z)=\Th_{v,\tau}(z)$: if
$v_{\mathrm{rep}}=v$ this is immediate, and otherwise every $z\in\cS_N$
satisfies $z<v_{\mathrm{rep}}$ or $z>v_{\mathrm{rep}}$ (there is no point of
$\cS_N$ strictly between $v$ and $v_{\mathrm{rep}}$, by minimality of
$v_{\mathrm{rep}}$), so $z<v_{\mathrm{rep}}\Leftrightarrow z\le v
\Leftrightarrow z<v$ (using $z\ne v$ since $v_{\mathrm{rep}}\ne v$ means
$v\notin\cS_N$) gives $\Th_{v_{\mathrm{rep}},-1}(z)=\Th_{v,\tau}(z)$ in both
branches.

Since $\gam_i\in\Gamma_N\subseteq\cS_N$ and, by the finite-range assumption,
$h_i(x_t)\in\Gamma_\cH\subseteq\cS_N$ for every $t$, this gives
$g_{v,\tau,h_i}(x_t,\gam_i)=g_{v_{\mathrm{rep}},\tau_{\mathrm{rep}},h_i}
(x_t,\gam_i)$ for every $t$, so the two tests have identical realized bias
and mass on this transcript.  Since $(v_{\mathrm{rep}},\tau_{\mathrm{rep}})
\in\cV$, the guarantee of \Cref{thm:finite-v-family} for
$(v_{\mathrm{rep}},\tau_{\mathrm{rep}})$ therefore controls
$D_i=\sum_{t\in I_i}g_{v,\tau,h_i}(x_t,\gam_i)^2$ exactly as in that proof.
The remainder of the argument is unchanged: the exact drift identity of
\Cref{lem:v-drift} for the \emph{true} $v$ (not $v_{\mathrm{rep}}$) gives
\[
  \SwapReg_T(\ell_{v,\tau},\phi)
  \le\sum_i\left(2C\sqrt{\Lambda_TD_i}-2|\gam_i-v|D_i\right)
  +2C\Lambda_T(N+1),
\]
and \Cref{lem:harmonic-drift} applies verbatim for this arbitrary
$v\in[0,1]$ (it was already stated for arbitrary $v\in[0,1]$, not merely
$v\in\cS_N$), proving \eqref{eq:v-family-regret} for every $v,\tau,\phi$
simultaneously on the same event.
\end{proof}

\subsection{Assembling the bound}
\label{subsec:v-assembly}

Because bounded proper losses are positive mixtures of V-shaped basis losses,
the finite-threshold theorem transfers linearly to finite mixtures.

\begin{corollary}[Positive mixtures of V-shaped basis losses from \Cref{thm:finite-v-family}]
\label{cor:v-mixtures}
Suppose
\[
  \ell(p,y)=c+dy+\sum_{j=1}^Jw_j\ell_{v_j,\tau_j}(p,y),
  \qquad w_j\ge0,\quad\sum_jw_j\le W.
\]
On the event of \Cref{thm:finite-v-family},
\[
  \SwapReg_T(\ell,\cH)
  \le W\left(
    C'\sqrt{\Lambda_TT}
    +C'\Lambda_TNH_N
    +C'\Lambda_T(N+1)
  \right).
\]
The same conclusion holds for a positive Borel mixture if second-order
multicalibration is known simultaneously for every basis loss in its support.
\end{corollary}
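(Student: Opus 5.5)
The proof is linearity of swap regret in the loss. Fix a swap rule $\phi$ and write $q_t=\phi(p_t)(x_t)$. Substituting the representation of $\ell$ into $\SwapReg_T(\ell,\phi)=\sum_t[\ell(p_t,y_t)-\ell(q_t,y_t)]$, the affine part $c+dy_t$ depends only on the outcome and not on the prediction, so it cancels term by term between the learner and the comparator. What remains is
\[
  \SwapReg_T(\ell,\phi)=\sum_{j=1}^Jw_j\,\SwapReg_T(\ell_{v_j,\tau_j},\phi).
\]
This identity is purely algebraic and holds on every transcript.

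Next, use the bound from \Cref{thm:finite-v-family}. On the event of that theorem, \eqref{eq:v-family-regret} bounds $\SwapReg_T(\ell_{v_j,\tau_j},\phi)$ by the common quantity
\[
  B=C'\sqrt{\Lambda_TT}+C'\Lambda_TNH_N+C'\Lambda_T(N+1).
\]
This holds simultaneously over all $j$ and over the fixed $\phi$. It requires each pair $(v_j,\tau_j)$ to lie in $\cV$. Alternatively, under the finite-range assumption one can invoke \Cref{cor:v-family-full-range}, which covers every threshold. The bound $B$ is nonnegative. Each weight $w_j$ is nonnegative, so multiplying each per-basis bound by $w_j$ preserves the inequality, and summing gives $\SwapReg_T(\ell,\phi)\le(\sum_jw_j)B\le WB$. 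The right-hand side does not depend on $\phi$, so taking the supremum over $\phi\in\Phi_N$ gives the stated bound on $\SwapReg_T(\ell,\cH)$.

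For a positive Borel mixture $\mu$, replace the sum by an integral. Fix $\phi$. The transcript is finite, so $\SwapReg_T(\ell_{v,\tau_v},\phi)$ is a finite sum of bounded terms, each a measurable function of $v$. Since each $|\ell_{v,\tau}|\le1$ and $\mu$ is finite, Fubini's theorem lets us exchange $\sum_t$ and $\int d\mu$. The affine part cancels as before. The hypothesis gives the pointwise bound $\SwapReg_T(\ell_{v,\tau_v},\phi)\le B$ for every $v$ in the support, so integrating against the nonnegative measure $\mu$ yields at most $\mu([0,1])B\le WB$.

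The only care needed is measurability of $v\mapsto\ell_{v,\tau_v}(p,y)$ when the tie values vary across atoms. This is harmless: atoms are countable, and off the atoms the integrand is a step function of $v$, since a point tie occurs only where $u=v$, a single point. Positivity of the weights is essential, because it is what lets the one-sided regret bounds add up.
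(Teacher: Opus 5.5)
Your proposal is correct and follows the same route as the paper: cancel the outcome-only affine term $c+dy$, fix a swap rule, combine the per-basis bounds of \Cref{thm:finite-v-family} with the nonnegative weights (by summing, or by integrating in the Borel case), and then take the supremum over swap rules. Your extra checks make explicit what the paper's two-line proof leaves implicit: each $(v_j,\tau_j)$ must lie in $\cV$ (or be covered via \Cref{cor:v-family-full-range}), the common bound must be nonnegative for $\sum_j w_j\le W$ to apply, and the Borel case needs measurability and Fubini.
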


\begin{proof}[Proof of \Cref{cor:v-mixtures}]
The term $c+dy$ cancels from regret.  For each fixed swap rule, integrate the
basis-loss regret inequalities against the nonnegative weights and then take
the supremum over swap rules.
\end{proof}

A uniform cover of the \emph{loss} family, rather than of the test class, is
also a useful tool -- we record it here because it will be applied to a
genuinely continuously-parameterized loss family in
\Cref{sec:convex-lipschitz}, unlike the V-shaped family above.

\begin{proposition}[Uniform loss-cover transfer]
\label{prop:loss-cover-transfer}
Suppose $\widetilde\cL$ is a uniform $\rho$-cover of a loss family $\cL$.  If
one predictor satisfies
\[
  \sup_{\widetilde\ell\in\widetilde\cL}
  \SwapAgn_\cD(\widetilde\ell,\cH)\le R,
\]
then
\[
  \sup_{\ell\in\cL}\SwapAgn_\cD(\ell,\cH)\le R+2\rho.
\]
The corresponding online statement has additive error $2T\rho$.
\end{proposition}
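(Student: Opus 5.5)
The plan is to compare each loss $\ell\in\cL$ pointwise with its cover element, at every pair of prediction values that appears in the swap-agnostic risk. Fix $\ell\in\cL$ and pick $\widetilde\ell\in\widetilde\cL$ with $\sup_{u,y}|\ell(u,y)-\widetilde\ell(u,y)|\le\rho$; this is what a uniform $\rho$-cover of a loss family means, namely sup-norm closeness over $[0,1]\times\{0,1\}$. Fix any swap rule $\phi:\Gamma_N\to\cH$. The excess-risk integrand for $\ell$ at a realized triple $(X,Y,p)$ is $\ell(p,Y)-\ell(\phi(p)(X),Y)$. The integrand for $\widetilde\ell$ has the same form. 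The two integrands differ in exactly two evaluations, one at the prediction $p$ and one at the comparator value $\phi(p)(X)\in[0,1]$.

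The key steps are as follows.
\begin{enumerate}[label=(\roman*)]
  \item Bound the integrand pointwise:
  \[
    \ell(p,Y)-\ell(\phi(p)(X),Y)\le\widetilde\ell(p,Y)-\widetilde\ell(\phi(p)(X),Y)+2\rho.
  \]
  This follows from the triangle inequality, applying the cover bound once at $p$ and once at $\phi(p)(X)$.
  \item Take expectation over $(X,Y)\sim\cD$ and $p\sim\hat p(X)$. The predictor $\hat p$ is the same for both losses, so the right side becomes the $\widetilde\ell$-excess risk of $\phi$ plus $2\rho$.
  \item Bound that $\widetilde\ell$-excess risk by $\SwapAgn_\cD(\widetilde\ell,\cH)\le R$.
  \item Take the supremum over $\phi$, then over $\ell\in\cL$.
\end{enumerate}

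For the online statement, the same pointwise bound holds at each round $t$ with $(x_t,p_t,y_t)$ in place of $(X,p,Y)$. Summing over $T$ rounds gives
\[
  \SwapReg_T(\ell,\phi)\le\SwapReg_T(\widetilde\ell,\phi)+2T\rho,
\]
and taking suprema gives additive error $2T\rho$.

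There is no real obstacle here. The only point to check is that the comparator value $\phi(p)(X)$ lies in the domain where the cover is uniform. It does, because hypotheses map into $[0,1]$ and the cover is uniform over all of $[0,1]\times\{0,1\}$. No properness of $\ell$ or $\widetilde\ell$ is needed.
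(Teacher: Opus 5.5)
Your proposal is correct and matches the paper's proof. The paper's argument is the one-line observation that replacing $\ell$ by a uniformly $\rho$-close loss changes the loss difference between the prediction and any comparator prediction by at most $2\rho$ per example, which is your step (i); you then spell out the expectation (or summation over rounds) and the suprema that the paper leaves implicit.
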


\begin{proof}[Proof of \Cref{prop:loss-cover-transfer}]
For any prediction and any comparator prediction, replacing $\ell$ by a
uniformly $\rho$-close loss changes their loss difference by at most
$2\rho$ per example.
\end{proof}

We now combine the finite basis theorem, its full-range extension, and the
positive-mixture transfer to prove the main bounded-loss theorem.

\begin{proof}[Proof of \Cref{thm:bounded-proper-rates}]
By \Cref{cor:v-family-full-range}, running \Cref{thm:finite-v-family} on
$\cV=\cS_N\times\{-1,1\}$ (of size at most $2(N+1+K)$) controls
$\SwapReg_T(\ell_{v,\tau},\cH)$ simultaneously for \emph{every}
$v\in[0,1]$, $\tau\in\{-1,1\}$, and swap comparator, on one event of
probability at least $1-\delta$.  By \Cref{rem:extreme-ties}, every proper
loss bounded in $[-1,1]$ is, up to the prediction-independent affine term
(which cancels from swap regret), a positive Borel mixture with total mass
at most $2$ of basis losses $\ell_{v,\tau}$ with $\tau\in\{-1,1\}$.  Since
second-order multicalibration is known simultaneously for every basis loss
in this support, \Cref{cor:v-mixtures} applies directly, with no
approximation of the threshold continuum needed, giving the displayed
uniform bound after optimizing the grid at the scale
$N\asymp\sqrt{T/\Lambda_T}$ and absorbing the harmonic and $\log K$ factors
into $\widetilde O(\cdot)$.

For the offline statement, run the same localized signed test class in the
historical-predictor conversion.  The distributional square-root bound
\Cref{cor:offline-sqrt-somc}, combined with the same exact-threshold-matching
argument as in the proof of \Cref{cor:v-family-full-range} (the finite-range
assumption places every realized $\gam_i$ and $h_i(x_t)$ in $\cS_N$ exactly
as before) and \Cref{cor:v-mixtures}, gives the same calculation with $T$
replaced by $m$ and regret normalized by $m$.
\end{proof}

\begin{remark}[The finite-range assumption is substantive]
Unlike the approximation arguments used elsewhere in this paper, the
finite-range assumption of \Cref{thm:bounded-proper-rates} is not a
without-loss-of-generality discretization of an arbitrary real-valued
hypothesis class: a bounded proper loss may jump at a threshold, so the
naive fix -- approximating the threshold continuum by a uniform cover of the
V-shaped \emph{losses} -- fails outright (as discussed before
\Cref{cor:v-family-full-range}), and we do not know a substitute argument
that removes the assumption while keeping an exact (rather than merely
approximate) threshold guarantee.  The assumption is natural for quantized
score classes, and it exactly controls the number of threshold patterns
induced by the comparator; removing it, or replacing it with a
distribution- or sample-dependent complexity condition on the induced
threshold classes, is an interesting direction for improvement.
\end{remark}

\subsection{Lower bounds}
\label{subsec:v-lower}

The square-root rates in \Cref{thm:bounded-proper-rates} are also unavoidable
for broad bounded proper loss families.  The lower bounds already hold for one
V-shaped proper loss.

\begin{lemma}[V-shaped online lower bound]
\label{lem:v-online-lb}
Let $1\le d\le T/2$.  There are a domain and a class $\cH$ with
$|\cH|=2^d$ such that every randomized learner admits a deterministic sequence
with
\[
  \E\SwapReg_T(\ell_\vee,\cH)
  \ge\frac1{48}\sqrt{dT}
  =\Omega\!\left(\sqrt{T\log|\cH|}\right),
\]
where
\[
  \ell_\vee(p,y)
  =
  \begin{cases}
    y-\frac12,&p\le\frac12,\\
    \frac12-y,&p>\frac12.
  \end{cases}
\]
\end{lemma}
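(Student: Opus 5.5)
The plan is the classical agnostic-learning lower bound. A swap comparator is at least as strong as a single fixed hypothesis, since $\phi$ may be taken constant. So it suffices to show that even ordinary regret against $\cH$ is $\Omega(\sqrt{dT})$ on a random label sequence, and then fix a good deterministic realization by averaging.

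\emph{Construction.} Take $\cX=\{1,\dots,d\}$ and let $\cH$ consist of all $2^d$ functions $h:\cX\to\{0,1\}$. Under $\ell_\vee$, outputting $0$ incurs loss $y-\tfrac12$ and outputting $1$ incurs loss $\tfrac12-y$. Set $n=\lfloor T/d\rfloor$; since $d\le T/2$ we have $n\ge2$ and $n\ge T/(2d)$. Split the first $dn$ rounds into $d$ blocks $B_1,\dots,B_d$ of length $n$, present context $x_t=j$ on every round of block $B_j$, and use $x_t=1$ on any leftover rounds. The labels are i.i.d.\ $\Ber(1/2)$, chosen obliviously.

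\emph{Learner's loss.} Each $y_t$ is independent of the history and of $p_t$, and $\ell_\vee(p,y)=\pm(y-\tfrac12)$ with the sign a function of $p$ alone. Hence $\E[\ell_\vee(p_t,y_t)]=0$ for every round and every randomized learner, and the learner's total expected loss is exactly $0$.

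\emph{Comparator.} Let $S_j=\sum_{t\in B_j}(y_t-\tfrac12)$. Choose the constant swap rule $\phi\equiv h^\star$, where $h^\star(j)=1$ if $S_j\ge0$ and $h^\star(j)=0$ otherwise. On block $B_j$ this comparator's total loss is $-|S_j|$. On the leftover rounds, compare instead with the best of the two choices there, or simply note that those rounds contribute zero in expectation to the learner. Then
\[
\E\SwapReg_T(\ell_\vee,\cH)\ \ge\ \sum_{j=1}^d\E|S_j|.
\]
By Khintchine's inequality, or the Paley--Zygmund bound $\E|S|\ge(\E S^2)^{3/2}/(\E S^4)^{1/2}$ with $\E S_j^2=n/4$ and $\E S_j^4\le 3n^2/16$, we get $\E|S_j|\ge c\sqrt n$ for an absolute $c$ (for example $c=1/(2\sqrt3)$). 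Therefore
\[
\E\SwapReg_T(\ell_\vee,\cH)\ \ge\ c\,d\sqrt{T/(2d)}\ \ge\ \tfrac1{48}\sqrt{dT}.
\]

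\emph{Derandomization and main obstacle.} The expectation above is over both the labels and the learner's internal randomness. Since the label distribution is oblivious, some fixed label sequence attains at least the average, which yields the deterministic sequence in the statement. The only delicate points are bookkeeping rather than conceptual: justifying that the leftover rounds and the floor in $n$ do not spoil the constant $1/48$, and noting that $\tfrac12$ is a valid prediction tie handled by $\ell_\vee$. Because the learner's loss is exactly zero in expectation, no further argument about the learner is needed.
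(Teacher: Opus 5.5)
Your proposal is correct and follows essentially the paper's argument: contexts $[d]$ with $\cH=\{0,1\}^{[d]}$, fair i.i.d.\ labels so the learner's expected loss is zero, a constant swap rule with loss $-\sum_j|S_j|$, a moment bound on each block sum, and averaging over labels to obtain a deterministic sequence. The differences are cosmetic. The paper uses blocks whose sizes differ by at most one, so it has no leftover rounds. In your version you should also note that the comparator's contribution on the leftover rounds has mean zero, since $h^\star(1)$ depends only on the labels in $B_1$. Finally, the paper applies Paley--Zygmund to $S^2$, whereas you use a fourth-moment bound, which gives a better constant.
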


\begin{proof}[Proof of \Cref{lem:v-online-lb}]
Take $\cX=[d]$ and $\cH=\{0,1\}^{[d]}$.  Partition the rounds into
$d$ blocks $I_j$ of sizes differing by at most one and set $x_t=j$ on
$I_j$.  Draw independent fair labels.  For any predictable learner
prediction, the conditional expected V-shaped loss is zero.  Let
\[
  S_j=\sum_{t\in I_j}(2Y_t-1).
\]
The cumulative losses of predictions $0$ and $1$ on block $j$ are
$S_j/2$ and $-S_j/2$, respectively.  Hence the best hypothesis has loss
$-\frac12\sum_j|S_j|$, and swap regret dominates external regret.  For a
length-$n$ Rademacher sum $S$, Paley--Zygmund applied to $S^2$ gives
$\E|S|\ge\sqrt n/(12\sqrt2)$.  Since every block has size at least
$T/(2d)$,
\[
  \E\SwapReg_T(\ell_\vee,\cH)
  \ge\frac12\sum_j\E|S_j|
  \ge\frac1{48}\sqrt{dT}.
\]
Averaging over labels gives a deterministic realization with the same expected
regret over the learner's randomness.
\end{proof}

\begin{lemma}[V-shaped offline lower bound]
\label{lem:v-offline-lb}
For every $1\le d\le m$ there are a domain, a class $\cH$ with
$|\cH|=2^d$, and a family of distributions such that every possibly
randomized learner based on $m$ i.i.d. samples satisfies
\[
  \sup_\cD\E\SwapAgn_\cD(\ell_\vee,\cH)
  \ge c\sqrt{\frac d m}
  =\Omega\!\left(\sqrt{\frac{\log|\cH|}{m}}\right).
\]
\end{lemma}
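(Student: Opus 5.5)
We will prove \Cref{lem:v-offline-lb} by a standard Assouad/Le Cam construction, mirroring the online lower bound of \Cref{lem:v-online-lb}. Take $\cX=[d]$, $\cH=\{0,1\}^{[d]}$, and for each sign vector $\sigma\in\{-1,1\}^d$ let $\cD_\sigma$ draw $X$ uniformly from $[d]$ and then $Y\sim\Ber(\tfrac12+\sigma_X\eps)$, with $\eps=c_1\sqrt{d/m}\le\tfrac14$ (which is valid since $d\le m$ and $c_1$ is small). Under $\cD_\sigma$ the Bayes-optimal action on context $j$ is to predict on the side of $\tfrac12$ indicated by $\sigma_j$. The conditional expected V-shaped loss at context $j$ is $-\sigma_j\eps$ for a prediction $>\tfrac12$ and $+\sigma_j\eps$ for a prediction $\le\tfrac12$. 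The key steps, in order, are as follows.

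First, we lower bound swap-agnostic excess risk by ordinary agnostic excess risk. We use the constant swap rule $\phi\equiv h_\sigma$, where $h_\sigma(j)=\1\{\sigma_j=1\}$. This gives
\[
  \SwapAgn_{\cD_\sigma}(\ell_\vee,\cH)\ \ge\ \frac{2\eps}{d}\sum_{j=1}^d \Pr_{p\sim\hat p(j)}\bigl[\text{$p$ on the wrong side of $\tfrac12$ for $\sigma_j$}\bigr].
\]
This holds because on context $j$ the learner loses $2\eps$ relative to $h_\sigma$ exactly when its prediction falls on the wrong side, and it loses nothing otherwise. The randomness of $\hat p$ only averages these wrong-side probabilities.

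Second, we apply Assouad's lemma over the hypercube $\{-1,1\}^d$. For neighbors $\sigma,\sigma'$ differing in coordinate $j$, the sample distributions $\cD_\sigma^{m}$ and $\cD_{\sigma'}^{m}$ differ only on samples with $X=j$. The KL divergence is therefore at most $m\cdot\frac1d\cdot O(\eps^2)=O(c_1^2)$, which by Pinsker keeps the total variation at most $\tfrac12$ for small $c_1$. Assouad then gives
\[
  \sup_\sigma \E\sum_j \Pr[\text{wrong side on } j]\ \ge\ \frac d2\bigl(1-\TV\bigr)\ \ge\ \frac d4,
\]
and hence $\sup_\sigma\E\SwapAgn\ge \frac{2\eps}{d}\cdot\frac d4=\eps/2=c\sqrt{d/m}$. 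We will also check that $\ell_\vee$ is proper with $\tau$ convention $1$ (threshold $v=\tfrac12$, up to the factor relative to $\ell_{1/2,1}$), so that the construction fits the statement.

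We expect the main obstacle to be the bookkeeping that lets the learner be an arbitrary randomized, possibly improper, predictor over $\Gamma_N$. Its error on coordinate $j$ must be expressed as a measurable function of the sample (the wrong-side probability), so that Assouad's per-coordinate two-point testing bound applies. The natural estimator is $\hat\sigma_j=\mathrm{sign}$ of the majority side of $\hat p(j)$, and the excess-risk bound above dominates $\eps/d$ times the indicator $\1\{\hat\sigma_j\ne\sigma_j\}$. With this reduction in place, the remaining calculations are routine.
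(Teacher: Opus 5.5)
Your proposal is correct and follows essentially the same route as the paper. Both use the hypercube construction on $\cX=[d]$ with $\cH=\{0,1\}^{[d]}$ and biases $\tfrac12\pm\eps$, lower bound swap excess by external excess through the constant rule $h_\sigma$ at cost $2\eps$ per wrong-side context, and apply Assouad's lemma with neighboring KL of order $m\eps^2/d=O(1)$. One small simplification: the soft two-point bound $\E_P[1-w_j]+\E_Q[w_j]\ge1-\TV(P,Q)$, where $w_j$ is the probability of predicting above $\tfrac12$ on context $j$, already handles randomized learners, so the majority-side estimator is unnecessary (and $\ell_\vee$ is exactly $\ell_{1/2,1}$, with no extra factor).
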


\begin{proof}[Proof of \Cref{lem:v-offline-lb}]
Take $\cX=[d]$ and $\cH=\{0,1\}^{[d]}$.  For
$\sigma\in\{-1,+1\}^d$, let $X$ be uniform on $[d]$ and
\[
  Y\mid X=j\sim\Ber\left(\frac12+\alpha\sigma_j\right),
  \qquad
  \alpha=c_0\sqrt{\frac d m},
\]
with $c_0$ small enough that $\alpha\le1/4$.  Under $\ell_\vee$, only
the side of $1/2$ on which the learner predicts matters, and choosing the
wrong side on coordinate $j$ costs conditional excess $2\alpha$.  Swap
excess dominates external excess to the best binary hypothesis.  Assouad's
lemma gives a universal lower bound on the expected fraction of incorrectly
recovered signs because the KL divergence between neighboring parameters is at
most $C m\alpha^2/d=O(1)$.  Therefore the expected excess is at least
$c\alpha$, proving the claim.
\end{proof}

\section{Convex $1$-Lipschitz Proper Losses}
\label{sec:convex-lipschitz}

This section gives the second basis reduction.  The first main statement is
structural: every proper loss with convex $1$-Lipschitz partial losses is a
positive mixture of clipped-ReLU proper losses, up to an affine term depending
only on the outcome \Cref{prop:clipped-decomposition-front}.  The second main
statement is algorithmic: this
decomposition yields simultaneous online swap regret and offline
swap-agnostic learning guarantees for the whole convex Lipschitz family
\Cref{thm:offline-convex-lipschitz}.  The proof proceeds through the primitive
identity \Cref{lem:clipped-primitive}, the BV curvature lemma
\Cref{lem:curvature-bv}, the interval layer-cake lemma
\Cref{lem:interval-layer-cake}, and the finite cover in
\Cref{prop:clip-cover}.

For $0\le a\le b\le1$, define the clipped-ReLU feature
\begin{equation}
  \varphi_{a,b}(z)
  =(z-a)_+-(z-b)_+,
  \label{eq:clipped-relu}
\end{equation}
and the convex potential
\begin{equation}
  F_{a,b}(p)
  =\frac12(p-a)_+^2-\frac12(p-b)_+^2.
  \label{eq:clipped-potential}
\end{equation}
Then
\[
  F_{a,b}'(p)=\varphi_{a,b}(p),
  \qquad
  F_{a,b}''(p)=\1\{a<p<b\}
\]
almost everywhere.  Define the corresponding proper loss
\begin{equation}
  \ell_{a,b}(p,y)
  =-F_{a,b}(p)-(y-p)\varphi_{a,b}(p).
  \label{eq:clipped-loss}
\end{equation}

\begin{proposition}[Clipped-ReLU decomposition of convex Lipschitz proper losses]
\label{prop:clipped-decomposition-front}
For each $0\le a\le b\le1$, the loss $\ell_{a,b}$ is proper.  Moreover, if
$\ell$ is proper and both partial losses are convex and $1$-Lipschitz, then
there are constants $c,d\in\R$ and a finite nonnegative Borel measure
$\nu_\ell$ on $\cI=\{(a,b):0\le a\le b\le1\}$ such that
\begin{equation}
  \ell(p,y)
  =c+dy+\int_{\cI}\ell_{a,b}(p,y)\,d\nu_\ell(a,b),
  \label{eq:clipped-decomposition-front}
\end{equation}
and $\nu_\ell(\cI)\le5$.
\end{proposition}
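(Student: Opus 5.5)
Properness of $\ell_{a,b}$ is immediate from the generated-loss construction already used in the proof of \Cref{lem:proper-measure-representation}. Since $F_{a,b}$ is convex and $\varphi_{a,b}=F_{a,b}'$, the loss $\ell_{a,b}(p,y)=-F_{a,b}(p)-(y-p)F_{a,b}'(p)$ has conditional risk
\[
  L_p(q)=-F_{a,b}(q)-(p-q)F_{a,b}'(q).
\]
Hence
\[
  L_p(q)-L_p(p)=D_{F_{a,b}}(p,q)\ge0,
\]
so $\ell_{a,b}$ is proper.

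For the decomposition, I would work with the negative Bayes risk $F=F_\ell$. By \Cref{lem:quadratic-growth} with $L=1$, $F$ is differentiable with $F'=-\Delta_\ell$, and $F'$ is $2$-Lipschitz and nondecreasing. Thus $F''$ exists a.e. as a density $w\ge0$ with $w\le2$. The canonical formulas $\ell(p,0)=-F(p)-pF'(p)$ and $\ell(p,1)=-F(p)-(1-p)F'(p)$ give partial-loss derivatives $pw(p)$ and $-(1-p)w(p)$. Convexity of the partial losses therefore says that $p\mapsto pw(p)$ and $p\mapsto -(1-p)w(p)$ are nondecreasing, and $1$-Lipschitzness gives $pw\le1$ and $(1-p)w\le1$. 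The key structural consequence is that $w$ has bounded variation on compact subintervals and is controlled near the endpoints. I expect the BV curvature lemma (\Cref{lem:curvature-bv}) to record exactly this: the total variation of $w$ is $O(1)$ in a suitable sense, with $\|w\|_\infty\le2$.

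Next I would represent $w$ as a positive mixture of interval indicators $\1\{a<p<b\}$. This is the interval layer-cake step (\Cref{lem:interval-layer-cake}). For each level $s>0$, write the superlevel set $\{w>s\}$ as a finite or countable union of intervals, and set
\[
  \nu=\int_0^{\infty}\sum_{\text{components }(a,b)\text{ of }\{w>s\}}\delta_{(a,b)}\,ds,
\]
so that $\int\1\{a<p<b\}\,d\nu=w(p)$ a.e. I would then integrate twice. Because $F_{a,b}''=\1\{a<p<b\}$, the function $\int F_{a,b}\,d\nu$ has the same second derivative as $F$. Hence $F-\int F_{a,b}\,d\nu$ is affine, say $\alpha+\beta p$, and the generated loss of an affine potential is $-\alpha-\beta y$, which supplies the $c+dy$ term. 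Linearity of the generating map $F\mapsto -F-(y-p)F'$ then yields \eqref{eq:clipped-decomposition-front}. A Fubini justification is routine once the total mass of $\nu$ is finite.

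The main obstacle is the mass bound $\nu_\ell(\cI)\le5$. The layer-cake mass is $\int_0^\infty\#\{\text{components of }\{w>s\}\}\,ds$, which is bounded by roughly half the total variation of $w$ plus $\sup w$. So I must show that the monotonicity of $pw$ and of $-(1-p)w$, together with $w\le\min(1/p,1/(1-p))$, bounds this quantity.

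My first attempt would be to note that on $[0,1/2]$, $pw$ is nondecreasing, which forces $w$ to have essentially one-signed oscillation there: $w$ can decrease only like $c/p$. Symmetrically, $(1-p)w$ is nonincreasing on $[1/2,1]$. Splitting at $1/2$, the downward variation of $w$ on $[1/2,1]$ is bounded by the variation of $(1-p)w$ (at most $1$) divided by $1-p\ge1/2$, giving at most $2$. The upward pieces are handled similarly, and the sup bound of $2$ also enters. Tracking these constants should give a total in the range of $5$. If the count is off, I would instead bound the number of components of each superlevel set by $2$ or $3$, using the near-unimodality implied by the two monotonicity conditions, and integrate over $s\in[0,2]$.
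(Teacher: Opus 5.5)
Your overall route is the same as the paper's. It uses properness of $\ell_{a,b}$ from the Bregman identity, the a.e.\ curvature density $w=F_\ell''\in[0,2]$ with $pw$ nondecreasing and $(1-p)w$ nonincreasing, the interval layer cake with $\nu(\cI)\le\|w\|_\infty+\tfrac12\TV(w)$, and a double integration whose affine remainder produces $c+dy$. (A small slip: the canonical formula is $\ell(p,0)=-F(p)+pF'(p)$; your derivative $pw$ is the correct one for that sign.)

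The gap is precisely the step you flag as the main obstacle. You never establish a bound on $\TV(w)$, so the constant $5$ is not proved, and both of your sketches fail as written.

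\textbf{Your first attempt} pairs each monotone quantity with the wrong half of $[0,1]$. On $[1/2,1]$ one has $1-p\le1/2$, not $1-p\ge1/2$, so dividing the variation of $(1-p)w$ by $1-p$ gives no bound near $p=1$. On $[0,1/2]$, monotonicity of $pw$ alone cannot bound the variation either. The sawtooth $w(p)=2^{-k}/p$ on $[2^{-k-1},2^{-k})$, $k\ge1$, keeps $pw$ nondecreasing and $w\in[1,2]$, yet has infinite variation.

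\textbf{Your fallback} rests on a false premise. Any smooth $w>0$ with $|(\log w)'|\le1$ satisfies both monotonicity conditions, because $1/p\ge1$ and $1/(1-p)\ge1$. So $w(p)=(1+\eps\sin(Kp))/(1+\eps)$ with $\eps\le1/2$ and $\eps K\le1/2$ is the curvature of a legitimate convex $1$-Lipschitz proper loss. Yet $\{w>1/(1+\eps)\}$ has on the order of $K$ components. There is no near-unimodality; only the integral over levels of the component count is controlled, and only through $\TV(w)$.

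\textbf{The repair} is the paper's \Cref{lem:curvature-bv}: on each half, use the monotone function whose divisor stays away from $0$.
\begin{itemize}
\item On $[0,1/2]$, write $w=v/(1-p)$ with $v=(1-p)w$ nonincreasing and $[0,1]$-valued. Since $1/(1-p)\in[1,2]$ there, the BV product inequality gives $\TV_{[0,1/2]}(w)\le2\TV(v)+\|v\|_\infty\TV(1/(1-p))\le2+1=3$.
\item On $[1/2,1]$, write $w=u/p$ with $u=pw$ nondecreasing, and obtain $3$ again.
\item At $p=1/2$, monotonicity of $u$ gives $w(1/2^-)\le w(1/2^+)$ and monotonicity of $v$ gives the reverse. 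So no jump is added, and $\TV(w)\le6$.
\end{itemize}
The layer-cake bound then yields $\nu_\ell(\cI)\le2+\tfrac12\cdot6=5$. Finite variation is also what ensures, via the coarea formula, that almost every superlevel set is a finite union of intervals. Hence the countable-union case in your layer-cake construction does not threaten finiteness of the mass.
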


The proposition reduces the loss family to a continuum of interval-indexed
primitive losses.  To obtain an offline learner, we control a finite cover of
the corresponding clipped-ReLU event class and then integrate the primitive
swap-agnostic bounds using the positive measure in the decomposition.

\begin{theorem}[Convex Lipschitz proper losses]
\label{thm:offline-convex-lipschitz}
Let $\cH$ be finite and let $\cL_{\mathrm{cvx}}$ denote the class of proper
losses with convex $1$-Lipschitz partial losses.  There are choices of
prediction grid and finite cover for the online algorithm and for the offline
historical-predictor procedure such that, with probability at least
$1-\delta$,
\[
  \sup_{\ell\in\cL_{\mathrm{cvx}}}
  \SwapReg_T(\ell,\cH)
  =\widetilde O\!\left(
    T^{1/3}(\log|\cH|)^{2/3}
    +\log\frac1\delta
  \right)
  \quad\text{online, and}
\]
with probability at least $1-\delta$ over the training sample and training
randomization,
\[
  \sup_{\ell\in\cL_{\mathrm{cvx}}}
  \SwapAgn_\cD(\ell,\cH)
  =\widetilde O\!\left(
    \left(\frac{\log|\cH|}{m}\right)^{2/3}
    +\frac{\log(1/\delta)}{m}
  \right)
  \quad\text{offline.}
\]
\end{theorem}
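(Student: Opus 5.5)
The plan is to reduce everything to the clipped-ReLU primitives via \Cref{prop:clipped-decomposition-front}, prove a single second-order drift inequality for each primitive that has a \emph{uniform} quadratic-growth constant, and then control all primitives at once through a finite cover of the induced test class.

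\textbf{Step 1: primitive inequality.} Fix $(a,b)\in\cI$. The potential $F_{a,b}$ has derivative $\varphi_{a,b}$, which is $1$-Lipschitz. By \Cref{lem:proper-identity} and \Cref{lem:smooth-bregman} with $W=1$, for every $p,q,y$ we get
\[
  \ell_{a,b}(p,y)-\ell_{a,b}(q,y)\le(y-p)(\varphi_{a,b}(q)-\varphi_{a,b}(p))-\tfrac12(\varphi_{a,b}(p)-\varphi_{a,b}(q))^2 .
\]
The sign convention here is $\Delta_{\ell_{a,b}}=-\varphi_{a,b}$. So with the test
\[
  g_{a,b,\phi}(x,p)=\varphi_{a,b}(\phi(p)(x))-\varphi_{a,b}(p)\in[-1,1],
\]
we have $\SwapReg_T(\ell_{a,b},\phi)\le\Bias_T(g_{a,b,\phi})-\tfrac12\Mass_T(g_{a,b,\phi})$. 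The curvature constant does not depend on $(a,b)$, so the Bernstein offset from \Cref{thm:online-somc} cancels the mass term uniformly over the whole primitive family, exactly as in \Cref{thm:single-lipschitz}.

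\textbf{Step 2: finite cover.} Restrict $(a,b)$ to a grid of mesh $\rho$ in $\cI$. Since $|\varphi_{a,b}-\varphi_{a',b'}|_\infty\le|a-a'|+|b-b'|$, the resulting test class is a uniform $2\rho$-cover of $\{g_{a,b,\phi}\}$. Its size is at most $\rho^{-2}|\cH|^{N+1}$. Run \Cref{alg:approachability} on this cover, using the factorization of \Cref{prop:factorization} per grid pair, and choose $\eta$ so that after the cover transfer the mass coefficient is at most $1/2$.

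\textbf{Step 3: cover transfer.} Offline, \Cref{lem:cover-transfer} doubles $c$ and adds $O(\rho)$, which is why $\eta$ must be picked small, e.g.\ $\eta=1/20$ as in \Cref{thm:offline-all-lipschitz}. Online, the analogous transfer is immediate pathwise: $\Bias_T$ changes by at most $2T\rho$, and $\Mass_T(\tilde g)\le2\Mass_T(g)+O(T\rho)$. Alternatively, one can cover the loss family and invoke \Cref{prop:loss-cover-transfer}, since $\ell_{a,b}$ is Lipschitz in $(a,b)$ with constant $O(1)$. Either way every primitive satisfies
\[
  \SwapReg_T(\ell_{a,b},\cH)=O\!\left(\frac{(N+1)\log|\cH|+\log(1/(\rho\delta))+T/N^2}{1}\right)+O(T\rho).
\]
Take $\rho=1/T$ online and $\rho=1/m$ offline; this costs only a logarithm.

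\textbf{Step 4: integrate the primitives.} The bound of Step 3 holds simultaneously for every $(a,b)\in\cI$ and every $\phi$ on a single event. For each fixed $\phi$, integrate it against $\nu_\ell$ using \eqref{eq:clipped-decomposition-front}. The term $c+dy$ cancels from regret, and $\nu_\ell(\cI)\le5$, so the bound transfers with a factor of $5$; then take the supremum over $\phi$, as in \Cref{cor:v-mixtures}. Optimizing $N\asymp(T/\log|\cH|)^{1/3}$ online, respectively $(m/\log|\cH|)^{1/3}$ offline after dividing by $m$ via \Cref{thm:online-to-batch}, gives the stated rates.

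\textbf{Main obstacle.} The main obstacle is the decomposition itself, \Cref{prop:clipped-decomposition-front}. A convex $1$-Lipschitz proper loss has $F_\ell''=w$ with $w\ge0$ and $p\,w(p)\le1$, $(1-p)w(p)\le1$ by \Cref{cor:lipschitz-smooth-equivalence}, so $w\le2$ but is otherwise arbitrary. Convexity of the partial losses gives monotonicity of $p\,w(p)$ and $(1-p)w(p)$, i.e.\ BV control of $w$. One must then write $w$ as a layer-cake of interval indicators $\int\1\{a<p<b\}\,d\nu(a,b)$ with total mass bounded by a constant. The level sets of a BV function need not be intervals, so the plan is to use the monotonicity structure to show each superlevel set $\{w>s\}$ is an interval, or a union of few intervals whose total count integrates to $O(1)$ via total variation. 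This interval layer-cake step is where I expect the real work, and the constant $5$, to come from.
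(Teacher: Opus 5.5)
Your proposal is correct and follows essentially the paper's proof. The paper also takes a $\rho$-net of $\cI$ and uses the $(a,b)$-uniform offset cancellation for the $1$-Lipschitz primitives, though it simply invokes \Cref{thm:finite-lipschitz-family} with $L=1$ rather than your sharper constant $\tfrac12$. It extends from the net to all of $\cI$ via the loss-cover transfer \Cref{prop:loss-cover-transfer}, which is your stated alternative to the test-cover transfer, and then integrates against $\nu_\ell$ with $\nu_\ell(\cI)\le5$.

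The decomposition you flag as the main obstacle is \Cref{prop:clipped-decomposition-front}, which the paper proves separately and which you can simply cite here. Its proof follows your second option, via \Cref{lem:curvature-bv} and the coarea-based \Cref{lem:interval-layer-cake}. Superlevel sets of $w_\ell$ can genuinely have several components, so it is the perimeter bound on the component count that does the work.
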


\begin{lemma}[Clipped-ReLU primitive]
\label{lem:clipped-primitive}
The loss $\ell_{a,b}$ is proper.  For every $p,q\in[0,1]$ and
$y\in\{0,1\}$,
\begin{align}
  \ell_{a,b}(p,y)-\ell_{a,b}(q,y)
  &=(y-p)[\varphi_{a,b}(q)-\varphi_{a,b}(p)]
    -D_{F_{a,b}}(p,q)
  \label{eq:clipped-exact}\\
  &\le(y-p)[\varphi_{a,b}(q)-\varphi_{a,b}(p)]
    -\frac12[\varphi_{a,b}(q)-\varphi_{a,b}(p)]^2.
  \label{eq:clipped-quadratic}
\end{align}
\end{lemma}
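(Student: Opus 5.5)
The plan is to view $\ell_{a,b}$ as the loss generated by the convex potential $F_{a,b}$, to read off the exact identity from \Cref{lem:proper-identity}, and to get the quadratic bound from \Cref{lem:smooth-bregman}.

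\emph{Step 1: properness and the exact identity.} First, $F_{a,b}$ is convex and $C^1$. Its derivative is $\varphi_{a,b}$, which is nondecreasing and $1$-Lipschitz, being a difference of ReLUs that is clipped to $[0,b-a]$. Define $\ell_{a,b}(q,y)=-F_{a,b}(q)-(y-q)\varphi_{a,b}(q)$. Then the conditional risk is
\[
  L_p(q)=-F_{a,b}(q)-(p-q)F_{a,b}'(q),
\]
and so
\[
  L_p(q)-L_p(p)=F_{a,b}(p)-F_{a,b}(q)-(p-q)F_{a,b}'(q)=D_{F_{a,b}}(p,q)\ge0
\]
by convexity. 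This is properness. Next I compute the outcome slope directly:
\[
  \Delta_{\ell_{a,b}}(q)=\ell_{a,b}(q,1)-\ell_{a,b}(q,0)=-\varphi_{a,b}(q).
\]
Substituting this into \eqref{eq:proper-identity} gives the bias coefficient $\Delta(p)-\Delta(q)=\varphi_{a,b}(q)-\varphi_{a,b}(p)$ and the drift term $D_{F_{a,b}}(p,q)$. That is exactly \eqref{eq:clipped-exact}. One can also verify it by direct algebra without invoking the lemma: $\ell(p,y)-\ell(q,y)=-F(p)+F(q)-(y-p)\varphi(p)+(y-q)\varphi(q)$. Now write $(y-q)\varphi(q)=(y-p)\varphi(q)+(p-q)\varphi(q)$ and regroup; this is a one-line check.

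\emph{Step 2: the quadratic bound.} Since $F_{a,b}'=\varphi_{a,b}$ is $1$-Lipschitz, \Cref{lem:smooth-bregman} with $W=1$ gives
\[
  D_{F_{a,b}}(p,q)\ge\tfrac12(\varphi_{a,b}(p)-\varphi_{a,b}(q))^2.
\]
Plugging this into \eqref{eq:clipped-exact} yields \eqref{eq:clipped-quadratic}.

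\emph{Potential obstacle.} The degenerate case $a=b$ needs attention. There $F\equiv0$ and $\varphi\equiv0$, so both sides vanish and the statement is trivially true. The only other check is that \Cref{lem:smooth-bregman} applies on all of $[0,1]$. That holds because $\varphi_{a,b}$ is globally $1$-Lipschitz: its slope is in $\{0,1\}$ almost everywhere. I expect no real difficulty here. The main care point is sign bookkeeping, namely that $\Delta=-\varphi$ produces the coefficient $\varphi(q)-\varphi(p)$ rather than its negative.
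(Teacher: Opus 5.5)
Your proof is correct and follows the paper's argument. The conditional-risk computation gives $L_p(q)-L_p(p)=D_{F_{a,b}}(p,q)\ge0$, which yields properness and the exact identity, and \Cref{lem:smooth-bregman} with $W=1$ then gives the quadratic bound; your check that $\Delta_{\ell_{a,b}}=-\varphi_{a,b}$, via \Cref{lem:proper-identity} or direct regrouping, fills in the pointwise-in-$y$ step that the paper leaves implicit.
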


\begin{proof}[Proof of \Cref{lem:clipped-primitive}]
Under true mean $p$, the expected loss at prediction $q$ is
\[
  -F_{a,b}(q)-(p-q)F_{a,b}'(q).
\]
Subtracting the truthful value $-F_{a,b}(p)$ gives
$D_{F_{a,b}}(p,q)\ge0$, proving properness and
\eqref{eq:clipped-exact}.  Since $F_{a,b}'=\varphi_{a,b}$ is
$1$-Lipschitz, \Cref{lem:smooth-bregman} gives
\[
  D_{F_{a,b}}(p,q)
  \ge\frac12[\varphi_{a,b}(p)-\varphi_{a,b}(q)]^2,
\]
which proves \eqref{eq:clipped-quadratic}.
\end{proof}

We next prove a positive representation theorem.  The key fact is that
convexity of both partial losses forces the curvature density of the negative
Bayes risk to have bounded variation.

\begin{lemma}[Curvature of a convex Lipschitz proper loss]
\label{lem:curvature-bv}
Suppose $\ell$ is proper and both partial losses are convex and
$1$-Lipschitz.  Then $F_\ell$ is continuously differentiable,
$F_\ell'$ is $2$-Lipschitz, and there is a nonnegative function
$w_\ell\in BV([0,1])$ such that
\[
  F_\ell''(p)=w_\ell(p)
\]
almost everywhere and
\begin{equation}
  \|w_\ell\|_\infty\le2,
  \qquad
  \TV(w_\ell)\le6.
  \label{eq:curvature-bv}
\end{equation}
\end{lemma}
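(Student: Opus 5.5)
The first claims come straight from earlier results. By \Cref{lem:quadratic-growth} with $L=1$, $F_\ell$ is differentiable with $F_\ell'=-\Delta_\ell$, and $F_\ell'$ is $2$-Lipschitz. Hence $F_\ell'$ is continuous, so $F_\ell\in C^1$. Being Lipschitz, $F_\ell'$ is absolutely continuous, so $F_\ell''$ exists almost everywhere with $|F_\ell''|\le2$. It is nonnegative because $F_\ell$ is convex. Set $w_\ell=F_\ell''$ almost everywhere; the task is to pick a good representative and bound its total variation.

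To link curvature to the partial losses, I will use the derivative formulas from the proof of \Cref{cor:lipschitz-smooth-equivalence}. For almost every $p$,
\[
  \ell(\cdot,0)'(p)=p\,w_\ell(p),
  \qquad
  \ell(\cdot,1)'(p)=-(1-p)\,w_\ell(p).
\]
Both partial losses are convex and $1$-Lipschitz, so their a.e. derivatives agree a.e. with nondecreasing functions taking values in $[-1,1]$. Let $u_0$ be the right derivative of $\ell(\cdot,0)$ and $u_1$ the right derivative of $\ell(\cdot,1)$. Each is nondecreasing with range in $[-1,1]$, so $\TV(u_0),\TV(u_1)\le2$.

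Almost everywhere, $pw_\ell(p)=u_0(p)$ and $(1-p)w_\ell(p)=-u_1(p)$. Adding these gives $w_\ell=u_0-u_1$ almost everywhere. I take the representative
\[
  w_\ell:=u_0-u_1,
\]
which is of bounded variation with $\TV(w_\ell)\le\TV(u_0)+\TV(u_1)\le4$. That is already within the claimed bound of $6$.

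It remains to check that this representative is nonnegative and bounded by $2$ everywhere, not only almost everywhere. Both properties hold on a dense set. Right-continuous BV functions are determined by their a.e. values through right limits, so both persist everywhere.

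The main obstacle is choosing representatives carefully. The a.e. identities must be turned into a single everywhere-defined BV function, including at the endpoints $0$ and $1$ where one-sided derivatives could misbehave. If the sum trick runs into trouble, I would fall back on two local descriptions: $w=u_0/p$ on $[1/2,1]$ and $w=-u_1/(1-p)$ on $[0,1/2]$. On these pieces the factors $1/p$ and $1/(1-p)$ are bounded by $2$ and have variation at most $1$, so the product rule $\TV(fg)\le\|f\|_\infty\TV(g)+\|g\|_\infty\TV(f)$ applies. Gluing the two halves that way is where the looser constant $6$ would come from.
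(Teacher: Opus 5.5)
Your proof is correct, and its main line differs from the paper's; it is also simpler.

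The paper never adds the two identities. It writes $w_\ell=v/(1-p)$ on $[0,1/2]$ and $w_\ell=u/p$ on $[1/2,1]$. Here $u=pw_\ell$ is nondecreasing and $v=(1-p)w_\ell$ is nonincreasing, both with values in $[0,1]$. It applies the BV product inequality on each half to get variation at most $3$ there. It then checks that the one-sided limits of $w_\ell$ at $1/2$ agree, so gluing adds no jump. This is exactly your fallback, and it is where the constant $6$ comes from.

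Your identity $w_\ell=pw_\ell+(1-p)w_\ell=u_0-u_1$ exhibits $w_\ell$ at once as a nondecreasing function plus a nonincreasing one. Bounded variation, a canonical right-continuous representative, and the everywhere bounds $0\le w_\ell\le2$ then follow together. You need no division, no product rule, and no gluing step.

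It even gives a better constant than you state. Since $\ell(\cdot,0)'=pw_\ell\ge0$ and $\ell(\cdot,1)'=-(1-p)w_\ell\le0$, in fact $u_0\in[0,1]$ and $-u_1\in[0,1]$. Hence $\TV(w_\ell)\le\TV(u_0)+\TV(u_1)\le2$. Fed into the layer-cake bound, this would improve the mass bound $\nu_\ell(\cI)\le5$ in \Cref{prop:clipped-decomposition-front} to $3$.

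The only loose end is the endpoint $p=1$, where the right derivative is undefined. Set $u_0(1)$ and $u_1(1)$ to the left derivatives, equivalently the left limits of the right derivatives. This preserves monotonicity, and there you run your a.e.-to-everywhere limit argument from the left instead of the right.

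The paper's split-in-half route buys essentially nothing extra for this lemma. Away from $1/2$ it uses only one partial loss's convexity on each half, but it needs both at the gluing point anyway.
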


\begin{proof}[Proof of \Cref{lem:curvature-bv}]
By the proof of \Cref{lem:quadratic-growth} with $L=1$, $F_\ell'$ exists and is
$2$-Lipschitz.  Hence it is absolutely continuous, with a derivative
$w_\ell=F_\ell''$ almost everywhere.  Convexity of $F_\ell$ and the
Lipschitz bound give $0\le w_\ell\le2$.

The canonical formulas for a proper loss generated by $F_\ell$ are
\[
  \ell_0(p)=-F_\ell(p)+pF_\ell'(p),
  \qquad
  \ell_1(p)=-F_\ell(p)-(1-p)F_\ell'(p).
\]
Differentiating almost everywhere gives
\[
  \ell_0'(p)=p w_\ell(p),
  \qquad
  \ell_1'(p)=-(1-p)w_\ell(p).
\]
Since $\ell_0$ is convex, $u(p)=pw_\ell(p)$ has a nondecreasing
representative; since $\ell_1$ is convex,
$v(p)=(1-p)w_\ell(p)$ has a nonincreasing representative.  The
$1$-Lipschitz assumption implies $0\le u,v\le1$.

On $[0,1/2]$,
\[
  w_\ell(p)=\frac{v(p)}{1-p}.
\]
The BV product inequality yields
\[
  \TV_{[0,1/2]}(w_\ell)
  \le\left\|\frac1{1-p}\right\|_\infty\TV(v)
    +\|v\|_\infty\TV\!\left(\frac1{1-p}\right)
  \le2+1=3.
\]
On $[1/2,1]$, write $w_\ell(p)=u(p)/p$ to obtain the same bound.  The
one-sided limits at $1/2$ agree: monotonicity of $u$ gives
$w(1/2-)\le w(1/2+)$, whereas monotonicity of $v$ gives the reverse
inequality.  Hence no additional jump occurs and $\TV(w_\ell)\le6$.
\end{proof}

\begin{lemma}[Positive interval layer cake]
\label{lem:interval-layer-cake}
Let $w:[0,1]\to[0,\infty)$ be of bounded variation.  There is a finite
nonnegative Borel measure $\nu$ on
\[
  \cI=\{(a,b):0\le a\le b\le1\}
\]
such that
\begin{equation}
  w(z)=\int_{\cI}\1\{a<z<b\}\,d\nu(a,b)
  \label{eq:interval-layer-cake}
\end{equation}
for almost every $z$, and
\begin{equation}
  \nu(\cI)\le\|w\|_\infty+\frac12\TV(w).
  \label{eq:interval-mass}
\end{equation}
\end{lemma}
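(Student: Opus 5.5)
We prove the positive interval layer cake by writing $w$ as a difference of monotone pieces and slicing each level set into intervals, in the spirit of the layer-cake formula $w(z)=\int_0^\infty\1\{w(z)>s\}\,ds$. Fix a right-continuous representative of $w$; the representation only needs to hold almost everywhere, so changing $w$ on a countable set is harmless. The plain layer-cake formula does not directly give intervals, since a superlevel set $\{w>s\}$ of a BV function can be a union of many intervals. So for each level $s$ we decompose $\{w>s\}$ into its connected components (open intervals, up to endpoints). We then define $\nu$ as the pushforward of Lebesgue measure $ds$ on $[0,\|w\|_\infty]$, weighted by counting measure on these components, to the endpoint pairs $(a,b)\in\cI$. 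By construction, for a.e.\ $z$,
\[
\int_\cI\1\{a<z<b\}\,d\nu=\int_0^\infty\1\{w(z)>s\}\,ds=w(z).
\]
Measurability of the map $s\mapsto$(component endpoints) needs a short argument. I would instead build $\nu$ as a limit of step-function approximations, which avoids that issue, as described next.

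The concrete route I would actually carry out is the step-function route. Approximate $w$ by step functions $w_n$ on dyadic partitions with $\|w_n\|_\infty\le\|w\|_\infty$, $\TV(w_n)\le\TV(w)$, and $w_n\to w$ a.e. (BV functions have at most countably many discontinuities). For a nonnegative step function with values $c_1,\dots,c_k$ on consecutive intervals, the layer-cake slicing is a finite sum of interval indicators. Its total mass is $\int_0^\infty \#\{\text{components of }\{w_n>s\}\}\,ds$. The number of components at level $s$ is at most $1$ plus the number of up-crossings of level $s$ from the left. Integrating over $s$, the total up-crossing contribution equals the total positive variation $\TV^+(w_n)$ plus boundary terms.

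The key estimate is therefore a count of components per level. For each component other than one that touches $0$ on the left, the function rises across $s$ at that component's left end. Summing over levels, these rises contribute at most the total positive variation $P$. Components touching $0$ contribute at most $w(0)\le\|w\|_\infty$. This gives $\nu(\cI)\le\|w\|_\infty+P$.

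That bound is not yet the claimed one, and sharpening it is the main obstacle. To obtain $\|w\|_\infty+\tfrac12\TV(w)$, use $P=\tfrac12(\TV(w)+w(1)-w(0))$, which holds because $P-N=w(1)-w(0)$, where $N$ is the negative variation. Counting instead by left endpoints gives mass $\le w(0)+P$, and this equals
\[
w(0)+\tfrac12\TV+\tfrac12\bigl(w(1)-w(0)\bigr)=\tfrac12\bigl(w(0)+w(1)\bigr)+\tfrac12\TV\le\|w\|_\infty+\tfrac12\TV(w).
\]
The same identity yields the bound with care at the endpoints: one must pick a representative for which $w(0),w(1)$ are one-sided limits, so that $\TV$ is the essential variation. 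Finally, I pass to the limit $n\to\infty$. The measures $\nu_n$ live on the compact set $\cI$ with uniformly bounded mass, so a weakly convergent subsequence exists. I would verify \eqref{eq:interval-layer-cake} in the limit by integrating against test functions in $z$, where $\1\{a<z<b\}$ integrated against a continuous $\chi(z)$ is continuous in $(a,b)$. This identifies the limit densities as $w$ almost everywhere, and the mass bound passes to the weak limit.
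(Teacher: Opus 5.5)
Your proposal is correct. It builds the same measure as the paper: the pushforward of $dt$ on $[0,\|w\|_\infty]$ through the connected components of the superlevel sets $\{w>t\}$. You establish existence and the mass bound by a different route, though.

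The paper works with $w$ directly. It invokes the one-dimensional coarea formula $\TV(w)=\int_0^\infty\operatorname{Per}(E_t)\,dt$, uses that almost every superlevel set is a finite union of intervals, and bounds the number of components by $m(t)\le1+\operatorname{Per}(E_t)/2$.

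You instead carry out the construction only for dyadic step functions. There $\nu_n$ is a finite atomic measure, so no measurability argument or structure theorem for finite-perimeter sets is needed. You count components through their left endpoints (up-crossings), which gives the exact mass $w_n(0)+P_n=\tfrac12(w_n(0)+w_n(1^-))+\tfrac12\TV(w_n)$. You then pass to a weak limit on the compact set $\cI$ and identify the limit through the continuity of $(a,b)\mapsto\int_a^b\chi(z)\,dz$.

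The paper's argument is shorter, but it relies on geometric measure theory. It also leaves implicit the Borel measurability of $t\mapsto$ (component endpoints), which is needed to define $\nu$. Your argument is fully elementary and handles that issue by construction, at the cost of a compactness step. Your left-endpoint count is also slightly sharper: it gives $\tfrac12(w(0)+w(1))$ where the paper has $\|w\|_\infty$.
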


\begin{proof}[Proof of \Cref{lem:interval-layer-cake}]
For $t\ge0$, let $E_t=\{z:w(z)>t\}$.  For almost every $t$, the
one-dimensional finite-perimeter set $E_t$ is, up to a null set, a finite
union of disjoint intervals
\[
  E_t=\bigcup_{j=1}^{m(t)}(a_{t,j},b_{t,j}).
\]
The one-dimensional coarea formula gives
\[
  \TV(w)=\int_0^\infty\operatorname{Per}(E_t)\,dt.
\]
A subset of $[0,1]$ with $m$ interval components satisfies
$m\le1+\operatorname{Per}(E_t)/2$.  Define $\nu$ by
\[
  \int_{\cI}\Psi(a,b)\,d\nu(a,b)
  =\int_0^{\|w\|_\infty}
    \sum_{j=1}^{m(t)}\Psi(a_{t,j},b_{t,j})\,dt
\]
for every nonnegative Borel $\Psi$.  The scalar layer-cake identity yields
\[
  \int_{\cI}\1\{a<z<b\}\,d\nu(a,b)
  =\int_0^{\|w\|_\infty}\1\{w(z)>t\}\,dt=w(z)
\]
for almost every $z$.  Finally,
\begin{align*}
  \nu(\cI)
  &=\int_0^{\|w\|_\infty}m(t)\,dt\\
  &\le\|w\|_\infty
    +\frac12\int_0^\infty\operatorname{Per}(E_t)\,dt,
\end{align*}
which is \eqref{eq:interval-mass}.
\end{proof}

\begin{proof}[Proof of \Cref{prop:clipped-decomposition-front}]
\Cref{lem:clipped-primitive} proves that each $\ell_{a,b}$ is proper.
Apply \Cref{lem:interval-layer-cake} to the curvature density $w_\ell$ from
\Cref{lem:curvature-bv}.  There is a nonnegative measure $\nu_\ell$ with
\[
  w_\ell(z)=\int_{\cI}\1\{a<z<b\}\,d\nu_\ell(a,b)
\]
and
\[
  \nu_\ell(\cI)
  \le2+\frac12\cdot6=5.
\]
Define
\[
  \widetilde F(p)
  =F_\ell(0)+F_\ell'(0)p
   +\int_{\cI}F_{a,b}(p)\,d\nu_\ell(a,b).
\]
Then $\widetilde F''=w_\ell=F_\ell''$ almost everywhere, and
$\widetilde F(0)=F_\ell(0)$,
$\widetilde F'(0)=F_\ell'(0)$.  Absolute continuity therefore implies
$\widetilde F=F_\ell$.

The proper loss generated by $F_\ell$ is
\[
  \ell(p,y)=-F_\ell(p)-(y-p)F_\ell'(p).
\]
Substituting the representation of $F_\ell$, the affine potential contributes
only an outcome-dependent affine term, while each $F_{a,b}$ contributes
$\ell_{a,b}$.  This proves \eqref{eq:clipped-decomposition-front}.
\end{proof}

\begin{proposition}[Clipped-ReLU cover]
\label{prop:clip-cover}
For all $a,b,a',b'$ and $z\in[0,1]$,
\[
  |\varphi_{a,b}(z)-\varphi_{a',b'}(z)|
  \le|a-a'|+|b-b'|.
\]
Moreover,
\[
  \sup_{p,y}
  |\ell_{a,b}(p,y)-\ell_{a',b'}(p,y)|
  \le2(|a-a'|+|b-b'|).
\]
Consequently, for a universal constant $C_1$, the primitive parameter
triangle $\cI$ admits a $\rho$-net of at most $(C_1/\rho)^2$ points, uniform
both in $\varphi_{a,b}$ and in $\ell_{a,b}$.
\end{proposition}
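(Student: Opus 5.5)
We prove the three claims in order: a pointwise Lipschitz bound on the feature, a bound on the loss difference, and then a grid net.

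\emph{Feature bound.} Write $\varphi_{a,b}(z)=(z-a)_+-(z-b)_+$. The map $a\mapsto(z-a)_+$ is $1$-Lipschitz for every fixed $z$, since $u\mapsto u_+$ is $1$-Lipschitz. Hence
\[
  |(z-a)_+-(z-a')_+|\le|a-a'|,
  \qquad
  |(z-b)_+-(z-b')_+|\le|b-b'|,
\]
and the triangle inequality gives the first claim. I would note in passing that this holds for all real $z$ and does not use $a\le b$.

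\emph{Loss bound.} Use the definition \eqref{eq:clipped-loss}, namely $\ell_{a,b}(p,y)=-F_{a,b}(p)-(y-p)\varphi_{a,b}(p)$. Since $|y-p|\le1$, the slope term changes by at most $|a-a'|+|b-b'|$. For the potential, I would avoid differentiating in $a$ directly. Instead, use $F_{a,b}(p)=\int_0^p\varphi_{a,b}(z)\,dz$, which holds because $F_{a,b}(0)=0$ (as $a,b\ge0$) and $F_{a,b}'=\varphi_{a,b}$. Integrating the pointwise feature bound over an interval of length $p\le1$ then gives
\[
  |F_{a,b}(p)-F_{a',b'}(p)|\le|a-a'|+|b-b'|.
\]
Summing the two contributions yields the factor $2$.

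\emph{Net.} Take the grid $\{(i\rho/2,\,j\rho/2)\}$ restricted to $i\le j$ (plus the endpoint $1$ if needed), so that it lies inside $\cI$. This has at most $(C_1/\rho)^2$ points. Every $(a,b)\in\cI$ has a grid neighbour in $\cI$ with $|a-a'|+|b-b'|\le\rho/2$: round $a$ down and $b$ up, which preserves $a'\le b'$. By the loss bound, this neighbour is within $\rho$ of $\ell_{a,b}$ in sup norm, and also within $\rho$ of $\varphi_{a,b}$. Doing the rounding so that it stays inside $\cI$ is the only point needing a little care. The main, if mild, obstacle is the potential term, and the integral representation handles it cleanly.
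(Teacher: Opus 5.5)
Your proof is correct and follows the paper's argument: $1$-Lipschitzness of $a\mapsto(z-a)_+$, the integral representation $F_{a,b}(p)=\int_0^p\varphi_{a,b}(u)\,du$, and a grid at spacing proportional to $\rho$. One constant slip: rounding $a$ down and $b$ up on a grid of spacing $\rho/2$ only gives $|a-a'|+|b-b'|<\rho$, not $\le\rho/2$, so either use spacing $\rho/4$ or round both coordinates to the nearest grid point, which is a monotone map and therefore already preserves $a'\le b'$.
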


\begin{proof}[Proof of \Cref{prop:clip-cover}]
The map $a\mapsto(z-a)_+$ is $1$-Lipschitz, and similarly for $b$.  Grid
$a$ and $b$ at spacing proportional to $\rho$; the loss-cover bound follows
from the displayed Lipschitz bound on $\varphi_{a,b}$ and the fact that
$F_{a,b}(p)=\int_0^p\varphi_{a,b}(u)\,du$.
\end{proof}

\begin{proof}[Proof of \Cref{thm:offline-convex-lipschitz}]
Take a $\rho$-net of the primitive parameter triangle $\cI$.  Every
$\ell_{a,b}$ is a $1$-Lipschitz proper loss: by \eqref{eq:clipped-loss},
$\ell_{a,b}(p,0)=-F_{a,b}(p)+p\varphi_{a,b}(p)$ and
$\ell_{a,b}(p,1)=-F_{a,b}(p)-(1-p)\varphi_{a,b}(p)$, and differentiating
using $F_{a,b}'=\varphi_{a,b}$ and $\varphi_{a,b}'=\1\{a<z<b\}$
(a.e.)~gives partial-loss derivatives $p\1\{a<p<b\}$ and
$-(1-p)\1\{a<p<b\}$, both bounded by $1$ in absolute value.  By
\Cref{prop:clip-cover}, the net has cardinality at most $(C_1/\rho)^2$.

\emph{Online statement.}  Apply \Cref{thm:finite-lipschitz-family} with
$L=1$ to this finite primitive family, giving, with probability at least
$1-\delta$,
\[
  \sup_{(a,b)\text{ in the net}}\SwapReg_T(\ell_{a,b},\cH)
  =\widetilde O\!\left(T^{1/3}(\log|\cH|)^{2/3}+\log\frac1\delta\right).
\]
The online loss-cover transfer in \Cref{prop:loss-cover-transfer} extends
this bound from the finite net to the full triangle $\cI$ at an additive
cost of $2T\rho$ swap regret.  Choosing $\rho\asymp T^{-2/3}$ makes this cost,
and the resulting growth of the net cardinality, lower order.  Since the
term $c+dy$ cancels from regret, and swap regret is linear in the loss,
integrate the resulting uniform primitive bound against the nonnegative
measure of \Cref{prop:clipped-decomposition-front}: for every swap rule
$\phi$ and every $\ell\in\cL_{\mathrm{cvx}}$ with representing measure
$\nu_\ell$,
\[
  \SwapReg_T(\ell,\phi)
  =\int_{\cI}\SwapReg_T(\ell_{a,b},\phi)\,d\nu_\ell(a,b)
  \le\nu_\ell(\cI)\sup_{(a,b)\in\cI}\SwapReg_T(\ell_{a,b},\cH),
\]
uniformly in $\phi$, and $\nu_\ell(\cI)\le5$.  This proves the online rate.

\emph{Offline statement.}  Apply \Cref{thm:finite-lipschitz-family} with
$L=1$ to the same finite primitive net, giving
\[
  \sup_{(a,b)\text{ in the net}}\SwapAgn_\cD(\ell_{a,b},\cH)
  =\widetilde O\!\left(
    \left(\frac{\log|\cH|}m\right)^{2/3}+\frac{\log(1/\delta)}m
  \right).
\]
The offline loss-cover transfer in \Cref{prop:loss-cover-transfer} extends
this bound from the finite net to the full triangle $\cI$ at an additive
cost of $2\rho$.  Choosing $\rho=m^{-1}$ makes this approximation lower
order.  As above, integrate the resulting uniform primitive bound against
the nonnegative measure of \Cref{prop:clipped-decomposition-front}, whose
total mass is at most five, to obtain the displayed offline rate.
\end{proof}

\section{Beyond Proper Losses}
\label{sec:improper}

We now evaluate the learner after best-response postprocessing but leave the
benchmark unchanged.  This asymmetry is essential: the learner incurs the loss
of a best response to its probability forecast, while the comparator incurs
the original loss of its chosen action.  We give an exact reduction from this
benchmark to ordinary swap-agnostic learning of a single proper loss.
\Cref{lem:canonical-properization} constructs, for any bounded binary loss, a
\emph{canonical properization}: a proper loss with redundant messages whose
truthful message exactly recovers the learner's postprocessed loss and whose
comparator messages dominate every original action.  The price of this exact
reduction is that the induced proper loss uses a message space with
redundant messages, and the relevant comparator complexity is that of the
loss slopes after composition with $\cH$.
\Cref{thm:fixed-arbitrary-loss-transfer} uses
the construction to transfer swap-regret guarantees for the canonical proper
loss back to the original loss, and \Cref{thm:uniform-arbitrary-loss-transfer}
extends the transfer to families of losses.

For a loss on an action space $\cA$ and a class
$\cH\subseteq\{h:\cX\to\cA\}$, we write
\begin{equation}
\begin{aligned}
  \SwapReg_T^{\mathrm{br}}(\ell,\cH)
  :=
  \sup_{\phi:\cP\to\cH}
  \sum_{t=1}^T
  \left[
    \ell(k_\ell(p_t),y_t)
    -
    \ell(\phi(p_t)(x_t),y_t)
  \right],
\end{aligned}
\label{eq:best-response-swap-regret}
\end{equation}
where $\cP$ is the set of prediction values used by the learner.  The
distributional analogue, denoted
$\SwapAgn_\cD^{\mathrm{br}}(\ell,\cH)$, is defined in the same way with
expectation under $(X,Y)\sim\cD$ and with the learner playing
$k_\ell(p(X))$.

\begin{definition}[Proper loss with redundant messages]
\label{def:redundant-proper}
Let $\cR$ be a message space and let
$\lambda:\cR\times\{0,1\}\to[0,1]$.  We say that $\lambda$ is proper with
truth embedding $\tau:[0,1]\to\cR$ if, for every $p\in[0,1]$ and every
$r\in\cR$,
\[
  \E_{Y\sim\Ber(p)}[\lambda(\tau(p),Y)]
  \le
  \E_{Y\sim\Ber(p)}[\lambda(r,Y)].
\]
Ordinary proper binary losses correspond to $\cR=[0,1]$ and $\tau(p)=p$.
\end{definition}

Let $\ell:\cA\times\{0,1\}\to[0,1]$ be a bounded binary loss on an action
space $\cA$.  Write
\begin{align}
  L_p^\ell(a)
  &=(1-p)\ell(a,0)+p\ell(a,1),
  \label{eq:arbitrary-conditional-risk}\\
  \Delta_\ell(a)
  &=\ell(a,1)-\ell(a,0),
  \label{eq:arbitrary-slope}\\
  s_\ell(a)
  &=-\Delta_\ell(a).
  \label{eq:negative-slope}
\end{align}
Assume that a best response is attained for every $p$:
\begin{equation}
  k_\ell(p)\in\argmin_{a\in\cA}L_p^\ell(a).
  \label{eq:attained-best-response}
\end{equation}

\begin{lemma}[Canonical properization of a bounded binary loss]
\label{lem:canonical-properization}
Let $\ell:\cA\times\{0,1\}\to[0,1]$ satisfy
\eqref{eq:attained-best-response}.  Define its negative Bayes envelope
\begin{equation}
  F_\ell(p)=-\min_{a\in\cA}L_p^\ell(a).
  \label{eq:negative-bayes-envelope}
\end{equation}
Then
\begin{equation}
  F_\ell(p)
  =
  \sup_{a\in\cA}\{p\,s_\ell(a)-\ell(a,0)\}.
  \label{eq:envelope-affine-representation}
\end{equation}

Let
\[
  F_\ell^*(s)=\sup_{q\in[0,1]}\{qs-F_\ell(q)\},
  \qquad
  \cS_\ell=\{s_\ell(a):a\in\cA\},
\]
and, for $s\in\cS_\ell$, let
\[
  Q_\ell(s)=\argmax_{q\in[0,1]}\{qs-F_\ell(q)\}.
\]
The set $Q_\ell(s)$ is nonempty.  Define
\begin{equation}
  \cR_\ell=\{(q,s):s\in\cS_\ell,\ q\in Q_\ell(s)\},
  \qquad
  \lambda_\ell((q,s),y)=F_\ell^*(s)-sy.
  \label{eq:canonical-proper-loss}
\end{equation}
For every $p$, put
\[
  s_{\ell,p}=s_\ell(k_\ell(p)),
  \qquad
  \tau_\ell(p)=(p,s_{\ell,p}).
\]
Then $\tau_\ell(p)\in\cR_\ell$ and:
\begin{enumerate}[label=(\roman*)]
  \item $\lambda_\ell$ is a $[0,1]$-valued proper loss with truth embedding
  $\tau_\ell$.
  \item The learner's canonical proper loss agrees exactly with the
  best-response loss:
  \begin{equation}
    \lambda_\ell(\tau_\ell(p),y)=\ell(k_\ell(p),y)
    \qquad\forall p,y.
    \label{eq:learner-exact-properization}
  \end{equation}
  \item For every action $a\in\cA$ and every choice
  $q_\ell(a)\in Q_\ell(s_\ell(a))$, define
  \[
    \rho_\ell(a)=(q_\ell(a),s_\ell(a)).
  \]
  Then
  \begin{equation}
    \lambda_\ell(\rho_\ell(a),y)\le\ell(a,y)
    \qquad\forall a,y.
    \label{eq:comparator-domination}
  \end{equation}
\end{enumerate}
\end{lemma}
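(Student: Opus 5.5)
The argument is convex duality for the one-dimensional envelope $F_\ell$, taken in the order the statement lists the claims.

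\textbf{Affine representation.} Write $L_p^\ell(a)=\ell(a,0)+p\Delta_\ell(a)=\ell(a,0)-p\,s_\ell(a)$. Then
\[
-\min_aL_p^\ell(a)=\sup_a\{p\,s_\ell(a)-\ell(a,0)\},
\]
which is \eqref{eq:envelope-affine-representation}. Consequently $F_\ell$ is a supremum of affine functions, hence convex and lower semicontinuous on $[0,1]$. Boundedness of $\ell$ in $[0,1]$ makes it finite, with values in $[-1,0]$. Attainment \eqref{eq:attained-best-response} gives
\[
F_\ell(p)=p\,s_{\ell,p}-\ell(k_\ell(p),0).
\]
Together with $F_\ell(q)\ge q\,s_{\ell,p}-\ell(k_\ell(p),0)$ for all $q$, this shows that $s_{\ell,p}$ is a subgradient of $F_\ell$ at $p$, in the sense of supporting lines on $[0,1]$.

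\textbf{Conjugate and the sets $Q_\ell(s)$.} For any $s\in\cS_\ell$, say $s=s_\ell(a)$, the function $q\mapsto qs-F_\ell(q)$ is upper semicontinuous on the compact interval, so its maximum is attained and $Q_\ell(s)$ is nonempty. The subgradient fact above is equivalent to $p\in Q_\ell(s_{\ell,p})$, which gives $\tau_\ell(p)\in\cR_\ell$. Fenchel--Young with equality then yields
\[
F_\ell^*(s_{\ell,p})=p\,s_{\ell,p}-F_\ell(p)=\ell(k_\ell(p),0).
\]
Hence
\[
\lambda_\ell(\tau_\ell(p),y)=\ell(k_\ell(p),0)-s_{\ell,p}y=\ell(k_\ell(p),0)+y\Delta_\ell(k_\ell(p))=\ell(k_\ell(p),y),
\]
which proves (ii).

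\textbf{Comparator domination (iii).} Using \eqref{eq:envelope-affine-representation}, $F_\ell(q)\ge q\,s_\ell(a)-\ell(a,0)$ for every $q$, so
\[
F_\ell^*(s_\ell(a))=\sup_q\{q\,s_\ell(a)-F_\ell(q)\}\le\ell(a,0).
\]
Subtracting $s_\ell(a)y$ gives $\lambda_\ell(\rho_\ell(a),y)\le\ell(a,0)+y\Delta_\ell(a)=\ell(a,y)$. The choice of $q_\ell(a)$ plays no role, since $\lambda_\ell$ ignores $q$.

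\textbf{Properness and range (i).} For $r=(q,s)\in\cR_\ell$, the expected loss under $\Ber(p)$ is $F_\ell^*(s)-sp$. Fenchel--Young gives $F_\ell^*(s)\ge ps-F_\ell(p)$, so this is at least $-F_\ell(p)$. By (ii), $-F_\ell(p)=\min_aL^\ell_p(a)=\E\lambda_\ell(\tau_\ell(p),Y)$, which is properness.

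The range claim is where I expect the real care. Every $s\in\cS_\ell$ is some $s_\ell(a)$, so $F_\ell^*(s)\le\ell(a,0)\le1$. For a lower bound, fix $q\in Q_\ell(s)$, which exists by nonemptiness. Then $F_\ell^*(s)=qs-F_\ell(q)$, and $s$ is a supporting slope of $F_\ell$ at $q$. Evaluating that supporting line at the endpoint $0$ gives
\[
F_\ell^*(s)=-(\text{value of the supporting line at }0)\ge -F_\ell(0)\ge0,
\]
since the line lies below $F_\ell$ and $F_\ell\le0$. The same argument at the endpoint $1$ gives $F_\ell^*(s)-s\ge -F_\ell(1)\ge0$. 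Combining these with the upper bounds $\lambda_\ell(\rho,0)=F_\ell^*(s)\le\ell(a,0)\le1$ and $\lambda_\ell(\rho,1)\le\ell(a,1)\le1$ gives values in $[0,1]$. The main point to verify is that restricting messages to $q\in Q_\ell(s)$ is exactly what makes these endpoint comparisons legitimate.
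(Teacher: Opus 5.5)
Your proof is correct and follows the paper's argument essentially step for step: the affine representation, the active best response giving $p\in Q_\ell(s_{\ell,p})$ and Fenchel equality for (ii), Fenchel--Young for properness, and the affine-minorant bound $F_\ell^*(s_\ell(a))\le\ell(a,0)$ for (iii) and the upper range bounds. One small correction to your closing remark: the lower bounds $F_\ell^*(s)\ge-F_\ell(0)$ and $F_\ell^*(s)-s\ge-F_\ell(1)$ follow directly by plugging $q=0$ and $q=1$ into the supremum defining $F_\ell^*$ (as the paper does), so the restriction $q\in Q_\ell(s)$ plays no role there; the part of the message-space restriction that matters is $s\in\cS_\ell$, which is what gives the upper bounds.
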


\begin{proof}[Proof of \Cref{lem:canonical-properization}]
For every action $a$,
\[
  -L_p^\ell(a)
  =-\ell(a,0)-p(\ell(a,1)-\ell(a,0))
  =p\,s_\ell(a)-\ell(a,0).
\]
Taking the supremum over $a$ proves
\eqref{eq:envelope-affine-representation}.  Hence $F_\ell$ is convex.
Since each $s_\ell(a)\in[-1,1]$, the affine functions in
\eqref{eq:envelope-affine-representation} are $1$-Lipschitz, so $F_\ell$ is
continuous on $[0,1]$.  The maximum defining $F_\ell^*(s)$ is therefore
attained.

Fix $p$.  Since $k_\ell(p)$ is a best response, its affine function is active
at $p$:
\[
  F_\ell(p)=p\,s_{\ell,p}-\ell(k_\ell(p),0).
\]
Thus $s_{\ell,p}\in\partial F_\ell(p)$, equivalently
$p\in Q_\ell(s_{\ell,p})$, so $\tau_\ell(p)\in\cR_\ell$.  Fenchel equality
gives
\[
  F_\ell^*(s_{\ell,p})
  =p\,s_{\ell,p}-F_\ell(p)
  =\ell(k_\ell(p),0),
\]
and therefore
\[
  \lambda_\ell(\tau_\ell(p),y)
  =\ell(k_\ell(p),0)+y\Delta_\ell(k_\ell(p))
  =\ell(k_\ell(p),y).
\]

For properness, let $(q,s)\in\cR_\ell$.  Fenchel--Young gives
\[
  \E_{Y\sim\Ber(p)}[\lambda_\ell((q,s),Y)]
  =F_\ell^*(s)-ps\ge -F_\ell(p),
\]
while the truthful message achieves equality by the previous paragraph.

It remains to check boundedness and comparator domination.  If
$s=s_\ell(a)$, then \eqref{eq:envelope-affine-representation} implies
\[
  F_\ell(q)\ge qs-\ell(a,0)\qquad\forall q,
\]
so $F_\ell^*(s)\le\ell(a,0)\le1$.  Similarly,
$F_\ell^*(s)-s\le\ell(a,1)\le1$.  Evaluating the conjugate at
$q=0$ and $q=1$ gives
\[
  F_\ell^*(s)\ge -F_\ell(0)=\min_{a'}\ell(a',0)\ge0,
  \qquad
  F_\ell^*(s)-s\ge -F_\ell(1)=\min_{a'}\ell(a',1)\ge0.
\]
Thus $\lambda_\ell$ is $[0,1]$-valued, and the two upper bounds are exactly
\eqref{eq:comparator-domination}.
\end{proof}

\begin{example}[Canonical properization of the $0$--$1$ loss]
\label{ex:canonical-properization-01}
Let $\cA=\{0,1\}$ and $\ell(a,y)=\1\{a\ne y\}$.  Then $L_p(0)=p$,
$L_p(1)=1-p$, so $k_\ell(p)=\1\{p>1/2\}$ and
$F_\ell(p)=-\min\{p,1-p\}$.  Since $s_\ell(0)=-1$ and $s_\ell(1)=1$,
$\cS_\ell=\{-1,1\}$, and
\[
  F_\ell^*(-1)=0,\quad Q_\ell(-1)=\left[0,\tfrac12\right],
  \qquad
  F_\ell^*(1)=1,\quad Q_\ell(1)=\left[\tfrac12,1\right].
\]
The canonical proper loss is $\lambda_\ell((q,-1),y)=y$ and
$\lambda_\ell((q,1),y)=1-y$; it does not depend on $q$ at all.  This is
exactly the redundancy named in \Cref{def:redundant-proper}: every
$q\in[0,\frac12]$ paired with $s=-1$ represents the same underlying decision
$a=0$, and every $q\in[\frac12,1]$ paired with $s=1$ represents $a=1$.  One
checks directly that $\lambda_\ell(\tau_\ell(p),y)=\ell(k_\ell(p),y)$ for
every $p,y$, and that $\rho_\ell(0)=(q,-1)$ and $\rho_\ell(1)=(q,1)$, for any
$q$ in the corresponding interval, satisfy
$\lambda_\ell(\rho_\ell(a),y)=\ell(a,y)$ with equality, since here $\cA$
already coincides with the extreme actions of the decision problem.
\end{example}

For a class $\cH\subseteq\{h:\cX\to\cA\}$, define its canonical
properization by
\[
  \rho_\ell\circ\cH
  =
  \{x\mapsto\rho_\ell(h(x)):h\in\cH\}.
\]
For a proper loss with truth embedding $\tau$, write
\[
  \SwapReg_T(\lambda,\cG;\tau)
  =
  \sup_{\psi:\cP\to\cG}
  \sum_{t=1}^T
  \left[
    \lambda(\tau(p_t),y_t)
    -
    \lambda(\psi(p_t)(x_t),y_t)
  \right].
\]

\begin{theorem}[Fixed bounded-loss transfer]
\label{thm:fixed-arbitrary-loss-transfer}
Let $\ell:\cA\times\{0,1\}\to[0,1]$ be fixed and suppose that best
responses are attained.  Let $(\lambda_\ell,\tau_\ell,\rho_\ell)$ be its
canonical properization from \Cref{lem:canonical-properization}.  Then, for
every transcript,
\begin{equation}
  \SwapReg_T^{\mathrm{br}}(\ell,\cH)
  \le
  \SwapReg_T(
    \lambda_\ell,
    \rho_\ell\circ\cH;
    \tau_\ell
  ).
  \label{eq:fixed-loss-transfer}
\end{equation}
Consequently, any swap-regret bound for the single proper loss
$\lambda_\ell$ and transformed comparator class $\rho_\ell\circ\cH$ gives
the same bound for the original loss $\ell$, provided the learner plays
$k_\ell(p)$ after forecasting $p$.
\end{theorem}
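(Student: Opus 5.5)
The plan is to prove \eqref{eq:fixed-loss-transfer} pointwise in the swap rule, using \Cref{lem:canonical-properization} (ii) for the learner's term and (iii) for the comparator's term. Fix a transcript $(x_t,p_t,y_t)_{t\le T}$ and an arbitrary swap rule $\phi:\cP\to\cH$ for the best-response benchmark \eqref{eq:best-response-swap-regret}. Fix once and for all a selection $q_\ell(a)\in Q_\ell(s_\ell(a))$ for every action $a$; this is possible because \Cref{lem:canonical-properization} shows $Q_\ell(s)$ is nonempty. With this selection, $\rho_\ell$ becomes a genuine map $\cA\to\cR_\ell$, and $\rho_\ell\circ\cH$ is a well-defined class. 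Now define the induced swap rule $\psi:\cP\to\rho_\ell\circ\cH$ by $\psi(p)=\rho_\ell\circ\phi(p)$, so that $\psi(p)(x)=\rho_\ell(\phi(p)(x))$.

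Next I compare the two sums term by term. For the learner's term, \eqref{eq:learner-exact-properization} gives
\[
\ell(k_\ell(p_t),y_t)=\lambda_\ell(\tau_\ell(p_t),y_t).
\]
For the comparator's term, apply \eqref{eq:comparator-domination} with $a=\phi(p_t)(x_t)$:
\[
\lambda_\ell(\psi(p_t)(x_t),y_t)\le\ell(\phi(p_t)(x_t),y_t).
\]
Subtracting the second relation from the first and summing over $t$ shows that the best-response regret of $\phi$ is at most $\sum_t[\lambda_\ell(\tau_\ell(p_t),y_t)-\lambda_\ell(\psi(p_t)(x_t),y_t)]$. That sum is in turn at most $\SwapReg_T(\lambda_\ell,\rho_\ell\circ\cH;\tau_\ell)$, since $\psi$ is one admissible swap rule into $\rho_\ell\circ\cH$. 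Taking the supremum over $\phi$ then yields \eqref{eq:fixed-loss-transfer}.

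The "consequently" clause follows immediately, because the inequality holds for every transcript. Any (high-probability or expected) bound for $\lambda_\ell$ against $\rho_\ell\circ\cH$ therefore transfers verbatim, as long as the learner plays $k_\ell(p_t)$. The distributional version is obtained the same way by taking expectations.

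The only step needing care is the well-definedness of $\rho_\ell\circ\cH$. The selection $q_\ell(a)$ must be fixed before the comparator class is formed, so that the same $\rho_\ell$ appears on both sides of the inequality. I expect no other obstacle: there is no measurability issue online, and offline one can select measurably if needed or note that only finitely many actions arise when $\cH$ is finite.
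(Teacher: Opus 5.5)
Your proposal is correct and follows the paper's proof: fix $\phi$, pass to the induced rule $\rho_\ell\circ\phi$, apply the exact learner identity (ii) and comparator domination (iii) round by round, then sum and take the supremum over $\phi$. Your note that the selection $q_\ell(a)$ must be fixed before forming $\rho_\ell\circ\cH$ is a harmless clarification that the paper leaves implicit.
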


\begin{proof}[Proof of \Cref{thm:fixed-arbitrary-loss-transfer}]
Fix a swap rule $\phi:\cP\to\cH$ and define
$\widetilde\phi(p)=\rho_\ell\circ\phi(p)$.  For every round,
\Cref{lem:canonical-properization} gives
\[
  \ell(k_\ell(p_t),y_t)-\ell(\phi(p_t)(x_t),y_t)
  \le
  \lambda_\ell(\tau_\ell(p_t),y_t)
  -
  \lambda_\ell(\widetilde\phi(p_t)(x_t),y_t).
\]
Summing and taking the supremum over $\phi$ proves the claim.
\end{proof}

\begin{corollary}[Finite-class arbitrary losses from \Cref{thm:fixed-arbitrary-loss-transfer}]
\label{cor:fixed-arbitrary-finite-class}
If $|\cH|=M<\infty$, then
$|\rho_\ell\circ\cH|\le M$.  Hence any finite-class proper-loss guarantee for
$\lambda_\ell$ transfers with the same logarithmic dependence on $M$ to
$\SwapReg_T^{\mathrm{br}}(\ell,\cH)$.
\end{corollary}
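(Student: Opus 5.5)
The cardinality claim is immediate from the definition of $\rho_\ell\circ\cH$ as an image. We fix, once and for all, a selection $q_\ell(a)\in Q_\ell(s_\ell(a))$ for every action $a$. Nonemptiness of $Q_\ell$ is part of \Cref{lem:canonical-properization}, and the axiom of choice covers infinite $\cA$. With this selection, $\rho_\ell:\cA\to\cR_\ell$ is a single fixed function. Then $h\mapsto\rho_\ell\circ h$ is a surjection from $\cH$ onto $\rho_\ell\circ\cH$, so $|\rho_\ell\circ\cH|\le|\cH|=M$. The inequality can be strict, since distinct hypotheses may induce the same message map, for instance when two actions share a slope and the selection agrees.

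For the transfer, I will invoke \Cref{thm:fixed-arbitrary-loss-transfer}. It holds transcript by transcript, so
\[
  \SwapReg_T^{\mathrm{br}}(\ell,\cH)\le\SwapReg_T(\lambda_\ell,\rho_\ell\circ\cH;\tau_\ell).
\]
Any guarantee of the form ``for a proper loss and finite comparator class $\cG$, swap regret is at most $B(T,\log|\cG|,\delta)$ with $B$ nondecreasing in $\log|\cG|$'' then applies with $\cG=\rho_\ell\circ\cH$. By the cardinality bound, it yields $B(T,\log M,\delta)$ for $\SwapReg_T^{\mathrm{br}}(\ell,\cH)$, provided the learner plays $k_\ell(p_t)$. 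Both sides are defined on the same transcript, so the high-probability events carry over unchanged. The offline analogue follows identically by taking expectations of the pointwise inequality in \Cref{lem:canonical-properization}.

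The main obstacle is checking that the finite-class proper-loss guarantees of the paper really apply to $\lambda_\ell$. These guarantees are stated for ordinary proper losses on $[0,1]$ with truth embedding $p\mapsto p$ and hypotheses valued in $[0,1]$. Here the loss instead uses the redundant message space $\cR_\ell$ with embedding $\tau_\ell$. I would handle this by noting that $\lambda_\ell((q,s),y)=F_\ell^*(s)-sy$ depends only on $s$, and that the machinery (\Cref{lem:proper-identity}, selector tests) uses only the slope $-s$ of the message. Concretely, a comparator message $(q,s)$ with $q\in Q_\ell(s)$ satisfies $s\in\partial F_\ell(q)$. The identity $\lambda_\ell(\tau_\ell(p),y)-\lambda_\ell((q,s),y)=(y-p)(s-s_{\ell,p})-[\text{nonneg.\ drift}]$ then holds with the Fenchel--Young gap as drift. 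So the bias tests are functions of the induced slopes, and the test class is indexed by $\rho_\ell\circ\cH$. The ``same logarithmic dependence'' is therefore read as depending on the log-cardinality of $\rho_\ell\circ\cH$, which is at most $\log M$, and this is what the corollary asserts.
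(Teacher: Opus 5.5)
Your proposal is correct and follows the paper's proof. The paper argues, as you do, that $h\mapsto\rho_\ell\circ h$ is a fixed map, so its image has at most $M$ elements. It then applies \Cref{thm:fixed-arbitrary-loss-transfer}, using implicitly the monotonicity in $\log|\cG|$ that you state explicitly. Your extra check that $\lambda_\ell$ fits the proper-loss machinery, via the Fenchel--Young-gap identity, is correct but not needed, because the corollary only claims that a guarantee already available for $\lambda_\ell$ against $\rho_\ell\circ\cH$ transfers.
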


\begin{proof}[Proof of \Cref{cor:fixed-arbitrary-finite-class}]
The map $h\mapsto\rho_\ell\circ h$ is deterministic, so it cannot increase
cardinality.  Apply \Cref{thm:fixed-arbitrary-loss-transfer}.
\end{proof}

The transformed comparator class is governed by the scalar slope class
\begin{equation}
  \Delta_\ell\circ\cH
  =
  \{x\mapsto\Delta_\ell(h(x)):h\in\cH\}\subseteq[-1,1]^\cX.
  \label{eq:delta-composed-H}
\end{equation}
Indeed, $\lambda_\ell((q,s),y)=F_\ell^*(s)-sy$ depends on the message
$(q,s)\in\cR_\ell$ only through $s$, so the swap regret of $\lambda_\ell$
against $\rho_\ell\circ\cH$ equals the swap regret of the rescaled proper
loss $\bar\lambda_\ell(p,y):=F_\ell^*(2p-1)-(2p-1)y$, $p\in[0,1]$, against the
comparator class $\tfrac12(1-\Delta_\ell\circ\cH)\subseteq[0,1]^\cX$: this
relabels every message by its $s$-coordinate rescaled into $[0,1]$ and
changes no loss value.  This identification is what lets a
bounded-proper-loss guarantee for scalar $[0,1]$-valued comparators, such as
\Cref{thm:bounded-proper-rates}, act directly on the slope class in place of
the redundant message space $\cR_\ell$.

\begin{theorem}[Uniform transfer from arbitrary losses]
\label{thm:uniform-arbitrary-loss-transfer}
Let $\cL$ be a family of $[0,1]$-valued binary losses on the same action
space $\cA$, each with attained best responses.  For each $\ell\in\cL$, let
$(\lambda_\ell,\tau_\ell,\rho_\ell)$ be its canonical properization.  Suppose
a single probability-valued forecasting algorithm produces a transcript for
which, on an event of probability at least $1-\delta$,
\begin{equation}
  \SwapReg_T(
    \lambda_\ell,
    \rho_\ell\circ\cH;
    \tau_\ell
  )
  \le R_T(\ell,\cH,\delta)
  \label{eq:simultaneous-proper-assumption}
\end{equation}
simultaneously for every $\ell\in\cL$.  Then, on the same event,
\begin{equation}
  \sup_{\ell\in\cL}
  \SwapReg_T^{\mathrm{br}}(\ell,\cH)
  \le
  \sup_{\ell\in\cL}R_T(\ell,\cH,\delta).
  \label{eq:simultaneous-arbitrary-conclusion}
\end{equation}

The relevant complexity is therefore the induced slope class
\begin{equation}
  \cS_{\cL,\cH}
  =
  \{x\mapsto-\Delta_\ell(h(x)):\ell\in\cL,\ h\in\cH\},
  \label{eq:family-slope-class}
\end{equation}
viewed, via the reparametrization above, as a single comparator class shared
by every $\ell\in\cL$.
\end{theorem}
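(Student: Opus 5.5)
The plan is to apply \Cref{thm:fixed-arbitrary-loss-transfer} to each loss in the family on the one transcript and the one event. Consider a single probability-valued forecasting algorithm, with predictions $p_t\in\cP$, and suppose that when it is postprocessed by $k_\ell$ it plays $k_\ell(p_t)$ for each $\ell\in\cL$. Then \eqref{eq:fixed-loss-transfer} is a deterministic, transcript-wise inequality:
\[
  \SwapReg_T^{\mathrm{br}}(\ell,\cH)\le\SwapReg_T(\lambda_\ell,\rho_\ell\circ\cH;\tau_\ell).
\]
It holds for every realization of $(x_t,p_t,y_t)$, with no probabilistic content. Because of this, no union bound over $\cL$ is needed. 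On the event where \eqref{eq:simultaneous-proper-assumption} holds simultaneously for every $\ell$, chaining the two inequalities gives
\[
  \SwapReg_T^{\mathrm{br}}(\ell,\cH)\le R_T(\ell,\cH,\delta)\le\sup_{\ell'\in\cL}R_T(\ell',\cH,\delta)
\]
for every $\ell\in\cL$. Taking the supremum over $\ell$ then yields \eqref{eq:simultaneous-arbitrary-conclusion}.

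One point needs care. The per-round domination used inside the proof of \Cref{thm:fixed-arbitrary-loss-transfer} compares $\ell(k_\ell(p_t),y_t)$ with $\lambda_\ell(\tau_\ell(p_t),y_t)$, and $\ell(\phi(p_t)(x_t),y_t)$ with $\lambda_\ell(\rho_\ell(\phi(p_t)(x_t)),y_t)$. Both comparisons come from \eqref{eq:learner-exact-properization} and \eqref{eq:comparator-domination}, which hold for every $p$, every action, and every $y$. So the swap rule $\widetilde\phi=\rho_\ell\circ\phi$ is a legitimate comparator in the supremum defining $\SwapReg_T(\lambda_\ell,\rho_\ell\circ\cH;\tau_\ell)$, for every $\phi:\cP\to\cH$. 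The choice of $q_\ell(a)\in Q_\ell(s_\ell(a))$ can be fixed once per $\ell$, before the transcript is drawn, so $\rho_\ell\circ\cH$ is a fixed class and the event in the hypothesis is well defined.

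For the complexity statement about $\cS_{\cL,\cH}$, I will use the reparametrization recorded just before the theorem. Since $\lambda_\ell((q,s),y)=F_\ell^*(s)-sy$ depends only on $s$, the comparator $\rho_\ell(h(x))$ acts only through $s_\ell(h(x))=-\Delta_\ell(h(x))$. After rescaling $s\mapsto(1+s)/2$, every loss's comparator class sits inside the single $[0,1]$-valued class $\tfrac12(1+\cS_{\cL,\cH})$, with $\bar\lambda_\ell$ as the associated proper loss. So any algorithm that is simultaneously swap-agnostic for a family of proper losses against that shared class supplies the hypothesis \eqref{eq:simultaneous-proper-assumption}. One caveat is that the truthful message $\tau_\ell(p)$ maps to $s_{\ell,p}$ rather than to $p$. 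The learner's term must therefore be read as $\bar\lambda_\ell$ evaluated at the rescaled best-response slope, which is consistent with the identification already stated in the text.

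The main obstacle is only bookkeeping. I need to ensure the event in \eqref{eq:simultaneous-proper-assumption} is simultaneous over $\ell$, so that the supremum over $\cL$ is taken on one fixed event rather than as a union over events. This is exactly what the hypothesis provides, so the argument reduces to the pointwise transfer.
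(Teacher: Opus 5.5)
Your proposal is correct and matches the paper's proof, which is the same one-line argument: apply the fixed-loss transfer (a deterministic, transcript-wise inequality) to each $\ell\in\cL$ on the single simultaneous event, then take the supremum over $\ell$. Your caveat about the reparametrization is also correct. Under $s\mapsto(1+s)/2$ the learner's message becomes $(1+s_{\ell,p})/2$ rather than $p$, so $\bar\lambda_\ell$ need not be proper in the ordinary sense, and the closing complexity sentence is interpretive rather than part of what the inequality establishes.
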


\begin{proof}[Proof of \Cref{thm:uniform-arbitrary-loss-transfer}]
Apply \Cref{thm:fixed-arbitrary-loss-transfer} to each $\ell\in\cL$ and use
the simultaneous event \eqref{eq:simultaneous-proper-assumption}.
\end{proof}

Establishing \eqref{eq:simultaneous-proper-assumption} for a whole family
$\cL$ from a single run requires more than $|\cS_{\cL,\cH}|<\infty$: the
algorithm underlying \Cref{thm:bounded-proper-rates} needs a finite range for
its comparator's predicted values, fixed \emph{before} the online interaction
begins, exactly as in the finite-range hypothesis of that theorem
(\Cref{cor:v-family-full-range}).  A bound on how many distinct behaviors
$\cS_{\cL,\cH}$ can exhibit on a sample observed only after the fact does not
supply such a range, since the representative slope values it counts are
determined by the realized transcript and so cannot be fixed in advance of
it.  We instead assume the range directly.

\begin{corollary}[Finite arbitrary-loss family with a shared finite slope
range]
\label{cor:finite-family-arbitrary-loss}
Suppose $|\cL|=K$, $|\cH|=M$, and suppose further that, fixed before the
online interaction, there is a finite set $\Gamma_\cL\subseteq[-1,1]$ with
$-\Delta_\ell(h(x))\in\Gamma_\cL$ for every $\ell\in\cL$, $h\in\cH$, and
$x\in\cX$ -- the slope-class analogue of the finite-range assumption
$\Gamma_\cH$ in \Cref{thm:bounded-proper-rates}.  Write $J=|\Gamma_\cL|$.
Then $|\cS_{\cL,\cH}|\le KM$, and, on one event of probability at least
$1-\delta$, simultaneously for every $\ell\in\cL$,
\[
  \SwapReg_T(
    \lambda_\ell,
    \rho_\ell\circ\cH;
    \tau_\ell
  )
  =
  \widetilde O\!\left(
    \sqrt{T\left(\log(KM)+\log J+\log\frac1\delta\right)}
  \right).
\]
Consequently, by \Cref{thm:uniform-arbitrary-loss-transfer},
\[
  \sup_{\ell\in\cL}
  \SwapReg_T^{\mathrm{br}}(\ell,\cH)
  =
  \widetilde O\!\left(
    \sqrt{T\left(\log(KM)+\log J+\log\frac1\delta\right)}
  \right).
\]
\end{corollary}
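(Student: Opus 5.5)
The plan is to reduce everything to a single run of the bounded-proper-loss algorithm of \Cref{thm:bounded-proper-rates}, applied in the rescaled coordinates described after \eqref{eq:delta-composed-H}. The cardinality claim is immediate: $\cS_{\cL,\cH}$ is the image of $\cL\times\cH$ under $(\ell,h)\mapsto -\Delta_\ell\circ h$, so $|\cS_{\cL,\cH}|\le KM$.

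\textbf{Step 1: common scalar comparator class.} For each $\ell$, the swap regret of $\lambda_\ell$ against $\rho_\ell\circ\cH$ with truth embedding $\tau_\ell$ equals that of the scalar loss $\bar\lambda_\ell(p,y)=F_\ell^*(2p-1)-(2p-1)y$. The comparator class for $\bar\lambda_\ell$ is $\tfrac12(1+\cS)$, with slopes $s$ mapped to $(1+s)/2\in[0,1]$. The learner's message $\tau_\ell(p_t)$ is mapped to $\sigma_\ell(p_t)=(1+s_{\ell,p_t})/2$.

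I will define one comparator class $\widetilde\cH=\{x\mapsto(1+s(x))/2:s\in\cS_{\cL,\cH}\}$. It has size at most $KM$ and finite range $\widetilde\Gamma=\tfrac12(1+\Gamma_\cL)$ of size $J$, which is fixed in advance. Shrinking from $\rho_\ell\circ\cH$ to this shared class only enlarges the comparator, so any bound against $\widetilde\cH$ bounds the per-$\ell$ quantity.

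\textbf{Step 2: make the learner's side fit the theorem.} \Cref{thm:bounded-proper-rates} compares a \emph{truthful} forecast $p_t$ against comparators under a proper loss evaluated at $p_t$. Here the learner effectively plays $\sigma_\ell(p_t)$ instead of $p_t$. The fix is to expand $\bar\lambda_\ell$ via the measure representation of \Cref{lem:proper-measure-representation} and \Cref{rem:extreme-ties}. Then $\lambda_\ell(\tau_\ell(p),y)$ is shown to equal, up to a term of the form $c+dy$, a positive mixture with total mass $\le2$ of basis losses $\ell_{v,\pm1}$ evaluated at the true forecast $p$.

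This identity holds because both sides are the canonical loss generated by the convex potential $F_\ell$ with a selected subgradient $s_{\ell,p}\in\partial F_\ell(p)$; the tie parameters encode the choice at kinks. Comparators $\widetilde h$ are likewise evaluated as $\ell_{v,\tau}$ at some point whose threshold pattern is determined by their slope value. So each comparator message is a grid point in $\cS_N=\Gamma_N\cup\widetilde\Gamma'$, where $\widetilde\Gamma'\subseteq Q_\ell(s)$ is a finite set of representatives. I expect this matching of comparator messages to threshold patterns to be the main obstacle.

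What I will try first is to choose, for each $s\in\Gamma_\cL$, a representative $q$ in $Q_\ell(s)$. Then $\Th_{v,\tau}(q)$ agrees with the sign pattern of $s$ relative to the slope threshold. This keeps the comparator range at most $J$ per loss and at most $KJ$ overall, fixed before interaction, and costs only an extra $\log K$ inside the logarithms.

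\textbf{Step 3: conclude.} Run the localized algorithm of \Cref{thm:finite-v-family} with $\cV=\cS_N\times\{-1,1\}$, where $\cS_N$ now includes the finite representative range. \Cref{cor:v-family-full-range} then gives control of every threshold simultaneously on one event of probability $1-\delta$. \Cref{cor:v-mixtures} integrates this against each $\ell$'s measure, which has mass at most $2$ since $\lambda_\ell\in[0,1]$.

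Optimizing $N\asymp\sqrt{T/\Lambda_T}$ with $\log|\cG_\cV|=O(\log N+\log(KM)+\log J)$ gives the displayed $\widetilde O(\sqrt{T(\log(KM)+\log J+\log(1/\delta))})$, simultaneously for all $\ell\in\cL$. The final display then follows verbatim from \Cref{thm:uniform-arbitrary-loss-transfer}.
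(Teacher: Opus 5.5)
Your route differs from the paper's. The paper never leaves slope coordinates: it applies \Cref{thm:bounded-proper-rates} as a black box to the rescaled slope class $\cS_{\cL,\cH}$ with range $\tfrac12(1+\Gamma_\cL)$, and then restricts the swap supremum to each $\ell$'s sub-collection. Your Step~2 correctly notes that under this relabeling the learner's message is $\sigma_\ell(p_t)$ rather than $p_t$. Indeed $\bar\lambda_\ell$ is not proper in the usual sense: its Bayes acts at $\pi$ are the points $(1+s)/2$ with $s\in\partial F_\ell(\pi)$. So working in forecast space is a sensible repair. There, \Cref{lem:proper-measure-representation} should be applied to the proper, $[0,1]$-valued best-response loss $p\mapsto\ell(k_\ell(p),y)=-F_\ell(p)-(y-p)s_{\ell,p}$, not to $\bar\lambda_\ell$. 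Likewise, the comparator class you actually use is $\{x\mapsto q_\ell(-\Delta_\ell(h(x)))\}$, not $\widetilde\cH$.

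The genuine gap is the step you flag as the main obstacle. With the ties $\tau_v$ fixed by the learner's subgradient selection, $c+dy+\int\ell_{v,\tau_v}(q,y)\,d\mu_\ell(v)=-F_\ell(q)-(y-q)s_{\ell,q}$. For $q\in Q_\ell(s)$, the comparator loss is $F_\ell^*(s)-sy=-F_\ell(q)-(y-q)s$. Agreement for both $y$ forces $s_{\ell,q}=s$. If $s$ lies strictly inside $\partial F_\ell(q_0)$ at a kink, then $Q_\ell(s)=\{q_0\}$, so no choice of representative helps. Moreover, the split weights of \Cref{rem:extreme-ties} at that atom are set by the learner's tie, so evaluating the mixture at $q_0$ returns the learner's tie, not the one $s$ requires.

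Concretely, take $\cA=\{0,1,\mathrm{abs}\}$ with $\ell(0,y)=y$, $\ell(1,y)=1-y$, $\ell(\mathrm{abs},y)=\tfrac12$, and $k_\ell(\tfrac12)=0$. Then $F_\ell(p)=|p-\tfrac12|-\tfrac12$ and $\mu_\ell=\delta_{1/2}$ with learner tie $1$. The constant comparator $\mathrm{abs}$ has $s=0$, $Q_\ell(0)=\{\tfrac12\}$, and loss $\tfrac12=\tfrac12+(y-\tfrac12)\cdot 0$. Your representation instead charges it $\tfrac12+(y-\tfrac12)\cdot1=y$, the loss of action $0$. On the bucket $\gam_i=\tfrac12$ the actual regret against $\mathrm{abs}$ is $\sum_{t\in I_i}(y_t-\tfrac12)$, which your decomposition sets to zero.

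The missing idea is that \Cref{lem:v-drift} does not need the comparator to share the learner's tie. For any $\theta\in[-1,1]$, $g=\tfrac12(\Th_{v,\tau}(p)-\theta)$ is $\ge0$ when $p<v$ and $\le0$ when $p>v$. Hence $\ell_{v,\tau}(p,y)-(y-v)\theta=2(y-p)g+2(p-v)g\le 2(y-p)g-2|p-v|g^2$ still holds. Write $F_\ell(p)=\alpha+\beta p+\int|p-v|\,d\mu_\ell(v)$ and split $s\in\partial F_\ell(q)$ along this representation. This gives $F_\ell^*(s)-sy=-\alpha-\beta y+\int(y-v)\theta_v\,d\mu_\ell(v)$, where $\theta_v=\Th_{v,\tau}(q)$ for $v\ne q$ (any $\tau$) and $\theta_q\in[-1,1]$ is determined by $s$.

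So let each comparator (built from $\ell$ and $h$, with $s=-\Delta_\ell(h(x))$) carry the pair $(q(x),\kappa(x))=(q_\ell(s),\theta_{q_\ell(s)})$. Then add to $\cG_\cV$ the tests $\1\{p=\gam_i\}\tfrac12\bigl(\Th_{v,\tau}(\gam_i)-\Th_{v,\kappa(x)}(q(x))\bigr)$ for $v\in\cS_N$ and $\tau\in\{-1,1\}$. For $v\notin\cS_N$ no comparator sits at $v$, so its tie is irrelevant and the matching in \Cref{cor:v-family-full-range} (tie $-1$ on both sides) is unchanged. This at most doubles $|\cG_\cV|$. After that, \Cref{lem:harmonic-drift} and \Cref{cor:v-mixtures} finish exactly as you describe and give the stated rate.
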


\begin{proof}[Proof of \Cref{cor:finite-family-arbitrary-loss}]
The map $(\ell,h)\mapsto-\Delta_\ell(h(\cdot))$ is deterministic on the $KM$
pairs, so $|\cS_{\cL,\cH}|\le KM$.  By assumption, every element of
$\cS_{\cL,\cH}$, viewed after the reparametrization above as a $[0,1]$-valued
comparator, takes values in the rescaled finite set
$\tfrac12(1+\Gamma_\cL)\subseteq[0,1]$ of size $J$.  Running
\Cref{thm:bounded-proper-rates} with comparator class $\cS_{\cL,\cH}$
(of size at most $KM$, in place of $\cH$) and finite range
$\tfrac12(1+\Gamma_\cL)$ (of size $J$, in place of $\Gamma_\cH$) therefore
gives, on one event of probability at least $1-\delta$,
\[
  \sup_{\ell':|\ell'|\le1}
  \SwapReg_T(\ell',\cS_{\cL,\cH})
  =
  \widetilde O\!\left(
    \sqrt{T\left(\log(KM)+\log J+\log\frac1\delta\right)}
  \right).
\]
For each $\ell\in\cL$, restricting the supremum defining
$\SwapReg_T(\bar\lambda_\ell,\cS_{\cL,\cH})$ to swap rules valued in the
sub-collection $\{-\Delta_\ell(h(\cdot)):h\in\cH\}\subseteq\cS_{\cL,\cH}$ can
only decrease it, so the display above bounds
$\SwapReg_T(\lambda_\ell,\rho_\ell\circ\cH;\tau_\ell)$ for every $\ell\in\cL$
simultaneously, on the same event.
\end{proof}

If the action space $\cA$ itself is finite, $\Gamma_\cL$ is automatic: each
$\Delta_\ell$ has domain $\cA$, so $J\le K|\cA|$ with no separate assumption
needed.  This does not make the hypothesis vacuous, only automatic --
\Cref{rem:all-bounded-losses-complexity} below shows that $J$ can already be
as large as $|\cA|$ for a single hypothesis, so finiteness of $\Gamma_\cL$ is
not the same as smallness of $J$.

\begin{remark}[The family of all bounded losses can have maximal complexity]
\label{rem:all-bounded-losses-complexity}
The finite-range assumption $\Gamma_\cL$ in
\Cref{cor:finite-family-arbitrary-loss} cannot be dropped.  For example, let
$\cX=\cA$ be finite and let $\cH=\{h_{\mathrm{id}}\}$ with
$h_{\mathrm{id}}(x)=x$.  For every function $f:\cA\to[-1,1]$, define
\[
  \ell_f(a,0)=\frac{1-f(a)}2,
  \qquad
  \ell_f(a,1)=\frac{1+f(a)}2.
\]
Then $\Delta_{\ell_f}(a)=f(a)$, so
$\{\Delta_{\ell_f}\circ h_{\mathrm{id}}:f:\cA\to[-1,1]\}$ is the class of all
$[-1,1]$-valued functions on $\cX$.  Without a shared finite range for these
slopes, the induced slope class is already unrestricted, and the unrestricted
family of all bounded losses cannot inherit a nontrivial simultaneous rate
depending only on the complexity of $\cH$.
\end{remark}

\section{Discussion and Open Problems}

Several questions are left open by this paper.  The online algorithm here is
deliberately finite-class, and extending its guarantee to infinite classes --
say, through sequential covers, offset sequential Rademacher complexity, or a
second-order online weak learner -- remains open; a set of representatives
chosen only after seeing the transcript is not enough, since the guarantee
must hold uniformly over swap rules chosen after the fact.  The algorithm
also maintains one weight for every bucket, test, sign, and learning rate, so
making it oracle-efficient -- for instance by combining sparse approachability
with a learner that directly optimizes the underlying offset objective,
$\eta\sum_t(y_t-p_t)g_t-4\eta^2\sum_tg_t^2$, since an ordinary first-order
agnostic learner loses the second-order term needed here -- is a natural next
step.  Finally, the proper-loss identity of \Cref{lem:proper-identity} has
natural analogues for multiclass and vector-valued outcomes, through
subgradients of concave Bayes risks, and combining high-dimensional
approachability with these local curvature bounds to get rates that depend
optimally on the outcome dimension is an appealing direction.

In summary, this paper shows that a fixed proper loss's own geometry is
enough to make swap-agnostic learning almost as easy as ordinary agnostic
learning: rewarding a predictor whenever a proposed swap disagrees with it a
lot, rather than penalizing disagreement at a uniform rate, turns the
comparator's own activity into the very quantity that controls it.  An
approachability-based algorithm produces this guarantee online with high
probability, a standard online-to-batch argument carries it over to a
randomized offline predictor, and finite covers then extend the offline
guarantee from a single loss to broad loss families.  This one idea gives
fast rates for Lipschitz and smooth proper losses, extends to bounded and
convex Lipschitz families by writing them as positive combinations of
simpler basis losses, and transfers exactly to postprocessed losses beyond
the proper case through canonical properization.

\end{document}